\newtheorem{coloredtheorem}{Theorem}[section]
\newtheorem{coloreddefinition}{Definition}[section]
\newtheorem{coloredlemma}{Lemma}[section]
\newtheorem{coloredcorollary}{Corollary}[section]
\newcommand\DoToC{%
  \startcontents
  \printcontents{}{2}{\textbf{Contents}\vskip3pt\hrule\vskip5pt}
  \vskip3pt\hrule\vskip5pt
}
\newcommand{\bphi}{\bm{\phi}}
\newcommand{\bpi}{\bm{\pi}}
\newcommand{\bPhi}{\bm{\Phi}}
\newcommand{\tPhi}{\widetilde{\bm{\Phi}}^\star}
\newcommand{\bx}{\bm{x}}
\newcommand{\blamb}{\bm{\lambda}}
\newcommand{\bJ}{\bm{J}}
\newcommand{\bB}{\bm{B}}
\newcommand{\bC}{\bm{C}}
\newcommand{\btheta}{\bm{\theta}}
\newcommand{\bSigma}{\bm{\Sigma}}
\newcommand{\bdelta}{\bm{\delta}}
\newtheorem{remark}{Remark}
\newcommand{\bu}{\bm{u}}
\newcommand{\bW}{\bm{W}}
\newcommand{\bA}{\bm{A}}
\newcommand{\bcosh}{\mathbf{cosh}}
\newcommand{\blog}{\mathbf{log}}
\title{Learning long range dependencies through \\
time reversal symmetry breaking}
\author{%
  Guillaume Pourcel \\
  University of Groningen \\
  \texttt{g.a.pourcel@rug.nl}
  \And
  Maxence Ernoult \\
  Google DeepMind \\
  \texttt{mernoult@google.com}
}
\begin{document}

\maketitle

\begin{abstract}
Deep State Space Models (SSMs) reignite physics-grounded compute paradigms, as RNNs could natively be embodied into dynamical systems. This calls for dedicated learning algorithms obeying to core physical principles,  with efficient techniques to simulate these systems and guide their design. 
We propose \emph{Recurrent Hamiltonian Echo Learning} (RHEL), an algorithm which provably computes loss gradients as finite differences of physical trajectories of non-dissipative, \emph{Hamiltonian systems}. In ML terms, RHEL only requires three ``forward passes'' irrespective of model size, without explicit Jacobian computation, nor incurring any variance in the gradient estimation. Motivated by the physical realization of our algorithm, we first 
introduce RHEL in \emph{continuous time} and demonstrate its formal equivalence with the continuous adjoint state method.
To facilitate the simulation of Hamiltonian systems trained by RHEL, we propose a \emph{discrete-time} version of RHEL which is equivalent to Backpropagation Through Time (BPTT) when applied to a class of recurrent modules which we call \emph{Hamiltonian Recurrent Units} (HRUs).
This setting allows us to demonstrate the scalability of RHEL by generalizing these results to hierarchies of HRUs, which we call \emph{Hamiltonian SSMs} (HSSMs). We apply RHEL to train HSSMs with linear and nonlinear dynamics on a variety of time-series tasks ranging from
 mid-range to long-range classification and regression with sequence length reaching $\sim 50k$. We show that RHEL consistently matches the performance of BPTT across all models and tasks. This work opens new doors for the design of scalable, energy-efficient physical systems endowed with self-learning capabilities for sequence modelling. 
\end{abstract}

\section{Introduction}



The resurgence of Recurrent Neural Networks (RNNs) for sequence modeling \citep{orvieto2023resurrecting}, particularly when integrated with State Space Models (SSMs) \citep{gu2021combining, gu2111efficiently, gupta2022diagonal, smith2022simplified, orvieto2023resurrecting, gu2023mamba}, reopens the debate about the ``hardware lottery''. This raises a critical question with high stakes \citep{hooker2021hardware}: should future hardware development continue to optimize for the long-dominant GPU/TPU-Transformer-backprop paradigm \citep{vaswani2017attention}, or should it explore alternative computational approaches alongside novel training algorithms?

This paper embraces an extreme view in the realm of AI hardware by not distinguishing algorithms from hardware \citep{laydevant2024hardware} and posits that some SSMs could inherently be mapped onto dynamical physical systems and, by designing bespoke temporal credit assignment algorithms, be turned into ``self-learning'' machines \citep{lopez2023self, momeni2024training}. To this end, such algorithms should fulfill at least two requirements:
i) use ``forward passes'' only (\emph{i.e.} no backward passes), 
ii) do not use explicit state-Jacobian (\emph{i.e.} Jacobian of the system's dynamics). 
The vast majority of existing algorithms endowed with these features revolve around forward-mode automatic differentiation (AD) \citep{baydin2018automatic}. 
The standard forward-mode AD algorithm for temporal processing is Recurrent Real-Time Learning \citep{williams1989learning} (RTRL). Due to its cubic memory complexity with respect to the number of neurons, RTRL can only be exactly implemented on architectures of limited size \citep{zucchet2023online} and either requires low-rank approximations \citep{tallec2017unbiased}, or  hardcoded \citep{bellec2020solution} or metalearned \citep{lohoff2025truly} heuristics. 
 \begin{wrapfigure}[25]{r}{0.45\linewidth}
    \begin{center}
    \includegraphics[width=0.49\textwidth]{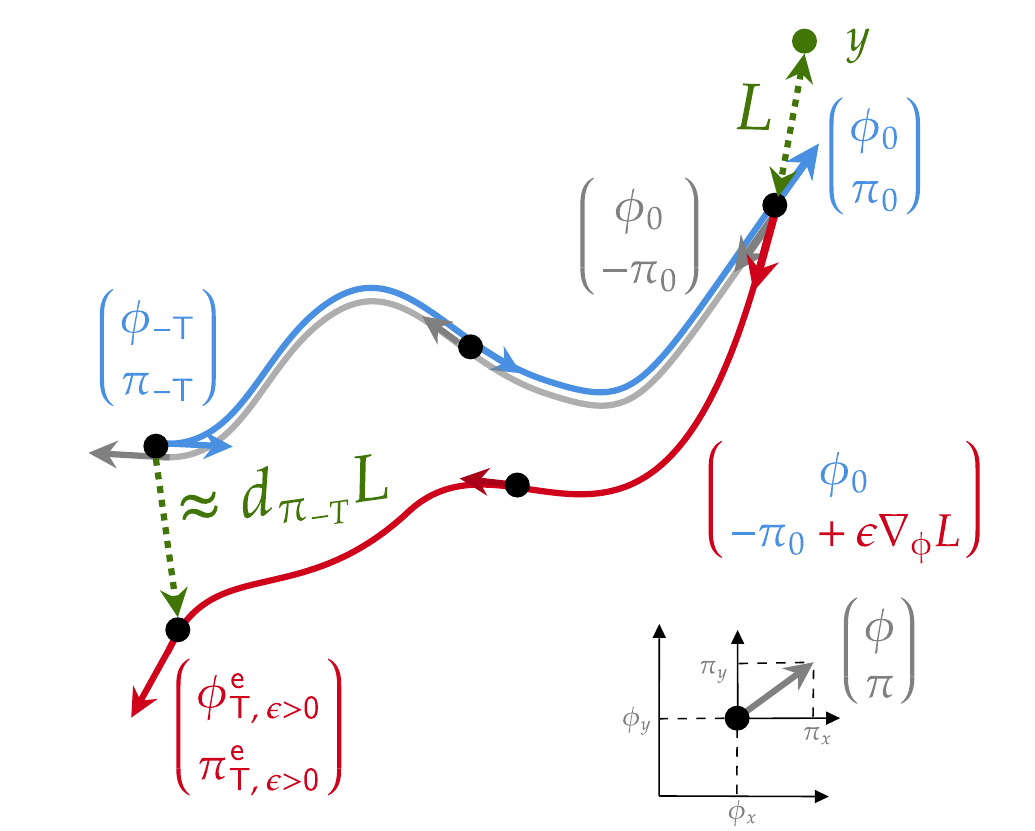}
    \end{center}
    \caption{
    {\bfseries HEB core mechanics \citep{lopez2023self}.} After following a free forward trajectory (blue curve) and undergoing ``rebound'', the neurons exactly travel backward (grey curve). Instead, HEB prescribes nudging the "echo" trajectory (red curve) closer to $y$ before rebound, with the resulting position gap encoding the error gradient with respect to momentum (the left green arrow).
    }
    \label{fig:core-idea}
\end{wrapfigure}
RTRL fails to satisfy criteria ii) 
because it explicitly uses the Jacobian to compute directional derivatives of the loss $L$ along each direction $v_i$ of the canonical basis of parameters $\theta$ as a supplementary computation during the forward pass. However, it can be reformulated as a zeroth-order procedure by approximating the directional derivatives via $\nabla_{\btheta} L \cdot v = \lim_{\epsilon \to 0} \epsilon^{-1}(L(\btheta + \epsilon v) - L(\btheta))$. This recipe theoretically aligns with our three algorithmic requirements but incurs a prohibitive complexity cost by requiring separate forward passes for each perturbation along every parameter direction. Alternative methods attempt to circumvent this by sampling random directions from the parameter space \citep{johnson2013accelerating,kozak2019stochastic, spall1992multivariate, fiete2007model} rather than exhaustively computing gradients along the entire canonical basis. However, these approaches suffer from either excessive variance \citep{ren2022scaling} or high bias \citep{silver2021learning}, limiting their effectiveness to small-scale applications \citep{fournier2023can} or fine-tuning tasks \citep{malladi2023fine}.
To avoid the pitfalls of RTRL and other forward-mode AD proxies, a better approach would be to instead emulate \emph{backward-mode} AD, \emph{forward} in time \citep{ernoult2024cookbook}. Yet, such techniques have only developed to emulate backward-mode \emph{implicit differentiation} on energy-based models on static inputs \citep{scellier2017equilibrium} and have not been extended out of equilibrium to sequential data. 

In this work, we take a radically different approach by emulating \emph{backward-mode} AD \emph{forward in time} taking inspiration from the Hamiltonian Echo Backprop (HEB) algorithm \citep{lopez2023self}, which we illustrate on Fig.~\ref{fig:core-idea} for a single neuron. Consider a system of neurons described by their position $\phi_i$ and momentum $\pi_i$ which do not dissipate energy. This means, for instance for coupled oscillators (Eq.~(\ref{def:hamiltonian-toy-dyna-learning})), that the initial energy provided to this system is preserved and transferred across oscillators from kinetic energy to elastic potential or \emph{vice versa}. Such systems can be modelled by the \emph{Hamiltonian formalism} (section~\ref{subsec:notations-model}). Let us say that we want to learn the initial conditions on this system such that it reaches some target $y$ after some time. HEB proceeds in three steps. For the first step, the system evolves freely, but does not reach $y$. For the second step, HEB perturbs the trajectory for a brief duration to drive the system slightly closer to $y$ per some metric $L$. Then in the last step, the neurons are ``bounced'' backwards, causing the system to evolve backward. If the perturbation was omitted, the system would \emph{exactly} retrace its previous trajectory backward in time, a property which is called \emph{time-reversal symmetry}. However, because of the perturbation, the system does not exactly travel backward to its initial state, and this gap encodes the gradient of $L$ with respect to its initial state.


In spite of its tremendous implications for physical learning, HEB has multiple features which may hinder its broader investigation within the ML community: i) HEB is difficult to compare to standard ML algorithmic baselines as it was derived with dedicated theoretical physics tools and is entangled with fine-grained physics of the systems being trained; ii) HEB theory assumes that model parameters are also dynamical variables and that only their initial state is learnable, which is in stark contrast with standard RNN parametrization; iii) it is unclear how HEB would extend to SSMs, \emph{i.e.} in \emph{discrete time}, on sequential data with losses defined at all timesteps, on \emph{hierarchical} recurrent units; iv) finally, HEB was only evaluated on small static problems (\emph{e.g.} XOR, MNIST) and not proved at larger scale.

Taking inspiration from HEB and using standard ML tools, we propose \emph{Recurrent Hamiltonian Echo Learning} (RHEL) as a simple, general and scalable forward-only proxy of backward-mode AD applying to a broad class of dissipative-free Hamiltonian models with the following key contributions:
\begin{itemize}
    \item With the primary goal in mind to inform the design of self-learning physical systems and to facilitate its comparison to HEB, we first introduce RHEL in \emph{continuous time}. We show that RHEL generalizes HEB to a broader class of models and problems and demonstrate its equivalence, at \emph{every timestep},  with the \emph{continuous adjoint state method} \citep{pontryagin2018mathematical} in the limit of small trajectory perturbations (Theorem~\ref{thm:continuous-rhel}). We numerically highlight this property on a toy physical model (Eq.~(\ref{def:hamiltonian-toy-dyna-learning}), Fig.~\ref{fig:heb-and-gradient-comp-in-time}). 
    \item To efficiently simulate this algorithm, we extend RHEL to \emph{discrete time} and demonstrate its equivalence with \emph{Backpropagation Through Time} (BPTT) (Theorem~\ref{thm:discrete-rhel}) on \emph{Hamiltonian Recurrent Units} (HRUs) -- which are discrete-time, symplectic integrators of \emph{separable} Hamiltonian dynamics. We further generalize this result to learning a \emph{hierarchy} of HRUs, which we call \emph{Hamiltonian State Space Models} (HSSMs, Fig.~\ref{fig:heb-and-gradient-comp-in-time}), and propose a \emph{RHL chaining} procedure accordingly (Theorem~\ref{thm:rhel-chaining}).
    \item With this simulation toolbox in hand, we finally demonstrate the effectiveness of the proposed approach on HSSMs with linear \citep{rusch2024oscillatory} and nonlinear \citep{rusch2021unicornn} recurrent units. 
    We show that: i) gradients estimates produced by RHEL near perfectly match gradients computed by end-to-end AD (Fig.~\ref{fig:static-gradient-comparison}), ii) RHEL remains on par with AD in terms of resulting model performance across classification and regression long-range tasks (Tables~\ref{tab:performance_comparison_classification}--\ref{tab:performance_comparison_regression}).
\end{itemize}

\section{Problem statement}

\subsection{Notations \& model description.}
\label{subsec:notations-model}
\paragraph{Notations.}  Given a differentiable mapping $H: \mathbb{R}^d \to \mathbb{R}$, we denote $\nabla_{\bm{v}_1}H \in \mathbb{R}^{d_{\bm{v}_1}\times 1}$ and $\nabla_{\bm{v}_1, \bm{v}_2}^2 H \in \mathbb{R}^{d_{\bm{v}_1} \times d_{\bm{v}_2}}$ the gradient and Hessian of $H$ respectively. When necessary, we use the Leibniz notation to denote as $\nabla_i H$ the gradient of $H$ with respect to the $i^{\rm th}$ variable. When considering a differentiable mapping $\mathcal{M}: \mathbb{R}^d \to \mathbb{R}^n$, we denote its Jacobian matrix as $\bm{\partial} \mathcal{M} \in \mathbb{R}^{n \times d}$. We also use the notation $d_{\bm{v}_1}$ to denote a \emph{total} derivative with respect to some variable $\bm{v}_1$ which accounts for both direct and \emph{indirect} effects of the variable $\bm{v}_1$ on the function being differentiated.
\paragraph{Continuous Hamiltonian model.} Following \citep{lopez2023self}, we model neurons as a vector $\bPhi \in \mathcal{S} \equiv \mathbb{R}^{d_{\bPhi}}$ comprising a position vector $\bphi \in \mathbb{R}^{\frac{d_{\bPhi}}{2}}$ and a momentum vector $\bpi \in \mathbb{R}^{\frac{d_{\bPhi}}{2}}$ such that $\bPhi := (\bphi^\top, \bpi^\top)^\top$. We define $\bSigma_z\in \mathbb{R}^{d_{\bPhi} \times d_{\bPhi}}$ such that $\bSigma_z \cdot \bPhi = (\bphi, -\bpi)$ and $\bSigma_x\in \mathbb{R}^{d_{\bPhi} \times d_{\bPhi}}$ such that $\bSigma_x \cdot \bPhi = (\bpi, \bphi)$. Denoting $\btheta \in \Theta \equiv \mathbb{R}^{d_{\btheta}}$ and $\bu \in \mathcal{U}\equiv\mathbb{R}^{d_{\bu}}$ the model parameters and inputs, we define the \emph{Hamiltonian} associated to the model as a mapping $H: \mathcal{S} \times \Theta \times \mathcal{U} \to \mathbb{R}$. Taking $-T$ as the origin of time and starting from $\bPhi(-T)=\bx$, we say that $\bPhi$ follows \emph{Hamiltonian dynamics} under a sequence of inputs $t \to \bu(t)$ and with some parameters $\btheta$ if it satifies the ordinary differential equation (ODE):
\begin{equation}
   \forall t \in [-T, 0]: \ \partial_t \bPhi(t) = \bJ \cdot \nabla_{\bPhi} H[\bPhi(t), \btheta, \bu(t)], \quad     \bJ := \begin{bmatrix}
        \bm{0} & \bm{I} \\
        -\bm{I} & \bm{0}
    \end{bmatrix}.
    \label{def:hamiltonian-flow}
\end{equation}
\paragraph{Assumptions.} The most fundamental requirement of the whole proposed approach for the Hamiltonian models at use is \emph{time-reversal symmetry}. Heuristically, if we were recording the dynamics of a conversative system described by Eq.~(\ref{def:hamiltonian-flow}) and playing the resulting recording forward and backward, we could not distinguish them as both are \emph{physically feasible} (Lemma~\ref{lma:time-invariance}). A direct consequence of this fact is that if neurons are let to evolve for some time and then bounced back, \emph{i.e.} their momenta are reversed, they will \emph{exactly} travel back to their initial state (Corollary~\ref{lma:time-reversal}). Namely (see Fig.~\ref{fig:core-idea}):
\begin{equation}
    \left(\bPhi(-T) \underset{\mbox{Eq.~(\ref{def:hamiltonian-flow})}}{\longrightarrow} \bPhi(0)\right) \ \Rightarrow \ \left(\bPhi^\star(0) := \bSigma_z \cdot \bPhi(0) \underset{\mbox{Eq.~(\ref{def:hamiltonian-flow})}}{\longrightarrow} \bPhi^\star(-T)\right)
    \label{def:time-reversal-symmetry}
\end{equation}


\subsection{Learning as constrained optimization}
\paragraph{Problem formulation.} Our goal can be framed as a constrained optimization problem where we aim to find a set of model parameters $\btheta$ such that, under some input sequence $t\to u(t)$, the model trajectory approaches as much as possible some target trajectory, as measured by a \emph{loss function} $L$, under the constraint that the model trajectory is \emph{physically feasible} in the sense of satifying Eq.~(\ref{def:hamiltonian-flow}):
\begin{equation}
    \min_{\btheta} L := \int_{-T}^0 dt \ell[\bPhi(t), t] \quad \mbox{s.t.} \quad \forall t \in [-T, 0]: \ \partial_t \bPhi(t) = \bJ \cdot \nabla_{\bPhi} H[\bPhi(t), \btheta, \bu(t)],
    \label{def:optim-problem-1}
\end{equation}
where $L$ reads as the sum of \emph{cost functions} $t\to\ell(\cdot,  t)$. The most common approach to solve Eq.~(\ref{def:optim-problem-1}) is by gradient descent or variants thereof such that the problem boils down to computing $d_{\btheta}L$. Note that the problem defined by Eq.~(\ref{def:optim-problem-1}) is more general than the one solved by the seminal HEB work \citep{lopez2023self} as: i) the loss function $L$ is defined over the \emph{whole} trajectory, ii) the Hamiltonian is \emph{time-dependent} (through $t\to\bu(t)$) and parametrized by $\btheta$ which is shared across the whole computational graph.
\begin{figure}
    \centering
    \includegraphics[width=\linewidth]{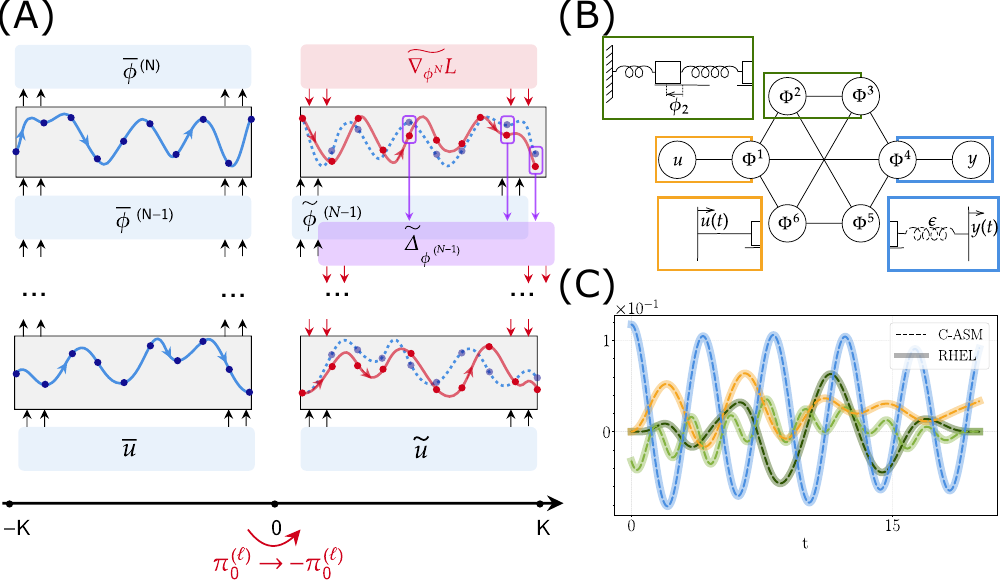}
    \caption{{\bfseries (A): Learning Hamiltonian SSMs by RHEL.} The forward pass of a HSSM (left, Eq.~(\ref{def:hssm-forward})) reads as a composition of \emph{Hamiltonian Recurrent Units} (HRUs, Eq.~(\ref{def:hru-forward})). During the backward pass (right), the momenta of the neurons of the top-most HRU are flipped \emph{and} nudged by the first error signal, yielding a perturbed trajectory (red curve). The contrast with the time-reversed trajectory (which would be obtained \emph{without nudging}, dotted blue curve) yields an error signal that is passed backward (purple) to previous HRUs (Alg.~\ref{alg:discrete-time-rhel-short}). {\bfseries (B)--(C): checking Theorem~\ref{thm:continuous-rhel} on a toy model.} Six coupled harmonic oscillators with learnable parameters under input $\bm{u}$ and target $\bm{y}$ (Eq. (\ref{def:hamiltonian-toy-dyna-learning})). Plots show gradients for a spring (dark green) and mass parameter (light green) comparing C-ASM ($t\to g_{\btheta}(t)$, dotted) and RHEL ($t\to \Delta_{\btheta}(t)$, solid), alongside sensitivities for oscillator positions $\bphi^1$ (orange) and $\bphi^4$ (blue) under both methods ($t\to \lambda(t)$, dotted; $t\to \Delta_{\bPhi}(t)$, solid).}
    \label{fig:heb-and-gradient-comp-in-time}
\end{figure}
\paragraph{Algorithmic baseline.} One standard approach to compute $d_{\btheta}L$ is the \emph{continuous-adjoint state method} (ASM) \citep{pontryagin2018mathematical}, which can be simply regarded as the continuous counterpart of backward-mode AD. Given some \emph{forward} trajectory $\{\bPhi(t)\}_{t\in[-T, 0]}$ spanned by Eq.~(\ref{def:hamiltonian-flow}), this method prescribes solving the following \emph{backward} ODE:
\begin{equation}
\blamb(0)= 0, \quad \partial_t \blamb(t) =  \nabla^2_{1}H[\bPhi(-t), \btheta, \bu(-t)] \cdot \bJ^\top \cdot \blamb(t) + \nabla_{1} \ell[\bPhi(-t), -t],
\label{def:adjoint-ode-1}
\end{equation}
with the gradient of the loss with respect to the initial state of the neurons $\bx$ and the model parameters $\btheta$ given by (Theorem~\ref{app:thm:continuous-adjoint-state-method}):
\begin{equation}
d_{\bx}L=\blamb(T), \quad d_{\btheta}L = \int_{0}^T g_{\btheta}(t)dt, \quad \mbox{with} \quad g_{\btheta}(t) := \nabla^2_{1, 2}H[\bPhi(-t), \btheta, \bu(-t)] \cdot A^\top \cdot \blamb(t)
\label{def:adjoint-ode-2}
\end{equation}
Note that in theory, the forward trajectory $\{\bPhi(t)\}_{t\in[-T, 0]}$ can either be stored or recomputed backwards alongside $\blamb$ yielding $\mathcal{O}(1)$ memory cost -- a property which holds beyond Eq.~(\ref{def:hamiltonian-flow}) specifically \citep{chen2018neural}. However in practice, recomputing variables backward through a \emph{discrete} computational graph is generally inexact and induces bias in the gradient estimation \citep{blondel2024elements}, unless a \emph{reversible} discretization scheme is used. Fortunately, \emph{symplectic} integrators associated with Hamiltonian flows are reversible \citep{ammari2018numerical}, a property which has been leveraged to yield memory savings in neural networks \citep{rusch2021unicornn}. Therefore, the continuous ASM \emph{as well as} our proposed algorithm (Alg.~\ref{alg:discrete-time-rhel-short}) also naturally inherit this memory efficiency as a \emph{model feature} rather than a feature of the training algorithms themselves.

\section{Recurrent Hamiltonian Echo Learning (RHEL)}
\subsection{Definition in continuous time \& equivalence with continuous ASM}
We are now equipped to introduce \emph{Recurrent Hamiltonian Echo Learning} (RHEL). In comparison to the continuous ASM, RHEL does not require solving a separate adjoint ODE akin to Eq.~(\ref{def:adjoint-ode-1}). Instead, RHEL simply prescribes running multiple additional times the forward trajectory Eq.~(\ref{def:hamiltonian-flow}) with three modifications: i) the neurons are first \emph{conjugated}, \emph{i.e.} $\bPhi \to \bPhi^\star$; ii) the inputs $t \to \bu(-t)$ are processed \emph{backwards}; iii) the trajectory of the neurons is slightly nudged, with a strength $\epsilon$, towards direction of decreasing loss values with cost functions $t \to \ell[\cdot, -t]$ processed \emph{backwards} too. More precisely, let us assume a ``free'' forward trajectory Eq.~(\ref{def:hamiltonian-flow}) has been executed between $-T$ and $0$ yielding some neuron state $\bPhi(0)$. Then, we define the \emph{echo} dynamics of the neurons $\bPhi^e$ through:
\begin{equation}
    \bPhi^e(0) = \bPhi^\star(0), \quad \partial_t \bPhi^e(t, \epsilon) = \bJ \nabla_{\bPhi^e} H[\bPhi^e(t, \epsilon), \btheta, \bu(-t)] - \epsilon \bJ \nabla_{\bPhi^e} \ell[\bPhi^e(t, \epsilon), -t]\quad \forall t \in [0, T]
    \label{def:continuous-heb-nudge-phase}
\end{equation}
A crucial observation is that when $\epsilon=0$, the echo trajectory simply matches the forward trajectory in reverse: $\bPhi^e(t, \epsilon=0) = \bPhi^\star(-t)$ for $t\in[0, T]$ (Lemma~\ref{lma:time-reversal}). When $\epsilon \neq 0$, the resulting trajectory difference $\bPhi^e(t, \epsilon) - \bPhi^e(t, \epsilon=0)$ implicitly encodes for the error signals carried by the continuous ASM. We introduce this result more formally in Theorem~\ref{thm:continuous-rhel} with the help of the following quantities capturing the essence of RHEL as a \emph{forward-only}, difference-based gradient estimator:
\begin{align}
\left\{
\begin{array}{ll}
\Delta^{\rm RHEL}_{\btheta}(t, \epsilon) &:=-\frac{1}{2\epsilon}\left(\nabla_{2}H[\bPhi^e(t, \epsilon), \btheta, \bu(-t)] -\nabla_{2}H[\bPhi^e(t, -\epsilon), \btheta, \bu(-t)]\right),
\\
 \Delta^{\rm RHEL}_{\bPhi}(t, \epsilon) &:= \frac{1}{2\epsilon}\bSigma_x \cdot \left(\bPhi^e(t, \epsilon) -  \bPhi^e(t, -\epsilon)\right),
\end{array}
\right.
\label{def:rhel-diff-continuous}
\end{align}
\begin{coloredtheorem}[Informal]
\label{thm:continuous-rhel}
Under mild assumptions on the Hamiltonian function and with $t \to \lambda(t)$ and $t\to g_{\btheta}(t)$ defined by Eq.~(\ref{def:adjoint-ode-1})--(\ref{def:adjoint-ode-2}), the continuous ASM and RHEL are equivalent at all times in the sense that: 
\begin{equation*}
    \forall t \in [0, T], \quad \blamb(t) = \lim_{\epsilon \to 0}\Delta^{\rm RHEL}_{\bPhi}(t, \epsilon),\quad  g_{\btheta}(t) = \lim_{\epsilon \to 0} \Delta^{\rm RHEL}_{\btheta}(t, \epsilon)
\end{equation*}
\end{coloredtheorem}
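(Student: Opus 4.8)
The plan is to reduce the claimed equivalence to a uniqueness argument for two linear ODEs. First I would introduce the first-order sensitivity of the echo trajectory to the nudge strength,
\begin{equation*}
\bdelta(t) := \left.\partial_\epsilon \bPhi^e(t, \epsilon)\right|_{\epsilon=0},
\end{equation*}
which is well defined by smooth dependence of ODE solutions on parameters (given smoothness of $H$ and $\ell$). Since the two estimators in Eq.~(\ref{def:rhel-diff-continuous}) are \emph{central} differences in $\epsilon$, their even-order Taylor terms cancel and the $\epsilon\to0$ limits are exactly first-order $\epsilon$-derivatives; using the chain rule on $\nabla_2 H$ this gives
\begin{equation*}
\lim_{\epsilon\to0}\Delta^{\rm RHEL}_{\bPhi}(t,\epsilon)=\bSigma_x\cdot\bdelta(t),\qquad \lim_{\epsilon\to0}\Delta^{\rm RHEL}_{\btheta}(t,\epsilon)=-\nabla^2_{2,1}H[\bPhi^e(t,0),\btheta,\bu(-t)]\cdot\bdelta(t),
\end{equation*}
so the whole statement becomes a statement about $\bdelta$.

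Next I would derive the ODE governing $\bdelta$. Differentiating Eq.~(\ref{def:continuous-heb-nudge-phase}) in $\epsilon$ at $\epsilon=0$ and using the time-reversal identity $\bPhi^e(t,0)=\bSigma_z\cdot\bPhi(-t)$ (Lemma~\ref{lma:time-reversal}) yields the linear inhomogeneous system
\begin{equation*}
\partial_t\bdelta(t)=\bJ\,\nabla^2_1 H[\bSigma_z\bPhi(-t),\btheta,\bu(-t)]\,\bdelta(t)-\bJ\,\nabla_1\ell[\bSigma_z\bPhi(-t),-t],\qquad \bdelta(0)=0,
\end{equation*}
the zero initial condition following from the $\epsilon$-independence of $\bPhi^e(0)=\bSigma_z\bPhi(0)$. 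The core of the proof is then the change of variables $\tilde\blamb(t):=\bSigma_x\cdot\bdelta(t)$, which I would show satisfies \emph{exactly} the adjoint ODE~(\ref{def:adjoint-ode-1}). This rests on two ingredients: (i) the algebra of the structure matrices, namely $\bSigma_x\bJ=-\bSigma_z$, $\bSigma_z\bSigma_x=\bJ$, $\bSigma_x^2=\bSigma_z^2=\bm{I}$ and $\bJ^\top=-\bJ$; and (ii) the time-reversal symmetry of $H$ (evenness in momentum), which gives the conjugation rules $\nabla^2_1 H[\bSigma_z\bPhi]=\bSigma_z\,\nabla^2_1 H[\bPhi]\,\bSigma_z$ and $\bSigma_z\,\nabla_1\ell[\bSigma_z\bPhi,\cdot]=\nabla_1\ell[\bPhi,\cdot]$ (the latter needing $\ell$ even in momentum, e.g. position-only). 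Substituting $\bdelta=\bSigma_x\tilde\blamb$ and these rules collapses the transformed system into $\partial_t\tilde\blamb=\nabla^2_1 H[\bPhi(-t)]\,\bJ^\top\tilde\blamb+\nabla_1\ell[\bPhi(-t),-t]$ with $\tilde\blamb(0)=0$, identical to $\blamb$; by uniqueness of solutions of linear ODEs, $\tilde\blamb\equiv\blamb$, which is the first identity.

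For the parameter identity I would invert the change of variables, $\bdelta=\bSigma_x\blamb$, and use the companion mixed-Hessian rule $\nabla^2_{2,1}H[\bSigma_z\bPhi]=\nabla^2_{2,1}H[\bPhi]\,\bSigma_z$ together with $\bSigma_z\bSigma_x=\bJ$ and $\bJ^\top=-\bJ$, obtaining
\begin{equation*}
\lim_{\epsilon\to0}\Delta^{\rm RHEL}_{\btheta}(t,\epsilon)=-\nabla^2_{2,1}H[\bPhi(-t)]\,\bSigma_z\bSigma_x\,\blamb(t)=\nabla^2_{2,1}H[\bPhi(-t)]\,\bJ^\top\,\blamb(t)=g_{\btheta}(t),
\end{equation*}
matching Eq.~(\ref{def:adjoint-ode-2}).

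I expect the main obstacle to be the second step -- showing that the echo sensitivity ODE maps onto the adjoint ODE -- rather than the limiting or chain-rule bookkeeping. The delicate part is pinning down exactly which symmetry hypotheses are the intended ``mild assumptions'': evenness of $H$ in momentum simultaneously delivers time-reversal ($\bPhi^e(t,0)=\bSigma_z\bPhi(-t)$) and the Hessian conjugation identities, while a matching evenness/position-only structure of $\ell$ is what makes the inhomogeneous term transform correctly, and the precise placement of $\bSigma_z$, $\bSigma_x$, $\bJ$ and their transposes in each conjugation rule is where sign errors are most likely. A secondary technical point, to be stated rather than belabored, is justifying enough regularity of $t\to\bPhi^e(t,\epsilon)$ in $\epsilon$ to legitimately exchange $\lim_{\epsilon\to0}$ with the time evolution; this follows from standard smooth-dependence theorems for ODEs.
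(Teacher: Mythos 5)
Your proposal is correct and follows essentially the same route as the paper's proof: identify the central-difference limits with $\bSigma_x\cdot\partial_\epsilon\bPhi^e(t,\epsilon)|_{\epsilon=0}$, differentiate the echo ODE at $\epsilon=0$, invoke time-reversal of the unperturbed echo trajectory ($\bPhi^e(t,0)=\bSigma_z\cdot\bPhi(-t)$), and use the Hessian conjugation identities from evenness of $H$ and $\ell$ together with the $\bSigma_x,\bSigma_z,\bJ$ algebra to show that $\bSigma_x\cdot\partial_\epsilon\bPhi^e|_{\epsilon=0}$ obeys the adjoint ODE with zero initial condition, concluding by uniqueness; the parameter identity then follows from the mixed-Hessian conjugation rule exactly as in the paper. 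Your identification of the ``mild assumptions'' (evenness of $H$ and of $\ell$ in the momenta) matches the paper's Definitions of the continuous model and optimization problem.
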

\begin{proof}[Proof sketch] Defining  $\Delta_{\bPhi}^{\rm RHEL}(t):=\lim_{\epsilon \to 0}\Delta_{\bPhi}^{\rm RHEL}(t, \epsilon)$ and noticing that $\Delta_{\bPhi}^{\rm RHEL}(t)=\bSigma_x \cdot \partial_{\epsilon} (\bPhi^e(t, \epsilon))|_{\epsilon=0}$, we can show by differentiating Eq.~(\ref{def:continuous-heb-nudge-phase}) around $\epsilon=0$ and with some algebra that $t\to\Delta_{\bPhi}^{\rm RHEL}(t)$ satisfies the same ODE as $t \to \blamb(t)$ (Eq.~(\ref{def:adjoint-ode-1}) and have same initial conditions, therefore are equal at all times. The other two equalities of Theorem~\ref{thm:continuous-rhel} can then be easily deduced. 
\end{proof}
\paragraph{Example.} We numerically Theorem~\ref{thm:continuous-rhel} on a toy model of six coupled harmonic oscillators $i$ (Fig.~\ref{fig:heb-and-gradient-comp-in-time}) with mass $m_i$ and spring parameters $k_i, k_{ij}$. This system is described by the Hamiltonian $H[\bPhi,\btheta, \bu] = \sum_i\frac{\pi_i^2}{2m_i}+ \frac{1}{2}\sum_{i}k_i \bphi_i^2 + \frac{1}{2}\sum_{i}\sum_{j,j>i}k_{ij} (\bphi_j-\bphi_i)^2$ with $\btheta = \{m_i, k_i, k_{ij}\}_{i, j}$ that leads to the dynamic and associated gradient estimators (see App.~\ref{app:subsec:toymodel}):
\begin{align}
\left \{
\begin{array}{l}
    \forall i \in \llbracket 1, d_{\bPhi} \rrbracket   \qquad \partial_t\phi_i = \pi_i /m_i, \quad
    \partial_t \pi_i = -k_i \phi_i + \sum_{j, j\neq i}k_{ij}(\phi_j-\phi_i) \\
    g_{k_{ij}}(t) = \lim_{\epsilon \to 0} \frac{-1}{2\epsilon}\left(\left(\phi_j^e(t,\epsilon)-\phi_i^e(t,\epsilon)\right)^2- \left(\phi_j^e(t,-\epsilon)-\phi_i^e(t,-\epsilon)\right)^2\right)
\end{array}
\right.
\label{def:hamiltonian-toy-dyna-learning}
\end{align}

\subsection{Extension to discrete time \& equivalence with BPTT}

In this section, we propose a \emph{discrete-time} version of RHEL by first defining a family of discrete models as integrators of the Hamiltonian flow which \emph{exactly} preserve time-reversal symmetry, defining an associated surrogate problem and \emph{then} solving it. This ``discretize-then-optimize'' approach ensures a better gradient estimation than directly discretizing RHEL theory in continuous time \citep{blondel2024elements}.
\paragraph{Hamiltonian Recurrent Units (HRUs).}
 \begin{wrapfigure}[10]{l}{0.48\linewidth}
    \centering\includegraphics[width=\linewidth]{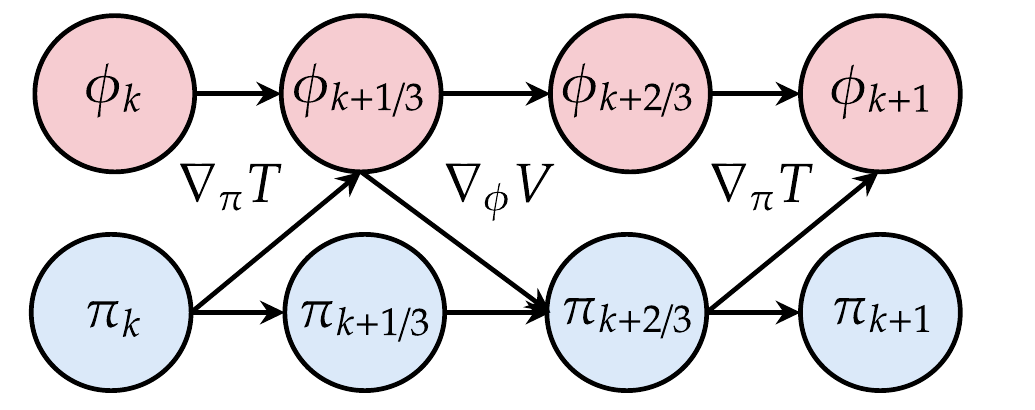}
    \caption{{\bfseries Computational graph of $\mathcal{M}_{H,\delta}$}.   }
    \label{fig:leapfrog}
\end{wrapfigure}
We now assume that the neurons $\bPhi$ obey the following discrete-time dynamics, starting from $\bPhi_{-K} = \bx$ and:
\begin{equation}
   \bPhi_{k + 1} = \mathcal{M}_{H, \delta}[\bPhi_{k}, \btheta, \bu_k] \quad \forall k =-K \cdots -1,
   \label{def:hru-forward}
\end{equation}

where $\mathcal{M}_{H, \delta}$ denotes the \emph{integrator} associated with the Hamiltonian function $H$ with time discretization $\delta$. We classically pick $\mathcal{M}_{H, \delta}$ as the \emph{Leapfrog} integrator \citep{ammari2018numerical} and restrict ourselves to \emph{separable} Hamiltonians of the form $H[\bPhi] = T[\bpi] + V[\bphi]$ to yield a reversible \emph{and} explicit integration scheme. One possible parametrization of the Leapfrog integrator in this case is as the composition of \emph{three explicit Euler integrators}, which alternate between updates of the position $\bphi$ (first and third steps) and of the momentum (second step) -- see Def.~\ref{app:def:hru} for a detailed description and Fig.~\ref{fig:leapfrog} for the associated computational graph. As these two intermediate integrator time steps are needed to accurately define the RHEL learning rule in discrete time,  we explicitly denote them as fractional time units:
\begin{equation}
\mathcal{M}_{H, \delta}: \quad\bPhi_{k}\longrightarrow\bPhi_{k + 1/3} \longrightarrow \bPhi_{k + 2/3}\longrightarrow\bPhi_{k + 1}
\end{equation}
We call such models \emph{Hamiltonian Recurrent Units} (HRUs). Therefore by design, the time-reversal symmetry property defined in Eq.~(\ref{def:time-reversal-symmetry}) extends in discrete time to HRUs (Corollary~\ref{cor:reversibility-discrete}). 
\paragraph{Surrogate learning problem.} Given this modelling choice, the continuous-time problem introduced in Eq.~(\ref{def:optim-problem-1}) naturally translates here in discrete time as: 
\begin{equation}
    \min_{\btheta} L := \sum_{k=-K + 1}^0 \ell[\bPhi_k, k] \quad \mbox{s.t.} \quad \bPhi_{k + 1} = \mathcal{M}_{H, \delta}[\bPhi_{k}, \btheta, \bu_k] \quad \forall k =-K \cdots -1
    \label{def:discrete-problem}
\end{equation}
\paragraph{Algorithmic baseline.} Eq.~(\ref{def:discrete-problem}) defines a classical RNN learning problem where \emph{Backpropagation Through Time} (BPTT), \emph{i.e.} the instantiation of backward-mode AD in this context, is a natural algorithmic baseline. BPTT simply amounts to apply the ``chain rule'' backward through the computational graph spanned by the forward pass of a HRU (Eq.~(\ref{def:hru-forward})). Alternatively, it can also be regarded as the discrete counterpart of the continuous ASM previously introduced (Theorem~\ref{thm:bptt}). For this reason, we re-use the same notations as above and define the following quantities associated to BPTT:
\begin{equation}
d_{\btheta}L = \sum_{k=0}^{K-1} g_{\btheta}(k), \quad g_{\btheta}(k) := d_{\btheta_{k}} L, \quad g_{\bu}(k) := d_{\bu_{-k}} L, \quad \blamb_k := d_{\bPhi_{-k}}L,
\label{def:bptt}
\end{equation}
where $g_{\btheta}(k)$, $g_{\bu}(k)$ and $\blamb_k$ denote the ``sensitivity'' of the loss $L$ to $\btheta$ at time step $k$, $\bu_{-k}$ and $\bPhi_{-k}$ respectively -- see App.~\ref{app:subsubsec:bptt} for a more detailed definition and derivation of BPTT. 

\paragraph{RHEL in discrete time.} Finally, we extend RHEL in discrete-time by defining the echo dynamics on HRUs as:
\begin{align}
\left\{
\begin{array}{l}
    \bPhi^e_0(\epsilon) = \bPhi^\star_0 +\epsilon \bSigma_x \cdot \nabla_{\bPhi} \ell[\bPhi_0, 0],\\
    \bPhi^e_{k+1}(\epsilon) = \mathcal{M}_{H, \delta}[\bPhi^e_{k}(\epsilon), \btheta, \bu_{-(k+1)}]-\epsilon \bJ \cdot \nabla_{\bPhi^e} \ell[\bPhi^e_{k+1}, -(k + 1)] \quad \forall k =0, \cdots, K - 1
\end{array}
\right.
\label{eq:hru-nudged-phase}
\end{align}
Denoting:
\begin{equation}
H^{1/2}[\bPhi^e_k(\epsilon), \btheta, \bu_{-(k+1)}] := \frac{1}{2}\left( H[\bPhi^e_{k+1/3}(\epsilon), \btheta, \bu_{-(k+1)}] + H[\bPhi^e_{k+2/3}(\epsilon), \btheta, \bu_{-(k+1)}]\right),
\label{def:h-1/2}
\end{equation}
we can define the discrete-time counterpart of Eq.~(\ref{def:rhel-diff-continuous}) as:
\begin{align*}
\left\{
\begin{array}{ll}
\Delta_{\btheta}^{\rm RHEL}(k, \epsilon) &:= -\frac{\delta}{2\epsilon}\left(\nabla_{2}H^{1/2}[\bPhi^e_k(\epsilon), \btheta, \bu_{-(k+1)}] -  \nabla_{2}H^{1/2}[\bPhi^e_k(-\epsilon), \btheta, \bu_{-(k+1)}]\right)
\\
 \Delta^{\rm RHEL}_{\bu}(k, \epsilon) &:=-\frac{\delta}{2\epsilon}\left(\nabla_{3}H^{1/2}[\bPhi^e_k(\epsilon), \btheta, \bu_{-(k+1)}] -  \nabla_{3}H^{1/2}[\bPhi^e_k(-\epsilon), \btheta, \bu_{-(k+1)}]\right)
 \\
 \Delta^{\rm RHEL}_{\bPhi}(k, \epsilon) &:= \frac{1}{2\epsilon}\bSigma_x \cdot \left(\bPhi^e_k(\epsilon) -  \bPhi^e_k(-\epsilon)\right)
\end{array}
\right.,
\end{align*}
and consequently extend Theorem~\ref{thm:continuous-rhel} in discrete time as well. 
\begin{coloredtheorem}[Informal]
\label{thm:discrete-rhel}
Under mild assumptions on the Hamiltonian function and with $(\lambda_k)_{k\in [0, K-1]}$, $(g_{\btheta}(k))_{k\in [0, K-1]}$ and $(g_{\bu}(k))_{k\in [0, K-1]}$ defined by Eq.~(\ref{def:bptt}), BPTT and RHEL are equivalent at all times in the sense that  
$\forall k =0, \cdots, K$:
\begin{equation*}
     \blamb_k = \lim_{\epsilon \to 0}\Delta_{\bPhi}^{\rm RHEL}(k, \epsilon),\quad  g_{\btheta}(k) = \lim_{\epsilon \to 0} \Delta^{\rm RHEL}_{\btheta}(k, \epsilon), \quad g_{\bu}(k) = \lim_{\epsilon \to 0} \Delta^{\rm RHEL}_{\bu}(k, \epsilon).
\end{equation*}
\end{coloredtheorem}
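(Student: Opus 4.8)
The plan is to mirror the continuous-time argument behind Theorem~\ref{thm:continuous-rhel}: linearize the echo dynamics around $\epsilon=0$, show the resulting sensitivity obeys the discrete adjoint (BPTT) recursion, and then read off the parameter and input gradients. Write $\bm{\eta}_k := \partial_\epsilon \bPhi^e_k(\epsilon)\big|_{\epsilon=0}$, so that $\Delta^{\rm RHEL}_{\bPhi}(k) := \lim_{\epsilon\to0}\Delta^{\rm RHEL}_{\bPhi}(k,\epsilon) = \bSigma_x\cdot\bm{\eta}_k$ since the estimator is a central difference. First I would record two structural facts about the Leapfrog map $\mathcal{M}_{H,\delta}$. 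By reversibility (Corollary~\ref{cor:reversibility-discrete}), the unperturbed echo exactly retraces the conjugated forward trajectory, $\bPhi^e_k(0)=\bSigma_z\cdot\bPhi_{-k}$; and differentiating the identity $\bSigma_z\circ\mathcal{M}_{H,\delta}\circ\bSigma_z=\mathcal{M}_{H,\delta}^{-1}$ at the appropriate point gives $M_k=\bSigma_z N_k^{-1}\bSigma_z$, where $M_k:=\bm{\partial}_{\bPhi}\mathcal{M}_{H,\delta}[\bPhi^e_k(0),\btheta,\bu_{-(k+1)}]$ is the echo Jacobian and $N_k:=\bm{\partial}_{\bPhi}\mathcal{M}_{H,\delta}[\bPhi_{-(k+1)},\btheta,\bu_{-(k+1)}]$ is the forward (BPTT) Jacobian. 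Combining this with symplecticity, $N_k^{\top}\bJ N_k=\bJ$ so $N_k^{-1}=\bJ^{-1}N_k^{\top}\bJ$, and the algebra $\bSigma_z\bJ=\bSigma_x$, $\bSigma_x\bJ=-\bSigma_z$, $\bSigma_x^2=\bSigma_z^2=\bm{I}$ collapses this to the clean identity $M_k=\bSigma_x N_k^{\top}\bSigma_x$.

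With these facts in hand, I would differentiate the echo recursion (Eq.~(\ref{eq:hru-nudged-phase})) at $\epsilon=0$. The explicit $\epsilon$ in the nudge term survives while its implicit dependence through $\bPhi^e_{k+1}$ drops out (it carries a prefactor $\epsilon$), yielding
\[ \bm{\eta}_{k+1}=M_k\cdot\bm{\eta}_k-\bJ\cdot\nabla_1\ell[\bPhi^e_{k+1}(0),-(k+1)], \qquad \bm{\eta}_0=\bSigma_x\cdot\nabla_1\ell[\bPhi_0,0]. \]
Left-multiplying by $\bSigma_x$ and using $\bSigma_x M_k=N_k^{\top}\bSigma_x$ and $\bSigma_x\bJ=-\bSigma_z$, together with the assumption that the cost $\ell$ depends only on the positions $\bphi$ (so that $\nabla_1\ell[\bSigma_z\cdot\bPhi,\cdot]=\nabla_1\ell[\bPhi,\cdot]$ and $\bSigma_z\cdot\nabla_1\ell=\nabla_1\ell$), gives $\Delta^{\rm RHEL}_{\bPhi}(k+1)=N_k^{\top}\Delta^{\rm RHEL}_{\bPhi}(k)+\nabla_1\ell[\bPhi_{-(k+1)},-(k+1)]$ with $\Delta^{\rm RHEL}_{\bPhi}(0)=\nabla_1\ell[\bPhi_0,0]$. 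This is exactly the BPTT recursion and initial condition satisfied by $\blamb_k$ (App.~\ref{app:subsubsec:bptt}), so $\blamb_k=\Delta^{\rm RHEL}_{\bPhi}(k)$ follows by induction on $k=0,\dots,K$.

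For the parameter and input gradients I would pass the central differences to their $\epsilon\to0$ limits, $\Delta^{\rm RHEL}_{\btheta}(k)=-\delta\,\partial_\epsilon\nabla_2 H^{1/2}[\bPhi^e_k(\epsilon),\btheta,\bu_{-(k+1)}]|_{\epsilon=0}$ and the analogue for $\bu$ with $\nabla_3$, and then expand through the three Leapfrog substeps that define $H^{1/2}$ (Eq.~(\ref{def:h-1/2})). Substituting $\bm{\eta}_k=\bSigma_x\cdot\blamb_k$ (the just-proved equivalence) and $\bPhi^e_k(0)=\bSigma_z\cdot\bPhi_{-k}$, the task reduces to the purely integrator-level identities $-\delta\,\nabla^2_{1,2}H^{1/2}\cdot\bSigma_x=(\bm{\partial}_{\btheta}\mathcal{M}_{H,\delta})^{\top}$ and $-\delta\,\nabla^2_{1,3}H^{1/2}\cdot\bSigma_x=(\bm{\partial}_{\bu}\mathcal{M}_{H,\delta})^{\top}$ evaluated along the forward trajectory, after which $g_{\btheta}(k)=(\bm{\partial}_{\btheta}\mathcal{M}_{H,\delta})^{\top}\blamb_k$ and $g_{\bu}(k)=(\bm{\partial}_{\bu}\mathcal{M}_{H,\delta})^{\top}\blamb_k$ match the RHEL estimators.

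I expect the main obstacle to be exactly this last step: verifying that the symmetric finite difference of the half-step–averaged Hamiltonian $H^{1/2}$ reproduces the parameter/input Jacobian of the composed three-Euler-step Leapfrog map. This is where the precise averaging prescription in Eq.~(\ref{def:h-1/2}) is essential — one must track how $\btheta$ and $\bu$ enter each substep (position, momentum, position), differentiate the composition, and check that the cross terms organize into $\tfrac12(\nabla_2 H[\bPhi_{k+1/3}]+\nabla_2 H[\bPhi_{k+2/3}])$ and its Hessian contractions. Care is also needed to state the ``mild assumptions'' precisely (sufficient smoothness of $H$ and $\ell$ for the central differences to converge, and the position-only dependence of $\ell$ used in the $\bPhi$-recursion), and to confirm that the reversibility and symplecticity identities hold for the specific fractional-step parametrization rather than only for the net map.
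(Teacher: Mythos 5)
Your treatment of the state-sensitivity part is correct and takes a genuinely different route from the paper's. The paper never manipulates the whole-step Jacobian: it first proves a fractional-step (``detailed'') version of BPTT (Corollary~\ref{cor:fine-grained-bptt}), then differentiates the echo dynamics substep by substep, using the Pauli algebra (Lemma~\ref{lma:pauli}), the invariance identities (Lemma~\ref{lma:tricks}), discrete reversibility (Corollary~\ref{cor:reversibility-discrete}) and an evaluation-point shifting lemma (Lemma~\ref{lma:shifting-by-1/3}) to show that $\Delta^{\rm RHEL}_{\bPhi}$ obeys the same three-substep recursion as $\blamb$. Your identity $M_k=\bSigma_x N_k^{\top}\bSigma_x$, obtained from reversibility (your functional identity $\bSigma_z\circ\mathcal{M}_{H,\delta}\circ\bSigma_z=\mathcal{M}_{H,\delta}^{-1}$ is exactly Lemma~\ref{lma:time-reversal-discrete-single-step}) combined with symplecticity, collapses all of that bookkeeping into one step; it is valid provided you actually verify symplecticity of this Leapfrog parametrization (easy, since each substep is a shear, but the paper never needs or states it). Also, the paper only assumes time-reversal invariance of $\ell$, which gives $\nabla_1\ell[\bSigma_z\cdot\bPhi]=\bSigma_z\cdot\nabla_1\ell[\bPhi]$; your algebra goes through with this weaker hypothesis, so the position-only assumption on $\ell$ is unnecessary.

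The parameter/input half, however, has a genuine gap, and it is precisely the part you defer. Your reduction presumes that once $\bm{\eta}_k=\bSigma_x\cdot\blamb_k$ is known at integer steps, what remains is a pointwise ``integrator-level identity''. But the limit $\lim_{\epsilon\to 0}\Delta^{\rm RHEL}_{\btheta}(k,\epsilon)=-\delta\,\partial_{\epsilon}\nabla_2 H^{1/2}[\bPhi^e_k(\epsilon),\btheta,\bu_{-(k+1)}]\big|_{\epsilon=0}$ expands by the chain rule into mixed Hessians of $T$ and $V$ contracted with the \emph{substep} sensitivities $\partial_\epsilon\bPhi^e_{k+1/3}(\epsilon)\big|_{\epsilon=0}$ and $\partial_\epsilon\bPhi^e_{k+2/3}(\epsilon)\big|_{\epsilon=0}$, not with $\bm{\eta}_k$ alone. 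So the integer-step equivalence you proved does not suffice: you need the fractional-step adjoints $\blamb_{k+1/3},\blamb_{k+2/3}$ — which is exactly why the paper proves the finer-grained BPTT of Corollary~\ref{cor:fine-grained-bptt} — their identification with the echo substep sensitivities (minding that the echo traverses the substeps in reverse order), and the separability-based shifting identities $\nabla_{\btheta}T[\bPhi^e_k]=\nabla_{\btheta}T[\bPhi^e_{k+1/3}]$ and $\nabla_{\btheta}V[\bPhi^e_{k+1/3}]=\nabla_{\btheta}V[\bPhi^e_{k+2/3}]$ (Lemma~\ref{lma:shifting-by-1/3}) so that the four $T$/$V$ terms regroup into the average defining $H^{1/2}$ in Eq.~(\ref{def:h-1/2}). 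Note also that your proposed identity equates objects evaluated at different points — the left side at the echo state $\bSigma_z\cdot\bPhi_{-k}$, the right side at the forward state $\bPhi_{-(k+1)}$ — so it is not a pointwise property of the integrator but already encodes reversibility; establishing it is essentially the entire content of the paper's Corollary~\ref{cor:discrete-rhel}, and since this bookkeeping is what distinguishes the discrete theorem from its continuous counterpart, leaving it as an acknowledged ``obstacle'' leaves the proof incomplete.
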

\subsection{Learning stacks of HRUs \emph{via} RHL chaining}
\label{subsec:hssms}
\paragraph{Vectorized notations.} Given a discrete vector field $(\bm{V}_k)_{k\in \llbracket-K, 0\rrbracket}$, we denote $\overline{\bm{V}} := (\bm{V}_{-K}^\top, \cdots, \bm{V}_{0}^\top)^\top \in \mathbb{R}^{T \times d_{\bm{V}}}$ and $\widetilde{\bm{V}} := (\bm{V}_{0}^\top, \cdots, \bm{V}_{-K}^\top)^\top\in \mathbb{R}^{T \times d_{\bm{V}}}$ the forward and backward \emph{trajectories} associated with $\bm{V}$. We define the vectorized operator $\bm{\mathcal{M}}_{H, \delta}$ such that the dynamics of a single HRU unit as defined in Eq.~(\ref{def:hru-forward}) and the the nudged dynamics prescribed by RHEL as given in Eq.~(\ref{eq:hru-nudged-phase}) rewrite more compactly as:
\begin{equation}
    \overline{\bPhi} = \bm{\mathcal{M}}_{H,\delta}[\overline{\bPhi}, \btheta, \overline{\bu}], \quad \overline{\bPhi}^e = \bm{\mathcal{M}}_{H,\delta}[\overline{\bPhi}^e, \btheta, \widetilde{\bu}]- \epsilon \bJ \cdot \nabla_{\overline{\bPhi}^e} \widetilde{L}[\overline{\bPhi}^e].\nonumber
\end{equation}
\paragraph{Learning Hamiltonian State Space Models (HSSMs) as multi-level optimization.} We now consider \emph{hierarchical} models reading as a composition of $N$ HRU units of the form:
\begin{equation}
\bPhi^{(0)}:= \overline{\bu}, \quad \forall \ell \in \llbracket 0, N-1 \rrbracket: \ \overline{\bPhi}^{(\ell + 1)} = \bm{\mathcal{M}}^{(\ell)}_{H^{(\ell)},\delta}[\overline{\bPhi}^{(\ell + 1)}, \btheta^{(\ell)}, \overline{\bPhi}^{(\ell)}],
\label{def:hssm-forward}
\end{equation}
where the trajectory of the $(\ell)$-th HRU unit is fed as an input sequence into the $(\ell + 1)$-th HRU unit (Fig.~\ref{fig:heb-and-gradient-comp-in-time}). We call such hierarchies of HRUs \emph{Hamiltonian State Space Models} (HSSMs). The corresponding optimization problem reads:
\begin{equation}
\min_{\btheta}  \ L := \sum_{k=-K + 1}^0 \ell[\bPhi^{(N)}_k, k] \quad \mbox{s.t.} \quad 
\overline{\bPhi}^{(\ell + 1)} = \bm{\mathcal{M}}_{H^{(\ell)},\delta}[\overline{\bPhi}^{(\ell + 1)}, \btheta^{(\ell)}, \overline{\bPhi}^{(\ell)}] \quad \forall \ell \in \llbracket 0, N - 1 \rrbracket
\label{def:multilevel-problem}
\end{equation}
Alg.~\ref{alg:discrete-time-rhel-short} prescribes an intuitive receipe to compute $d_{\btheta} L$ for the above optimization problem (Eq.~(\ref{def:multilevel-problem})), chaining RHEL backward through HRUs. Namely, the echo dynamics of the top-most HRU  read as Eq.~(\ref{eq:hru-nudged-phase}) using the initial learning signal $\nabla_{\bPhi} L$ to nudge its trajectory. On top of estimating its parameter gradients, we also estimate its \emph{input} gradients which are used to nudge the echo dynamics of the preceding HRU. This procedure is repeated until reaching the first HRU -- see Fig.~\ref{fig:heb-and-gradient-comp-in-time}.

\begin{algorithm}[H]{
\emph{Inputs}: $\bPhi_0$ (final state of the forward trajectory), $\Delta_{\bPhi}$ (incoming gradient), $\epsilon$ (nudging strength), \\ $\delta$ (timestep) \\
\emph{Outputs}: $\Delta \btheta$ (parameter gradient estimate), $\Delta_{\bu}$ (input gradient estimate)
}
    \caption{Recurrent Hamiltonian Echo Learning (RHEL) on a single HRU
    }
    \label{alg:discrete-time-rhel-short}
    \begin{algorithmic}[1]
    \State $\Delta \btheta \gets \bm{0}_{d_{\btheta}}$
    \State $\Delta_{\bu} := (\Delta_{\bu_{-K}}, \cdots, \Delta_{\bu_{0}})  \gets \bm{0}_{K \times d_{\bPhi}}$
    \State $\bPhi^{e}_{\pm\epsilon} \gets \bPhi_0^\star \pm \epsilon \bSigma_x \cdot \Delta_{\bPhi, 0}$\Comment{Compute $\bPhi^e_{\pm \epsilon}$ in parallel}
    \For{k in $0,\cdots, K$}
    \State $\bPhi^e_{\pm\epsilon} \gets \mathcal{M}_{H, \delta}[\bPhi^e_{\pm\epsilon}, \btheta, \bu_{-(k+1)}]\pm \epsilon \bSigma_x \cdot \Delta_{\bPhi, -(k+1)}$
    \State $\Delta \btheta \gets \Delta \btheta - \frac{\delta}{2\epsilon}\left(\nabla_{2}H^{1/2}[\bPhi^e_{\epsilon}, \btheta, \bu_{-(k+1)}]-\nabla_{2}H^{1/2}[\bPhi^e_{-\epsilon}, \btheta, \bu_{-(k+1)}]\right)$\Comment{Eq.~(\ref{def:h-1/2})}
    \State $\Delta_{\bu_{-k}} \gets - \frac{\delta}{2\epsilon}\left(\nabla_{3}H^{1/2}[\bPhi^e_{\epsilon}, \btheta, \bu_{-(k+1)}]-\nabla_{3}H^{1/2}[\bPhi^e_{-\epsilon}, \btheta, \bu_{-(k+1)}]\right)$
    \EndFor
    \State \Return $\Delta \btheta$, $\Delta_{\bu}$
\end{algorithmic}
\end{algorithm}

\begin{coloredtheorem}[Informal]
\label{thm:rhel-chaining}
Given $\overline{\bu}$ an input sequence and a HSSM with layer-wise Hamiltonians $\{H^{(\ell)}\}_{k\in [1, N]}$, applying Alg.~\ref{alg:discrete-time-rhel-short} recursively backward from the top-most HRU solves the multilevel optimization problem defined in Eq.~(\ref{def:multilevel-problem}) in the sense that:
\begin{equation*}
 \forall \ell = 0, \cdots, N: \quad d_{\btheta^{(\ell)}} L = \lim_{\epsilon \to 0} \Delta \btheta^{(\ell)}(\epsilon)
\end{equation*}

\end{coloredtheorem}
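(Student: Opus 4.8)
The plan is to prove the statement by downward induction on the layer index $\ell$, from the top-most HRU ($\ell = N-1$) down to $\ell = 0$, using the single-HRU equivalence (Theorem~\ref{thm:discrete-rhel}) as the atomic step and the chain rule to stitch consecutive HRUs together.

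First I would record the structural fact that backward-mode AD on the whole HSSM factorizes across depth. Because the parameters of distinct HRUs are independent and $\btheta^{(\ell)}$ influences $L$ \emph{only} through the output trajectory $\overline{\bPhi}^{(\ell+1)}$ of layer $\ell$, the chain rule gives $d_{\btheta^{(\ell)}} L = \left(\bm{\partial}_{\btheta^{(\ell)}} \overline{\bPhi}^{(\ell+1)}\right)^{\!\top} d_{\overline{\bPhi}^{(\ell+1)}} L$, and likewise $d_{\overline{\bPhi}^{(\ell)}} L = \left(\bm{\partial}_{\overline{\bPhi}^{(\ell)}} \overline{\bPhi}^{(\ell+1)}\right)^{\!\top} d_{\overline{\bPhi}^{(\ell+1)}} L$ since $\overline{\bPhi}^{(\ell)}$ reaches $L$ only via $\overline{\bPhi}^{(\ell+1)}$. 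These are exactly the parameter and input gradients that single-HRU BPTT returns on layer $\ell$ in isolation when its loss is the linear surrogate $\widetilde{L}^{(\ell)}[\overline{\bPhi}^{(\ell+1)}] := \langle d_{\overline{\bPhi}^{(\ell+1)}} L, \overline{\bPhi}^{(\ell+1)}\rangle$, whose per-timestep cost at step $k$ is the linear map $\bPhi \mapsto \langle (d_{\overline{\bPhi}^{(\ell+1)}} L)_k, \bPhi\rangle$ with constant gradient. Such costs are smooth, so $\widetilde{L}^{(\ell)}$ meets the hypotheses of Theorem~\ref{thm:discrete-rhel}. Crucially, the input gradient so produced, $d_{\overline{\bPhi}^{(\ell)}} L$, is precisely the coefficient vector of the surrogate loss that the next HRU down ($\ell-1$) requires.

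With this in hand, the induction is straightforward. For the base case $\ell = N-1$ the incoming nudge is the true loss gradient $\nabla_{\bPhi} L$ with respect to $\overline{\bPhi}^{(N)}$, which matches the initialization $\Delta_{\bPhi}$ of Alg.~\ref{alg:discrete-time-rhel-short}, so Theorem~\ref{thm:discrete-rhel} directly yields $\lim_{\epsilon \to 0} \Delta\btheta^{(N-1)}(\epsilon) = d_{\btheta^{(N-1)}} L$ together with $\lim_{\epsilon \to 0} \Delta_{\bu}^{(N-1)}(\epsilon) = d_{\overline{\bPhi}^{(N-1)}} L$. For the inductive step I would assume the error signal fed into layer $\ell$ converges to the exact $d_{\overline{\bPhi}^{(\ell+1)}} L$ as $\epsilon \to 0$; running Alg.~\ref{alg:discrete-time-rhel-short} on layer $\ell$ with this nudge and invoking Theorem~\ref{thm:discrete-rhel} with the surrogate $\widetilde{L}^{(\ell)}$ gives both $\lim_{\epsilon\to0}\Delta\btheta^{(\ell)}(\epsilon) = d_{\btheta^{(\ell)}} L$ and $\lim_{\epsilon\to0}\Delta_{\bu}^{(\ell)}(\epsilon) = d_{\overline{\bPhi}^{(\ell)}} L$, the latter closing the recursion.

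The hard part will be that the nudge actually supplied to layer $\ell$ is not the exact gradient $d_{\overline{\bPhi}^{(\ell+1)}} L$ but the finite-$\epsilon$ estimate $\Delta_{\bu}^{(\ell+1)}(\epsilon)$ returned one layer above, so a single $\epsilon$ couples every level and iterated limits are not a priori licit. I would handle this by viewing each layer as a map $F^{(\ell)}_\epsilon$ from the incoming nudge direction to the pair of gradient estimates, showing from the differentiability of $\mathcal{M}_{H,\delta}$ and of $H^{(\ell)}$ (which render the echo trajectory smooth in both $\epsilon$ and the nudge) that $F^{(\ell)}_\epsilon \to F^{(\ell)}_0$ uniformly on a neighborhood of the exact signal, with $F^{(\ell)}_0$ continuous and linear in the nudge. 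Combining this uniform convergence with the inductive hypothesis $\Delta_{\bu}^{(\ell+1)}(\epsilon) \to d_{\overline{\bPhi}^{(\ell+1)}} L$ then yields $\lim_{\epsilon\to0} F^{(\ell)}_\epsilon\!\left(\Delta_{\bu}^{(\ell+1)}(\epsilon)\right) = F^{(\ell)}_0\!\left(d_{\overline{\bPhi}^{(\ell+1)}} L\right)$, so that the approximation errors introduced at each level do not compound with depth. This continuity and uniformity argument is the only ingredient genuinely beyond the single-HRU result of Theorem~\ref{thm:discrete-rhel}.
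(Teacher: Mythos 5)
Your proposal is correct, and its skeleton is the same as the paper's: a downward induction over HRU blocks in which the single-HRU equivalence supplies the atomic step, and each layer is treated as an isolated HRU driven by a constant error signal coming from above. Two differences are worth recording. First, where you invoke Theorem~\ref{thm:discrete-rhel} on a linear surrogate loss $\widetilde{L}^{(\ell)}[\overline{\bPhi}^{(\ell+1)}] = \langle d_{\overline{\bPhi}^{(\ell+1)}}L, \overline{\bPhi}^{(\ell+1)}\rangle$, the paper derives the layer-wise adjoint equations from the Lagrangian of the multilevel problem (Theorem~\ref{app:thm:rhel-chaining}) and applies a dedicated variant of the single-HRU result, Corollary~\ref{cor:discrete-rhel-2}, in which the nudge is an arbitrary constant vector $\bm{y}$ independent of $\bPhi$, injected directly as $\epsilon\bSigma_x\cdot\bm{y}$. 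These are the same idea, but the constant-nudge corollary is the technically safer vehicle: a linear cost $\bPhi \mapsto \langle \bm{g}, \bPhi\rangle$ whose coefficient $\bm{g}$ has a nonzero momentum component violates the time-reversal-invariance assumption on $\ell$ (Def.~\ref{app:def:discrete-optimization-problem}) under which the precise form of Theorem~\ref{thm:discrete-rhel} is proved, so strictly speaking you need that corollary rather than the theorem itself. Second --- and here you genuinely go beyond the paper --- your ``hard part'' is real and the paper does not address it: in the algorithm as stated, layer $\ell$ is nudged by the finite-$\epsilon$ estimate $\Delta_{\bu}^{(\ell+1)}(\epsilon)$ produced above it, whereas the paper's induction step applies Corollary~\ref{cor:discrete-rhel-2} with $\bm{y} = \lim_{\epsilon\to 0}\Delta_{\bPhi^{(\ell)}}(\epsilon)$, i.e., it analyzes the idealized recursion driven by the exact limiting gradients and leaves the interchange of limits implicit. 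Your uniform-convergence argument (smoothness of the echo map jointly in $\epsilon$ and the nudge, so that $F^{(\ell)}_\epsilon \to F^{(\ell)}_0$ uniformly on compacts with $F^{(\ell)}_0$ linear) is precisely the missing glue, and it is valid here because the sequence length $K$ and depth $N$ are fixed, so the per-layer errors cannot compound. In this respect your proof is more complete than the paper's own.
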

\section{Experiments}
\paragraph{Foreword.} We first introduce the two types HRUs at use, empirically assess the validity of Theorem~\ref{thm:rhel-chaining} by statically comparing gradients computed by RHEL and BPTT on HSSMs made up of these two types of HRUs, then perform training experiments across classification and regression sequence tasks. Importantly, the goal of the proposed experiments is \emph{not} to improve the SOTA performance on these tasks. Rather we want to show, on a \emph{given} model satisfying the requirements of RHEL (\emph{i.e.} being a HRU or a stack thereof), that RHEL maintains training performance \emph{with respect to BPTT}.

\paragraph{Linear HRU block.} Following recent work \citep{rusch2024oscillatory}, we introduce the \emph{linear} HRU block with the parametrization
$\btheta^{(\ell)} = \{\bA^{(\ell)}, \bB^{(\ell)}, \bW^{(\ell)}\}$ and Hamiltonian:
\begin{align}
\left \{
\begin{array}{l}
    H^{(\ell)}[\bPhi^{(\ell)}, \btheta^{(\ell)}, \bPhi^{(\ell - 1)}] = \frac{1}{2}\|\bpi^{(\ell)}\|^2 + \frac{1}{2} \bphi^{(\ell)^\top} \cdot \bA^{(\ell)} \cdot \bphi^{(\ell)} - \bphi^{(\ell)^\top} \cdot \bB^{(\ell)} \cdot \bu^{(\ell)},\\
    \bu^{(\ell)} = F[\bphi^{(\ell - 1)}, \bW^{(\ell)}],
\end{array}
\right.
\label{def:linear-hru}
\end{align}


where $\bA \in \mathbb{R}^{(d_{\bPhi}/2) \times (d_{\bPhi}/2)}$ is assumed to be \emph{diagonal} and to have positive entries, $\bB^{(\ell)} \in \mathbb{R}^{(d_{\bPhi}/2) \times d_{u}}$, $\alpha \in \mathbb{R}$, and $F$ is the nonlinear spatial building block parametrized by $\bW^{(\ell)}$ (see App.~\ref{app:subsec:linearHRU}). Under these assumptions, it can be shown that such HRUs have a bounded spectrum and yield HSSMs which are universal approximators of continuous and causal operators between time-series \citep{rusch2024oscillatory}. 
\paragraph{Nonlinear HRU block.} To demonstrate the validity of our approach beyond linear HRUs, we also introduce a \emph{nonlinear} HRU block \citep{rusch2021unicornn} with $\btheta^{(\ell)} = \{\bA^{(\ell)}, \bW^{(\ell)}, \bB^{(\ell)}, \bm{b}^{(\ell)}, \alpha^{(\ell)}\}$ and:
\begin{align}
\left \{
\begin{array}{l}
    H^{(\ell)}[\bPhi^{(\ell)}, \btheta^{(\ell)}, \bPhi^{(\ell - 1)}] = \frac{1}{2}\|\bpi^{(\ell)}\|^2 + \frac{\alpha^{(\ell)}}{2} \|\bphi^{(\ell)}\|^2 \\
    \quad + \left(\bA^{(\ell)}\right)^{-\top}\cdot \blog\left(\bcosh\left( \bA^{(\ell)} \cdot \bphi^{(\ell)} +\bB^{(\ell)} \cdot \bu^{(\ell)} + \bm{b}^{(\ell)}\right)\right),\\
    \bu^{(\ell)} = F[\bphi^{(\ell - 1)}, \bW^{(\ell)}],
\end{array}
\right.
\label{def:nonlinear-hru}
\end{align}
where $\bA \in \mathbb{R}^{(d_{\bPhi}/2) \times (d_{\bPhi}/2)}$ is also assumed to be \emph{diagonal}, $\bB^{(\ell)} \in \mathbb{R}^{(d_{\bPhi}/2) \times d_{u}}$, $\bm{b}^{(\ell)} \in \mathbb{R}^{(d_{\bPhi}/2)}$, $\alpha \in \mathbb{R}$, and $F$ is the nonlinear spatial building block parametrized by $\bW^{(\ell)}$ (see App.~\ref{app:subsec:nonlinearHRU}). It has been shown that these HRUs mitigate by design vanishing and exploding gradients. 

\subsection{Static gradient comparison}
As a sanity check of Theorem~\ref{thm:rhel-chaining} and preamble to training experiments, we first check that RHEL gradients are computed correctly. Given a HSSM model, we randomly sample a tuple $(\bu_i, \bm{y}_i) \sim \mathcal{D}$ from the SCP1 dataset (see App.~\ref{app:subsec:datasets} for details on this dataset) and pass $\bu_i$ through the model. Given $\bm{y}$, we then run BPTT and RHEL through the model and compare them in terms of cosine similarity and norm ratio of the resulting layer-wise parameter gradients. We run this experiment on a HSSM made up of six linear HRU blocks (which we call ``linear HSSM'') and another HSSM comprising six nonlinear HRU blocks (resp. ``nonlinear HSSM'') and obtain Fig.~\ref{fig:static-gradient-comparison}. We observe that in terms of these two comparison metrics, RHEL and BPTT parameter gradients are near-perfectly aligned for all parameters and across all HRU blocks, for both the linear and nonlinear HSSMs. 
\begin{figure}[ht!]
    \centering
    \includegraphics[width=0.8\linewidth]{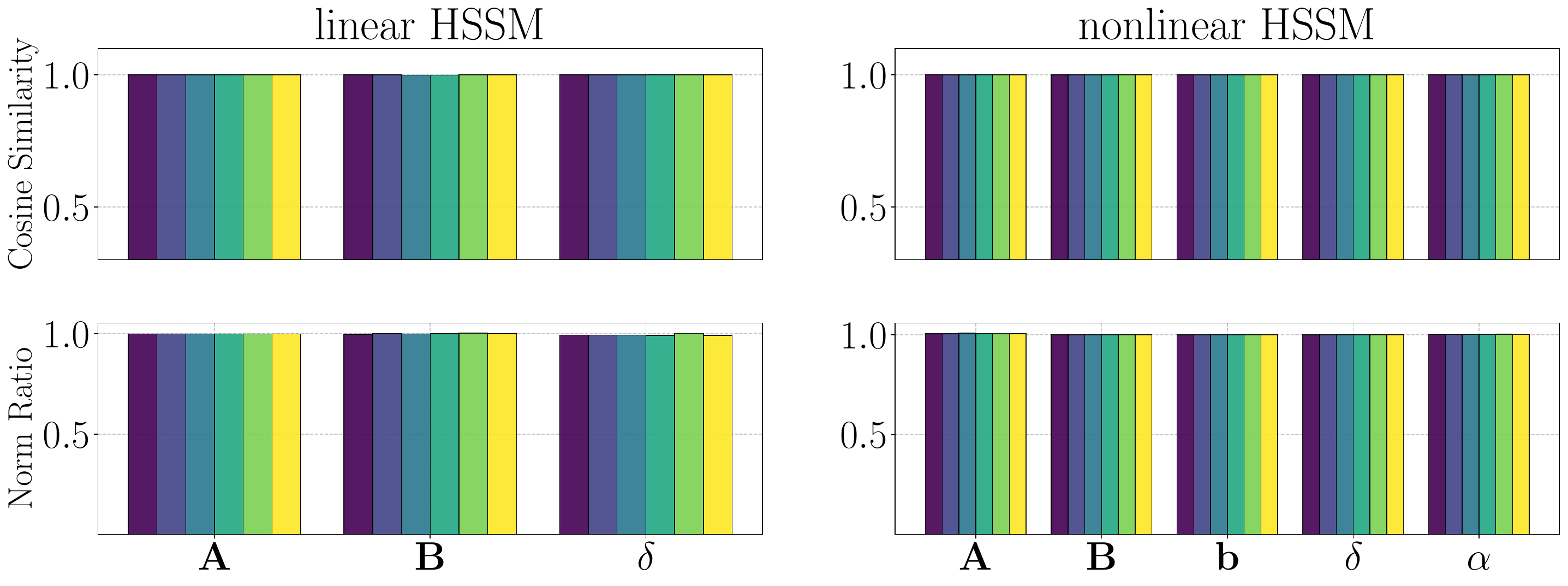}
    \caption{{\bfseries Static comparison between RHEL and BPTT.} Given some $(\bm{u}_i, \bm{y}_i)\sim \mathcal{D}$, we perform BPTT and RHEL on six blocks-deep linear and nonlinear HSSMs. We measure, layer-wise (first layer in purple), the cosine similarity (top panels) and norm ratio (bottom pannels) between RHEL and BPTT parameter gradients of a linear HSSM (Eq.~(\ref{def:linear-hru})) and a nonlinear HSSM, Eq.~(\ref{def:nonlinear-hru})). }
    \label{fig:static-gradient-comparison}
\end{figure}
\subsection{Training experiments}
\paragraph{Classification.}
To further check our theoretical guarantees, we now perform training experiments with BPTT and RHEL across six multivariate sequence classification datasets with various sequence lengths (from $\sim 400$ to $\sim 18k$) and number of classes, on linear and nonlinear HSSMs (see App.~\ref{app:subsec:datasets} for details on these datasets) and display our results in Table~\ref{tab:performance_comparison_classification}. This series of tasks was recently introduced \citep{walker2024log} as a subset of the University of East Anglia (UEA) datasets \citep{bagnall2018uea} with the longest sequences for increased difficulty and recently used to benchmark the linear HSSM previously introduced \citep{rusch2024oscillatory}. We observe that models trained by RHEL almost match, on average across the datasets, those trained by BPTT in terms of resulting performance on the test dataset. 

\begin{table}[htbp]
\centering
\small
\begin{tabular}{lcccccccc}
\toprule
 &\rotatebox[origin=c]{45}{Tasks}  & \rotatebox[origin=c]{45}{Worms} & \rotatebox[origin=c]{45}{SCP1} & \rotatebox[origin=c]{45}{SCP2} & \rotatebox[origin=c]{45}{Ethanol} & \rotatebox[origin=c]{45}{Hearbeat} & \rotatebox[origin=c]{45}{Motor} & \\
\cmidrule(r){1-8}
\multicolumn{2}{c}{Seq. length} & 17,984& 896 & 1,152 & 1,751 & 405 & 3,000 & \\
\cmidrule(r){1-8}
\multicolumn{2}{c}{\# classes} &  5 &  2 & 2 & 4 & 2 & 2 & {\bfseries Avg}\\
\midrule
\multirow{2}{*}{Lin} & BPTT &  $78.3^{\pm 7.5}$ & $86.4^{\pm 1.8}$ & $63.9^{\pm 7.3}$ & $29.9^{\pm 0.6}$ & $73.9^{\pm 0.6}$ & $48.1^{\pm 5.7}$ & $\mathbf{63.4^{\pm 3.9}}$\\
 & RHEL & $75.0^{\pm 9.9}$ & $86.1^{\pm 2.9}$ & $61.4^{\pm 9.4}$ & $29.9^{\pm 0.6}$ & $73.5^{\pm 1.6}$ & $51.6^{\pm 5.0}$ & $\mathbf{62.9^{\pm 4.9}}$ \\
\midrule
\multirow{2}{*}{Nonlin} & BPTT & $51.1^{\pm 7.2}$ & $86.8^{\pm 3.2}$ & $54.0^{\pm 4.9}$ & $29.9^{\pm 0.6}$ & $74.5^{\pm 2.4}$ & $56.5^{\pm 7.6}$ & $\mathbf{58.8^{\pm 4.3}}$\\
 & RHEL &  $50.6^{\pm 6.7}$ & $85.6^{\pm 4.4}$ & $54.0^{\pm 2.0}$ & $29.9^{\pm 0.6}$ & $73.9^{\pm 4.3}$ & $53.0^{\pm 5.7}$ & $\mathbf{57.8^{\pm 4.0}}$\\
\bottomrule
\end{tabular}
\caption{Test mean accuracy ($\%$, higher is better) across five different seeds ($\pm$ indicates standard deviation) using RHEL and BPTT, nonlinear and linear HSSMs, on six UEA time series classification datasets with various sequence length and number of classes.}
\label{tab:performance_comparison_classification}
\end{table}
\paragraph{Regression \& scalability.}
Finally, to assess the applicability of RHEL beyond classification and scalability to longer sequences, we run training experiments on the PPG-DaLiA dataset, a multivariate time series regression dataset designed for heart rate prediction using data collected from a wrist-worn device \citep{reiss2019deep}.
With sequence length of $\sim 50k$, this task is considered to be a difficult ``long-range'' benchmark \citep{rusch2024oscillatory}. We display in Table~\ref{tab:performance_comparison_regression} the results obtained when training linear and nonlinear HSSMs with RHEL and BPTT on this dataset. Here again, we observe that on average and with both models, the performance of RHEL matches that of BPTT.

\section{Discussion}
\begin{wraptable}[17]{r}{0.4\linewidth}
\begin{tabular}{lcc}
\toprule
 & & PPG-DaLiA \\
\cmidrule(r){1-2}
\multicolumn{2}{c}{Seq. length} & 49,920 \\
\cmidrule(r){1-2}
\multicolumn{2}{c}{Input/output dim.} &  6/1 \\
\midrule
\multirow{2}{*}{Lin} & BPTT &  $9.1^{\pm 1.1}$ \\
 & RHEL &   $9.5^{\pm 1.0}$ \\
\midrule
 \multirow{2}{*}{Nonlin} & BPTT & $7.8^{\pm 0.5}$ \\ & RHEL &  $8.4^{\pm 0.5}$ \\
\bottomrule
\end{tabular}
\caption{Test average mean-square error ($\times10^{-2}$, lower is better) across five different seeds ($\pm$ indicates standard deviation) applying RHEL and BPTT to train nonlinear and linear HSSMs on the PPG-DaLiA dataset.}
\label{tab:performance_comparison_regression}
\end{wraptable}
\paragraph{Limitations.} We observe on Tables~\ref{tab:performance_comparison_classification}--\ref{tab:performance_comparison_regression} that RHEL slightly underperforms BPTT, though this gap is statistically significant only for nonlinear HSSMs on regression tasks. This performance difference can be attributed to: i) the approximation bias introduced by finite nudging in our method, a known issue in related works \citep{laborieux2021scaling}, and ii) numerical precision that requires some careful selection of the nudging strength (see App.~\ref{app:subsec:hparams-alg}).
\paragraph{Related work.} On top of its direct connection with HEB \citep{lopez2023self}, RHEL belongs to a large body Hamiltonian-based ML algorithms dedicated to physics-informed neural networks \citep{chen2019symplectic, greydanus2019hamiltonian, cranmer2020lagrangian}, memory-efficient models \citep{rusch2021unicornn}, generative models \citep{toth2019hamiltonian}, sampling \citep{duane1987hybrid,neal2011mcmc,tripuraneni2017magnetic} and optimization \citep{nguyen2024memory} algorithms. RHEL also extends the design of scalable self-learning machines beyond \emph{dissipative} systems 
emulating \emph{implicit differentiation} \citep{stern2021supervised, laborieux2022holomorphic, scellier2023energy, nest2024towards, scellier2024fast}. In comparison to these algorithms, RHEL crucially avoids prohibitively long simulation times due to the use of lengthy root-finding algorithms, extends their core mechanics to the broader domain of sequence modelling and possibly larger-scale tasks. 
\paragraph{Future work.} Yet, the restriction of RHEL to non-dissipative systems limits model expressivity \citep{rusch2024oscillatory}. While inducing artificial dissipation through the use dissipative integrators \citep{rusch2024oscillatory} or by embedding a dissipative system within a larger conservative one \citep{lopez2023self} are convenient workarounds, there is ultimately a need for a temporal credit assignment algorithm tailored for \emph{truly} dissipative systems. Additionally and in the spirit of HEB, an idealized version of RHEL should be entirely ``black-box'' as it still requires explicit Hamiltonian parameter gradients to implement its learning rule (Alg.~\ref{alg:discrete-time-rhel-short}). This could possibly be achieved for instance using homeostatic control \emph{via} control knobs acting on the model parameters \citep{scellier2022agnostic}. Another exciting direction of work is to have RHEL operate \emph{online} -- it currently requires one forward pass and subsequent ones revisiting the inputs in reverse order. Finally, while HRU hidden units can be recomputed from their final state \citep{rusch2021unicornn}, it still requires storing the input sequence of the HRU so that memory gains are not evident within a stack of HRUs. Looking beyond analog physical systems, endowing RHEL with better memory efficiency and online mechanics could potentially yield a compelling alternative to BPTT on \emph{digital} accelerators like GPUs \citep{zucchet2023online}.
\paragraph{Conclusion.} While we \emph{simulated} Hamiltonian dynamics alongside RHEL on GPUs, the greatest potential of RHEL lies within \emph{real} Hamiltonian systems, namely on-chip photonic circuits \citep{abreuPhotonicsPerspectiveComputing2024a}, superconducting circuits \citep{schneiderSuperMindSurveyPotential2022}, or spintronic systems \citep{grollierNeuromorphicSpintronics2020}.
We hope this work will incentivize the search for alternative analog hardware capable of training sequence models with much higher energy efficiency, as well as alternative algorithms for digital hardware. 

\newpage
\clearpage

\begin{ack}
The authors warmly thank (in alphabetic order) Aditya Gilra, Albert Cohen, Debabrota Basu and James Laudon for their support of the project. ME thanks Vincent Roulet for his careful review which greatly helped improve the quality of this manuscript, as well as Amaury Hayat and Alexandre Krajenbrink for useful discussions. GP acknowledges the CHIST-ERA grant for the CausalXRL project (CHIST-ERA-19-XAI-002) by L’Agence Nationale de la Recherche, France (grant reference ANR-21-CHR4-0007), and the ANR JCJC for the REPUBLIC project (ANR-22-CE23-0003-01).
\end{ack}

\section*{Author contributions}
GP and ME collaborated closely on all aspects of this work. Both authors contributed to study design, with GP focusing primarily on experiments and code development, and ME focusing primarily on theoretical development and writing.

\bibliographystyle{unsrtnat}
\bibliography{biblio}



\newpage

\newpage
\appendix
\begin{appendices}
\section{Supplementary material}
\DoToC
\newpage
\subsection{\bfseries Theoretical results in continuous time}

\paragraph{Summary.} In this section, we present all the results derived in \emph{continuous} time. More precisely:

\begin{itemize}
    \item We formally define our model (Def.~\ref{app:def:continuous-hamiltonian-model}) and constrained optimization problem (Def.~\ref{app:def:continuous-optimization-problem}) in \emph{continuous time}.
    \item We state and prove our algorithm baseline, the \emph{continuous adjoint state method} (ASM) for this constrained optimization problem (Theorem~\ref{app:thm:continuous-adjoint-state-method}). To ease the comparison of the continuous ASM with \emph{Recurrent Hamiltonian Echo Learning }(RHEL) in continuous time, we state re-parametrized version of the continuous ASM where time is indexed \emph{backwards} (Corollary~\ref{app:cor:continuous-adjoint-state-method-v2}). Finally, we also state a variant of the continuous ASM when the loss function is only defined at the \emph{final} timestep (Corollary~\ref{app:cor:continuous-adjoint-state-method-single-loss}) to ease the comparison between the continuous ASM and \emph{Hamiltonian Echo Backprop} (HEB, \citep{lopez2023self}). 
    \item We introduce two technical results (Lemma~\ref{lma:pauli}--\ref{lma:tricks}) which enable us to prove the \emph{time-reversal invariance} property of our model (Lemma~\ref{lma:time-invariance}). We then show that the direct consequence of this property is the \emph{time-reversibily} of our model upon momentum flipping (Corollary~\ref{lma:time-reversal}), the key mechanics which fundamentally underpins our algorithm. All these intermediate results allow us to finally introduce RHEL in continuous time and prove its equivalence with the continuous ASM (Theorem~\ref{app:thm:continuous-rhel}).
    \item Lastly, we connect HEB \citep{lopez2023self} with the continuous ASM when the loss is defined only at the final time step (Theorem~\ref{app:thm:continuous-rhel-final-loss}). We highlight some key differences between HEB and RHEL.
\end{itemize}

\subsubsection{Definitions \& assumptions}
\label{subsec:def-assumptions-continuous}
\begin{coloreddefinition}[Continuous Hamiltonian model]
\label{app:def:continuous-hamiltonian-model}
Given $\btheta \in \mathbb{R}^{d_{\btheta}}$, $T \in \mathbb{R}_+^\star$ and an input sequence $t \to \bu(t) \in \left(\mathbb{R}^{d_{\bu}}\right)^{[-T, 0]}$, the continuous Hamiltonian model prediction $t \to \bPhi(t) \in \left(\mathbb{R}^{d_{\bPhi}}\right)^{[-T, 0]}$ is, by definition, implicitly given as the solution of the following ODE:
\begin{equation*}
   \bPhi(-T) = \bx, \quad \forall t \in [-T, 0]: \ \partial_t \bPhi(t) = \bJ \cdot \nabla_{\bPhi} H[\bPhi(t), \btheta, \bu(t)], \quad     \bJ := \begin{bmatrix}
        \bm{0} & \bm{I} \\
        -\bm{I} & \bm{0}
    \end{bmatrix}.
\end{equation*}
We assume that:
\begin{enumerate}
\item $H$ is time-reversal invariant:
    \begin{equation*}
       \forall \bPhi \in \mathbb{R}^{d_{\bPhi}}, \quad \forall \btheta \in \mathbb{R}^{d_{\btheta}}, \ \forall \bu \in \mathbb{R}^{d_{\bu}}: \quad H[\bPhi, \btheta, \bu] = H[\bSigma_z \cdot \bPhi, \btheta, \bu], \quad \bSigma_z := \begin{bmatrix}
            \bm{I} & \bm{0} \\
            \bm{0} & -\bm{I}
        \end{bmatrix}
    \end{equation*}
 \item $\bPhi \to H[\bPhi, \cdot, \cdot]$ is twice continuously differentiable,
 \item $\btheta \to H[\cdot, \btheta, \cdot]$ is differentiable,
  \item $\nabla^2_{1, 2}H$ exists and is continuous with respect to $t$,
 \item $\bu \to H[\cdot, \cdot, \bu]$ is continuous,
 \item $\bPhi \to \nabla_{1}H[\bPhi, \cdot, \cdot]$ and $\bPhi \to\nabla_{2, 1}H[\bPhi, \cdot, \cdot]$ are Lipschitz continuous.
\end{enumerate}
\end{coloreddefinition}

\begin{remark}
    While some of these assumptions will be used explicitly in our derivations, they are all needed to guarantee the existence of partial derivatives of $s$ as an \emph{implicit function} of $\bx$ and $\btheta$ through Eq.~(\ref{def:hamiltonian-flow}) and we refer to \citep{pontryagin2018mathematical} for such claims given these assumptions.
\end{remark}
\newpage

\begin{coloreddefinition}[Continuous constrained optimization optimization problem]
\label{app:def:continuous-optimization-problem}
Given a continuous Hamiltonian model (Def.~\ref{app:def:continuous-hamiltonian-model}), we consider the following constrained optimization problem:
\begin{equation*}
    \min_{\btheta} L := \int_{-T}^0 dt \ell[t, \bPhi(t), \btheta] \quad \mbox{s.t.} \quad \forall t \in [-T, 0]: \ \partial_t \bPhi(t) = \bJ \cdot \nabla_{\bPhi} H[\bPhi(t), \btheta, \bu(t)],
\end{equation*}
where we assume that:
\begin{enumerate}
    \item $\ell$ is time-reversal invariant:
    \begin{equation*}
              \forall \bPhi \in \mathbb{R}^{d_{\bPhi}}, \quad\forall \btheta \in \mathbb{R}^{d_{\btheta}}, \ \forall t \in [-T, 0]: \quad \ell[t, \bPhi, \btheta] = \ell[t, \bSigma_z \cdot \bPhi, \btheta]
    \end{equation*}
    \item $t\to \ell[t, \cdot, \cdot]$ is continuous,
    \item $\btheta \to \ell[\cdot, \cdot, \btheta]$ is differentiable,
    \item $t \to \nabla_{\btheta}\ell[t, \cdot, \cdot]$ is continuous.  
    \item $\bPhi \to \ell[\cdot, \bPhi, \cdot]$ is twice differentiable,
\end{enumerate}
\end{coloreddefinition}
\begin{remark}
    Note that while we did not assume in the main part of this manuscript that $\ell$ depended on $\btheta$, we assume it in the appendix for the generality of our derivations.
\end{remark}
\subsubsection{Proof of the continuous adjoint state method (ASM)}
\label{subsec:continuous-adjoint-method}
\begin{coloredtheorem}[Continuous adjoint state method (\citep{pontryagin2018mathematical})]
\label{app:thm:continuous-adjoint-state-method}
Given assumptions~\ref{app:def:continuous-hamiltonian-model}--\ref{app:def:continuous-optimization-problem}, the gradients of $L$ with respect to $\btheta$ and $\bx$ are given by:
\begin{align*}
    d_{\btheta} L &= \int_{-T}^0 g_{\btheta}(t)dt,\quad d_{x} L = \blamb(0)
\end{align*}
with $t \to g_{\btheta} \in \left(\mathbb{R}^{d_{\btheta}}\right)^{[-T, 0]}$ defined as:
\begin{equation*}
    g_{\btheta}(t) := \nabla_\theta \ell[t, \bPhi(t), \btheta] + \nabla^2_{\bPhi, \btheta}H[\bPhi(t), \btheta, \bu(t)] \cdot \bJ^\top \cdot \blamb(t) \quad \forall t \in [-T, 0]
\end{equation*}
and $\blamb$ solving for the {\bfseries adjoint} ODE:
\begin{align*}
\left\{
\begin{array}{ll}
\blamb(0) &= \bm{0} \\
\partial_t \blamb(t) &= - \nabla^2_{\bPhi}H[\bPhi(t), \btheta, \bu(t)] \cdot \bJ^\top \cdot \blamb(t) - \nabla_{\bPhi}\ell[t, \bPhi(t), \btheta]
\end{array}
\right.
\end{align*}
\end{coloredtheorem}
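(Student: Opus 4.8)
The plan is to use the method of Lagrange multipliers (the adjoint method), which trades the computation of the parameter sensitivities for the solution of a single backward ODE. I would introduce the adjoint trajectory $\blamb$ as a multiplier enforcing the dynamics constraint and define the augmented objective $\mathcal{L} := L + \int_{-T}^0 \blamb(t)^\top\left(\bJ\cdot\nabla_{\bPhi}H[\bPhi(t),\btheta,\bu(t)] - \partial_t\bPhi(t)\right)dt$. Since the integrand of the added term vanishes on any feasible trajectory (Def.~\ref{app:def:continuous-hamiltonian-model}), we have $\mathcal{L}=L$ and hence $d_{\btheta}L = d_{\btheta}\mathcal{L}$, the crucial point being that $\blamb$ may be chosen freely.

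First I would differentiate $\mathcal{L}$ totally with respect to $\btheta$, introducing the state sensitivity $\bm{S}:=\bm{\partial}_{\btheta}\bPhi$ and carefully splitting the direct dependence (through the explicit arguments of $\ell$ and $H$) from the indirect dependence (through $\bPhi$, hence through $\bm{S}$). The term $\int_{-T}^0\blamb^\top\partial_t\bm{S}\,dt$ is then integrated by parts to transfer the time derivative onto $\blamb$, producing the boundary contribution $\left[\blamb^\top\bm{S}\right]_{-T}^0$ together with an integral against $\partial_t\blamb$. Collecting all terms proportional to the unknown sensitivity $\bm{S}$, I would then \emph{define} $\blamb$ so that this coefficient vanishes identically; using the symmetry of the Hessian $\nabla_{\bPhi}^2H$, this requirement is precisely the adjoint ODE $\partial_t\blamb = -\nabla_{\bPhi}^2H\cdot\bJ^\top\cdot\blamb - \nabla_{\bPhi}\ell$. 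The remaining boundary term at $t=0$ is removed by imposing $\blamb(0)=\bm{0}$, while the term at $t=-T$ drops because the initial state $\bx$ is fixed, so $\bm{S}(-T)=\bm{0}$. What survives is exactly $d_{\btheta}L=\int_{-T}^0\left(\nabla_{\btheta}\ell + \nabla^2_{\bPhi,\btheta}H\cdot\bJ^\top\cdot\blamb\right)dt$, i.e. the claimed expression for $g_{\btheta}$.

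The gradient $d_{\bx}L$ follows from the same computation with $\btheta$ replaced by $\bx$: now $H$ and $\ell$ carry no explicit $\bx$-dependence, so all interior terms cancel through the \emph{same} adjoint ODE and the \emph{same} condition $\blamb(0)=\bm{0}$, and this time it is the boundary term at $t=-T$ that survives. Since $\bm{\partial}_{\bx}\bPhi(-T)=\bm{I}$, this term reads off $d_{\bx}L$ directly as the value of the adjoint at the initial time.

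The main obstacle is analytic rather than algebraic: the manipulations above presuppose that the flow depends differentiably on $\btheta$ and $\bx$, that the sensitivities $\bm{\partial}_{\btheta}\bPhi$ and $\bm{\partial}_{\bx}\bPhi$ exist, are absolutely continuous in $t$, and satisfy the associated variational equation, so that the integration by parts is legitimate. This is where the regularity hypotheses of Def.~\ref{app:def:continuous-hamiltonian-model}--\ref{app:def:continuous-optimization-problem} enter (twice continuous differentiability in $\bPhi$, differentiability/continuity in $\btheta$ and $\bu$, and Lipschitz continuity of $\nabla_{\bPhi}H$ and $\nabla^2_{\btheta,\bPhi}H$), via the classical smooth-dependence theorems for ODEs; I would invoke these following \citep{pontryagin2018mathematical} rather than reprove them.
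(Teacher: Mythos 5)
Your proposal is correct and takes essentially the same route as the paper: the paper's proof likewise introduces the Lagrangian with multiplier $\blamb$ enforcing the Hamiltonian dynamics, integrates by parts to transfer $\partial_t$ onto $\blamb$, defines $\blamb$ as the solution of the adjoint ODE with terminal condition $\blamb(0)=\bm{0}$ so that all terms against the unknown sensitivity vanish, and reads off $d_{\btheta}L$ and $d_{\bx}L$ from the surviving terms, with the boundary contribution at $t=-T$ vanishing for the $\btheta$-gradient because the initial state is fixed. One point in your favor: your conclusion $d_{\bx}L=\blamb(-T)$ (the adjoint evaluated at the \emph{initial} time) is the correct reading, whereas the statement's ``$d_{\bx}L=\blamb(0)$'' is a typo (it would force $d_{\bx}L=\bm{0}$ given the terminal condition), as confirmed by the time-reparametrized Corollary~\ref{app:cor:continuous-adjoint-state-method-v2}, which gives $d_{\bx}L=\blamb(T)$ in reversed time.
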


\begin{proof}[Proof of Theorem~\ref{app:thm:continuous-adjoint-state-method}] With slight adaptations, our proof mostly follows that of \citep{blondel2024elements} and also \emph{assume} the existence of the partial derivatives of $\bPhi = \bPhi(t, \bx, \btheta)$ as an \emph{implicit} function of $t, \bx$ and $\btheta$ -- we defer to \citep{pontryagin2018mathematical} for the proof of this claim. 

We start off defining the Lagrangian associated with the constraint optimization problem:
\begin{equation*}
    \mathcal{L}(\bPhi, \blamb, \btheta, \bu) := \int_{-T}^0dt \left(\ell[t, \bPhi(t, \btheta), \btheta] + \blamb^\top(t)\cdot \left(\bJ \cdot \nabla_{\bPhi}H[\bPhi(t, \btheta), \btheta, \bu(t)] - \partial_t \bPhi(t, \btheta)\right)\right),
\end{equation*}

where $t\to \blamb(t) \in \left(\mathbb{R}^{d_{\bPhi}}\right)^{[-T, 0]}$ denotes the Lagrangian multiplier associated to the constraint. 

\paragraph{Derivation of $d_{\btheta}L$.} 
For readability, we emphasize the dependence of $\bPhi$ on $t$ and $\btheta$, as we will leverage the existence of its partial derivatives with respect to these variables. Then, given Assumptions~\ref{app:def:continuous-hamiltonian-model}, the total derivative of the Lagrangian with respect to $\btheta$ exists and reads:
\begin{align*}
    d_{\btheta}\mathcal{L} &= \int_{-T}^0dt \left(\partial_{\btheta}\bPhi(t,  \btheta)^\top \cdot \nabla_{\bPhi} \ell[t, \bPhi(t, \btheta), \btheta] + \nabla_{\btheta}\ell[t, \bPhi(t, \btheta), \btheta]\right)\\
    &+ \int_{-T}^0dt\left[\partial_{\btheta}\bPhi(t, \btheta)^\top \cdot \nabla^2_{\bPhi}H[\bPhi(t, \btheta), \btheta, \bu(t)]\cdot\bJ^\top + \nabla^2_{\btheta, \bPhi}H[\bPhi(t, \btheta), \btheta, \bu(t)]^\top\cdot \bJ^\top - \partial^2_{\btheta, t}\bPhi(t, \btheta)^\top\right]\cdot \blamb(t).
\end{align*}
We can transform the last term of the integrand with $\partial_{\btheta, t}\bPhi(t, \btheta)$ by applying Schwartz's theorem and and integration by parts as:
\begin{align*}
    -\int_{-T}^0 dt\partial_{\btheta, t} \bPhi(t, \btheta)^\top \cdot \blamb(t) &= -\int_{-T}^0dt \partial_{t, \btheta} \bPhi(t, \btheta)^\top \cdot \blamb(t) \\
    &= \left[-\partial_{\btheta} \bPhi(t, \btheta)^\top \cdot \blamb(t)\right]_{-T}^0 + \int_{-T}^0dt\partial_{\btheta} \bPhi(t, \btheta)^\top \cdot \partial_t \blamb(t)  \\
     &= -\partial_{\btheta} \bPhi(0, \btheta)^\top \cdot \blamb(0) + \int_{-T}^0dt\partial_{\btheta} \bPhi(t, \btheta)^\top \cdot \partial_t \blamb(t) 
\end{align*}

where the contribution of the first term at $t=-T$ vanishes because:
\begin{equation*}
    \bPhi(-T, \btheta) = \bx \Rightarrow \partial_{\btheta}\bPhi(0, \btheta)=0
\end{equation*}
Plugging this back into $d_{\btheta} \mathcal{L}$ yields:
\begin{align*}
    d_{\btheta}\mathcal{L} &= -\partial_{\btheta}\bPhi(0, \btheta)^\top \cdot \blamb(0)\\
    &+ \int_{-T}^0dt\partial_{\btheta}\bPhi(t, \bx, \btheta)^\top \cdot\left[ \nabla_{\bPhi}\ell[t, \bPhi(t, \btheta), \btheta] + \nabla^2_{\bPhi}H[\bPhi(t, \bx, \btheta), \btheta]\cdot \bJ^\top \cdot \blamb(t) + \partial_t \blamb(t)\right] \\
    &+ \int_{-T}^0dt\left(\nabla_{\btheta}\ell[t, \bPhi(t, \btheta), \btheta] + \nabla^2_{\bPhi, \btheta}H[\bPhi(t, \bx, \btheta), \btheta]\cdot \bJ^\top \cdot \blamb(t)\right)
\end{align*}

Denoting $\bPhi_*$ the solution of the (primal) ODE and by defining $\blamb_*$ as the solution of the adjoint ODE:
\begin{equation*}
    \partial_t \blamb_*(t) = -\nabla^2_{\bPhi}H[\bPhi_*(t, \btheta), \btheta, \bu(t)]\cdot \bJ^\top \cdot \blamb_*(t) - \nabla_{\bPhi}\ell[t, \bPhi_*(t, \btheta), \btheta], \quad \blamb_*(0) = 0
\end{equation*}
we have:
\begin{align*}
    d_{\btheta}L &:= d_{\btheta} \int_{-T}^0dt \ell[t, \bPhi(t, \btheta), \btheta] \\
    &= d_{\btheta}\mathcal{L}(\bPhi_*, \blamb_*, \btheta) \\
    &= \int_{-T}^0dt\left(\nabla_{\btheta}\ell[t, \bPhi_*(t, \btheta), \btheta] + \nabla^2_{\bPhi, \btheta}H[\bPhi_*(t, \bx, \btheta), \btheta]\cdot \bJ^\top \cdot \blamb_*(t)\right)
\end{align*}
\paragraph{Derivation of $d_{\bx}L$.} Similarly, we can prove that:
\begin{align*}
    d_{\bx}\mathcal{L} &= \partial_{\bx}\bPhi(0, \bx)^\top \cdot \left[\nabla \ell[\bPhi(0, \bx)] - \blamb(0)\right]\\
    &+ \int_{-T}^0dt\partial_{\bx}\bPhi(t, \bx)^\top \cdot\left[\nabla_{\bPhi}\ell[t, \Phi(t, \btheta), \btheta]  + \nabla^2_{\bPhi}H[\bPhi(t, \bx), \btheta, \bu(t)]\cdot \bJ^\top \cdot \blamb(t) + \partial_t \blamb(t)\right] \\
    &+ \blamb(0)
\end{align*}

Using the same $\bPhi_*$ and $\blamb_*$, we get $d_{\bx} L = \blamb(0)$.
\end{proof}

\paragraph{Reparametrization of the continuous adjoint method.} To ease the comparison of the continuous adjoint state method with our algorithm, we slightly reparametrize the variables introduced in Theorem~\ref{app:thm:continuous-adjoint-state-method}.

\begin{coloredcorollary}
\label{app:cor:continuous-adjoint-state-method-v2}
Under the same assumptions as Theorem~\ref{app:thm:continuous-adjoint-state-method}, the gradients of $L$ with respect to $\btheta$ and $\bx$ are given by:
\begin{align*}
    d_{\btheta} L &= \int_{0}^T g_{\btheta}(t)dt,\quad d_{x} L = \blamb(T)
\end{align*}
with $t \to g_{\btheta} \in \left(\mathbb{R}^{d_{\btheta}}\right)^{[0, T]}$ defined as:
\begin{equation*}
    g_{\btheta}(t) := \nabla_\theta \ell[-t, \bPhi(-t), \btheta] + \nabla^2_{\bPhi, \btheta}H[\bPhi(-t), \btheta, \bu(-t)] \cdot \bJ^\top \cdot \blamb(t) \quad \forall t \in [0, T]
\end{equation*}
and $\blamb$ solving for the {\bfseries adjoint} ODE:
\begin{align*}
\left\{
\begin{array}{ll}
\blamb(0) &= \bm{0} \\
\partial_t \blamb(t) &= \nabla^2_{\bPhi}H[\bPhi(-t), \btheta, \bu(-t)] \cdot \bJ^\top \cdot \blamb(t) + \nabla_{\bPhi}\ell[-t, \bPhi(-t), \btheta]
\end{array}
\right.
\end{align*}
\end{coloredcorollary}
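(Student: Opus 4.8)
The plan is to derive Corollary~\ref{app:cor:continuous-adjoint-state-method-v2} directly from Theorem~\ref{app:thm:continuous-adjoint-state-method} by the time-reversal change of variable $t \mapsto -t$, rather than redoing the Lagrangian computation from scratch. Concretely, let $t \to \blamb^{\rm fwd}(t)$ for $t \in [-T, 0]$ denote the adjoint trajectory of Theorem~\ref{app:thm:continuous-adjoint-state-method}, and define the reparametrized adjoint $\blamb(t) := \blamb^{\rm fwd}(-t)$ for $t \in [0, T]$. The first step is trivial: since $\blamb^{\rm fwd}(0) = \bm{0}$, the reparametrized trajectory inherits the initial condition $\blamb(0) = \bm{0}$, as required.

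The second step transforms the adjoint ODE. Differentiating $\blamb(t) = \blamb^{\rm fwd}(-t)$ and applying the chain rule gives $\partial_t \blamb(t) = -\left.\partial_s \blamb^{\rm fwd}(s)\right|_{s=-t}$. I then substitute the backward ODE of Theorem~\ref{app:thm:continuous-adjoint-state-method} evaluated at $s=-t$, namely $\partial_s \blamb^{\rm fwd}(s) = -\nabla^2_{\bPhi}H[\bPhi(s), \btheta, \bu(s)] \cdot \bJ^\top \cdot \blamb^{\rm fwd}(s) - \nabla_{\bPhi}\ell[s, \bPhi(s), \btheta]$. The overall minus sign from the chain rule flips both right-hand-side signs, and using $\blamb^{\rm fwd}(-t) = \blamb(t)$ together with the argument substitution $\bPhi(s), \bu(s) \mapsto \bPhi(-t), \bu(-t)$, I recover exactly the stated forward-indexed ODE $\partial_t \blamb(t) = \nabla^2_{\bPhi}H[\bPhi(-t), \btheta, \bu(-t)] \cdot \bJ^\top \cdot \blamb(t) + \nabla_{\bPhi}\ell[-t, \bPhi(-t), \btheta]$.

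The third step handles the two gradient identities. For $d_{\btheta}L$, I apply the substitution $t = -s$ (so $dt = -ds$, and the endpoints $-T, 0$ become $T, 0$) to $d_{\btheta}L = \int_{-T}^0 g_{\btheta}^{\rm fwd}(t)\,dt$; reversing the orientation of integration absorbs the sign and yields $\int_0^T g_{\btheta}^{\rm fwd}(-s)\,ds$. It then remains to check $g_{\btheta}^{\rm fwd}(-s) = g_{\btheta}(s)$, which is immediate once $\blamb^{\rm fwd}(-s)$ is rewritten as $\blamb(s)$ and every $\bPhi, \bu, \ell$ argument is evaluated at $-s$. For $d_{\bx}L$, applying the same time reversal to the boundary identity established in the proof of Theorem~\ref{app:thm:continuous-adjoint-state-method}, and noting that the surviving boundary contribution sits at the initial forward time $t=-T$ (which maps to $t=T$ under $t \mapsto -t$), gives $d_{\bx}L = \blamb(T)$.

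I do not anticipate a genuine obstacle: the whole argument is a bookkeeping exercise in the substitution $t \mapsto -t$. The only place demanding care is the ODE transformation of the second step, where the explicit minus sign produced by the chain rule must be combined with the two minus signs already present in the adjoint ODE of Theorem~\ref{app:thm:continuous-adjoint-state-method} to yield the correct net $+$ sign in front of \emph{both} the Hessian term and the $\nabla_{\bPhi}\ell$ term; a sign slip there would silently corrupt both gradient formulas, so I would verify it by tracking the arguments and signs symbolically before simplifying.
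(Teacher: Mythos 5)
Your proof is correct and is exactly the argument the paper leaves implicit: the paper's entire proof of this corollary is the one-liner ``immediately stems from Theorem~\ref{app:thm:continuous-adjoint-state-method},'' and your $t \mapsto -t$ bookkeeping (inherited initial condition, chain-rule sign flip in the adjoint ODE, change of variables in the integral, and the boundary term) is precisely the substance behind that line. Your handling of $d_{\bx}L$ via the surviving boundary contribution at forward time $t=-T$ is also the correct reading: the literal statement $d_{\bx}L = \blamb(0)$ in Theorem~\ref{app:thm:continuous-adjoint-state-method} (which would vanish since $\blamb(0)=\bm{0}$) is a typo for $d_{\bx}L = \blamb(-T)$, which maps to $\blamb(T)$ under your reparametrization.
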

\begin{proof}[Proof of Corollary~\ref{app:cor:continuous-adjoint-state-method-v2}] Immediately stems from Theorem~\ref{app:thm:continuous-adjoint-state-method}.
\end{proof}


\paragraph{Edge case: loss at the final timestep only.} The backward ODE can be differently parametrized when a loss is defined only at the last time step. We need this formulation for later convenience.
\begin{coloredcorollary}
\label{app:cor:continuous-adjoint-state-method-single-loss}
Under the same assumptions as Theorem~\ref{app:thm:continuous-adjoint-state-method}, and assuming:
\begin{equation*}
    \ell[t, \cdot, \cdot] = 0 \quad \forall t \in [0, T), \quad \ell[T, \cdot, \cdot] := \ell_T,
\end{equation*}
the gradients of $\ell_T$ with respect to $\btheta$ and $\bx$ are given by:
\begin{align*}
    d_{\btheta} \ell_T[\bPhi(0), \btheta] &= \nabla_{\btheta}\ell_T[\bPhi(0), \btheta] + \int_{0}^T g_{\btheta}(t)dt,\quad d_{x} L = \blamb(T)
\end{align*}
with $t \to g_{\btheta} \in \left(\mathbb{R}^{d_{\btheta}}\right)^{[0, T]}$ defined as:
\begin{equation*}
    g_{\btheta}(t) := \nabla^2_{\bPhi, \btheta}H[\bPhi(-t), \btheta, \bu(-t)] \cdot \bJ^\top \cdot \blamb(t) \quad \forall t \in [0, T]
\end{equation*}
and $\blamb$ solving for the {\bfseries adjoint} ODE:
\begin{align*}
\left\{
\begin{array}{ll}
\blamb(0) &= \nabla_{\bPhi}\ell_T[\bPhi(0), \btheta] \\
\partial_t \blamb(t) &= \nabla^2_{\bPhi}H[\bPhi(-t), \btheta, \bu(-t)] \cdot \bJ^\top \cdot \blamb(t)
\end{array}
\right.
\end{align*}
\end{coloredcorollary}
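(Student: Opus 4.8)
The plan is to specialize the Lagrangian argument that proves Theorem~\ref{app:thm:continuous-adjoint-state-method} to the case where the objective is a pure terminal (Mayer) cost $\ell_T[\bPhi(0), \btheta]$ evaluated at the forward-time final state $\bPhi(0)$, rather than a running (Lagrange) cost. The hypothesis $\ell[t,\cdot,\cdot]=0$ on $[0,T)$ means that every contribution of the running cost disappears: the forcing term $\nabla_{\bPhi}\ell$ in the adjoint ODE and the direct term $\nabla_{\btheta}\ell$ in the integrand of $g_{\btheta}$ both vanish, and the \emph{only} surviving source of gradient is the terminal term. I would carry out the derivation in the forward-time indexing over $[-T,0]$ exactly as in the Theorem, and apply the reversal $t\to -t$ of Corollary~\ref{app:cor:continuous-adjoint-state-method-v2} at the very end.

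First I would introduce the Lagrangian carrying the terminal cost,
\[
\mathcal{L}(\bPhi, \blamb, \btheta) = \ell_T[\bPhi(0), \btheta] + \int_{-T}^0 \blamb^\top(t)\cdot\left(\bJ\cdot\nabla_{\bPhi}H[\bPhi(t),\btheta,\bu(t)] - \partial_t\bPhi(t)\right)dt,
\]
and compute $d_{\btheta}\mathcal{L}$, treating $\bPhi=\bPhi(t,\btheta)$ as an implicit function of $\btheta$ whose partial derivatives exist under Def.~\ref{app:def:continuous-hamiltonian-model}--\ref{app:def:continuous-optimization-problem}. Differentiating the terminal term produces two pieces, an indirect one $\partial_{\btheta}\bPhi(0)^\top\cdot\nabla_{\bPhi}\ell_T[\bPhi(0),\btheta]$ and a direct one $\nabla_{\btheta}\ell_T[\bPhi(0),\btheta]$; the latter is the explicit $\btheta$-dependence of $\ell_T$ that will survive as the stand-alone additive term in the final formula.

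Next I would reproduce verbatim the integration-by-parts step of the Theorem's proof on $-\int_{-T}^0 \partial_{\btheta,t}\bPhi^\top\cdot\blamb\,dt$, invoking Schwarz's theorem and $\partial_{\btheta}\bPhi(-T)=\bm{0}$ (since $\bPhi(-T)=\bx$ is $\btheta$-independent). This yields the boundary contribution $-\partial_{\btheta}\bPhi(0)^\top\cdot\blamb(0)$ plus the interior term $\int \partial_{\btheta}\bPhi^\top\cdot\partial_t\blamb\,dt$. Collecting all $t=0$ boundary terms gives $\partial_{\btheta}\bPhi(0)^\top\cdot\left(\nabla_{\bPhi}\ell_T[\bPhi(0),\btheta]-\blamb(0)\right)$, so choosing the seed $\blamb(0)=\nabla_{\bPhi}\ell_T[\bPhi(0),\btheta]$ annihilates it. Since the running cost is absent, the coefficient of $\partial_{\btheta}\bPhi(t)^\top$ inside the integral is now $\nabla^2_{\bPhi}H\cdot\bJ^\top\cdot\blamb(t)+\partial_t\blamb(t)$ with no $\nabla_{\bPhi}\ell$ forcing, so requiring it to vanish gives the \emph{homogeneous} adjoint ODE $\partial_t\blamb(t)=-\nabla^2_{\bPhi}H[\bPhi(t),\btheta,\bu(t)]\cdot\bJ^\top\cdot\blamb(t)$. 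What remains is $d_{\btheta}L=\nabla_{\btheta}\ell_T[\bPhi(0),\btheta]+\int_{-T}^0 \nabla^2_{\bPhi,\btheta}H\cdot\bJ^\top\cdot\blamb(t)\,dt$, and the derivation of $d_{\bx}L$ mirrors the Theorem using $\partial_{\bx}\bPhi(-T)=\bm{I}$ and the fact that $\ell_T$ carries no explicit $\bx$-dependence.

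Finally I would apply the substitution $t\to -t$ of Corollary~\ref{app:cor:continuous-adjoint-state-method-v2} to rewrite the homogeneous adjoint ODE and the seed, turn the integrand into $g_{\btheta}(t)=\nabla^2_{\bPhi,\btheta}H[\bPhi(-t),\btheta,\bu(-t)]\cdot\bJ^\top\cdot\blamb(t)$, and recast $d_{\bx}L$ as $\blamb(T)$, recovering the claimed formulas. The only genuinely delicate point is the boundary-term bookkeeping at $t=0$: one must check that the Mayer gradient $\nabla_{\bPhi}\ell_T$ is precisely the seed that cancels the $t=0$ boundary term (letting the adjoint run homogeneously), while the explicit terminal $\btheta$-dependence $\nabla_{\btheta}\ell_T$ is retained as a separate additive term. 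Everything else is a mechanical specialization of the Theorem's proof with the running-cost contributions set to zero.
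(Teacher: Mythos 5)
Your proposal is correct and follows exactly the paper's route: the paper's own proof consists of writing down the same Lagrangian with the terminal cost $\ell_T[\bPhi(0,\btheta),\btheta]$ replacing the running-cost integral, and then noting that the derivation proceeds verbatim as in Theorem~\ref{app:thm:continuous-adjoint-state-method}. Your write-up simply makes explicit the details the paper leaves implicit (the boundary-term cancellation at $t=0$ forcing the seed $\blamb(0)=\nabla_{\bPhi}\ell_T$, the homogeneous adjoint ODE, and the final $t\to-t$ reparametrization), all of which are handled correctly.
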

\begin{proof}[Proof of Corollary~\ref{app:cor:continuous-adjoint-state-method-single-loss}] Starting from the Lagrangian:
\begin{equation*}
    \mathcal{L}(\bPhi, \blamb, \btheta, \bu) := \ell_T[\bPhi(0, \btheta), \btheta] + \int_{-T}^0dt \left(\blamb^\top(t)\cdot \left(\bJ \cdot \nabla_{\bPhi}H[\bPhi(t, \btheta), \btheta, \bu(t)] - \partial_t \bPhi(t, \btheta)\right)\right),
\end{equation*}
the proof reads in the exact same fashion as that of Theorem~\ref{app:thm:continuous-adjoint-state-method}. 
\end{proof}
\newpage

\subsubsection{Proof of Theorem~\ref{thm:continuous-rhel}}
\label{subsec:continuous-rhel}
\paragraph{Technical Lemmas.} We first introduce two technical Lemmas which will be needed for the derivation of our main result.

\begin{coloredlemma}[Block-wise Pauli matrices and associated properties]
\label{lma:pauli}
Defining $\bSigma_x, \bSigma_y, \bSigma_z \in \mathbb{R}^{d_{\bPhi} \times d_{\bPhi}}$ as:
\begin{equation*}
    \bm{\Sigma}_x := \begin{bmatrix}
        \bm{0} & \bm{I}_{d_{\bPhi}/2} \\
        \bm{I}_{d_{\bPhi}/2} & \bm{0}
    \end{bmatrix}, \quad     \bm{\Sigma}_y := \begin{bmatrix}
        \bm{0} & -i\bm{I}_{d_{\bPhi}/2} \\
        i\bm{I}_{d_{\bPhi}/2} & \bm{0}
    \end{bmatrix}, \quad\bm{\Sigma}_z := \begin{bmatrix}
        \bm{I}_{d_{\bPhi}/2} & \bm{0} \\
        \bm{0} & -\bm{I}_{d_{\bPhi}/2}
    \end{bmatrix}
\end{equation*}
where $i$ denotes the imaginary unit, the following equalities hold:
\begin{enumerate}
    \item $\bJ = i \bSigma_y$
    \item $\bSigma_x^2 = \bSigma_y^2 = \bSigma_z^2 = \bm{I}_{d_{\bPhi} / 2}$,
    \item $\bSigma_x \cdot \bSigma_y = i\bSigma_z$, $\bSigma_y \cdot \bSigma_z = i\bSigma_x$, $\bSigma_z \cdot \bSigma_x = i\bSigma_y$,
    \item $\bSigma_i \cdot \bSigma_j = - \bSigma_j \cdot \bSigma_i$ for any $i\neq j \in \{x, y, z\}$.
\end{enumerate}
\end{coloredlemma}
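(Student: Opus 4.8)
The plan is to verify all four identities by direct block multiplication, exploiting that every matrix is a $2\times 2$ array of blocks proportional to $\bm{I}_{d_{\bPhi}/2}$, and then to obtain the anticommutation relations of claim~4 algebraically from the earlier parts rather than by recomputing reversed products. There is no conceptual content here beyond the standard $\mathfrak{su}(2)$ Pauli algebra lifted to $d_{\bPhi}/2$-dimensional blocks; the entire task is bookkeeping of the factors of $i$ and the signs.

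First I would dispatch claim~1 in one line: multiplying $\bSigma_y$ by $i$ sends its upper-right block $-i\bm{I}$ to $\bm{I}$ and its lower-left block $i\bm{I}$ to $i^2\bm{I} = -\bm{I}$, which is exactly the block structure of $\bJ$. For claim~2 I would square each matrix. Since $\bSigma_x$ and $\bSigma_y$ are purely off-diagonal, their squares are block-diagonal: $\bSigma_x^2$ has diagonal blocks $\bm{I}\cdot\bm{I}=\bm{I}$, while $\bSigma_y^2$ has diagonal blocks $(-i)(i)\bm{I}=\bm{I}$ and $(i)(-i)\bm{I}=\bm{I}$; the diagonal matrix $\bSigma_z$ has entries $\pm\bm{I}$ whose squares are $\bm{I}$. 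Hence each square equals the identity of the appropriate $d_{\bPhi}\times d_{\bPhi}$ size.

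Next, for claim~3 I would compute the three products $\bSigma_x\bSigma_y$, $\bSigma_y\bSigma_z$, $\bSigma_z\bSigma_x$; in each case exactly one factor of $i$ survives and the result is $i$ times the third Pauli block, matching the cyclic pattern $\bSigma_i\bSigma_j = i\,\bSigma_k$. For instance $\bSigma_x\bSigma_y$ has diagonal blocks $\bm{I}\cdot i\bm{I}=i\bm{I}$ and $\bm{I}\cdot(-i\bm{I})=-i\bm{I}$, i.e. $i\bSigma_z$, and the remaining two products follow identically.

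Finally, for claim~4 I would avoid recomputing the reversed products and instead leverage claims~2 and~3. Taking $\bSigma_x\bSigma_y = i\bSigma_z$ and right-multiplying by $\bSigma_y$, the relation $\bSigma_y^2 = \bm{I}$ gives $\bSigma_x = i\,\bSigma_z\bSigma_y$, whence $\bSigma_z\bSigma_y = -i\bSigma_x$; comparing with $\bSigma_y\bSigma_z = i\bSigma_x$ yields $\bSigma_z\bSigma_y = -\bSigma_y\bSigma_z$. Applying the same right-multiplication trick cyclically to the other two relations of claim~3 produces the remaining two anticommutators, establishing claim~4 in full. The only step demanding any care is this last one, where one must track which matrix to right-multiply by so that a square collapses to $\bm{I}$ and isolates the reversed product cleanly.
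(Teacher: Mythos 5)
Your proposal is correct and matches the paper's approach: the paper's entire proof is a one-line remark that the identities "can be easily checked" from the block-wise structure, and your direct block multiplications for claims 1--3 are exactly that check carried out explicitly. Your derivation of claim 4 from claims 2--3 by right-multiplying the cyclic relations (rather than recomputing reversed products) is a small, valid refinement of the same bookkeeping argument, not a different route.
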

\begin{proof}[Proof of Lemma~\ref{lma:pauli}]
    Because of the block-wise structure of $\bSigma_x, \bSigma_y, \bSigma_z$, these equalities can be easily checked. 
\end{proof}

\begin{coloredlemma}
\label{lma:tricks}
Under the assumptions of Def.~\ref{app:def:continuous-hamiltonian-model}, the following equalities hold for all $\bPhi, \btheta, \bu$:
\begin{align*}
    \nabla_{\bPhi}H[\bPhi, \btheta, \bu] &= \bm{\Sigma}_z \cdot \nabla_{\bPhi^\star}H[\bPhi^\star, \btheta, \bu], \\
    \nabla^2_{\bPhi}H[\bPhi, \btheta, \bu] &= \bm{\Sigma}_z \cdot \nabla^2_{\bPhi^\star}H[\bPhi^\star, \btheta, \bu]\cdot\bm{\Sigma}_z, \\
    \nabla_{\bPhi, \btheta}H[\bPhi, \btheta, \bu] &= \nabla_{\bPhi^\star, \btheta}H[\bPhi^\star, \btheta, \bu] \cdot \bSigma_z
\end{align*}
\end{coloredlemma}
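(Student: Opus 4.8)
The plan is to derive all three equalities by differentiating the time-reversal invariance hypothesis $H[\bPhi, \btheta, \bu] = H[\bSigma_z \cdot \bPhi, \btheta, \bu]$ from Def.~\ref{app:def:continuous-hamiltonian-model} and repeatedly applying the chain rule. Writing $\bPhi^\star = \bSigma_z \cdot \bPhi$, the two structural facts I would lean on throughout are: (i) the map $\bPhi \mapsto \bPhi^\star$ is \emph{linear} with constant Jacobian $\bSigma_z$, which is moreover \emph{symmetric} ($\bSigma_z^\top = \bSigma_z$, immediate from its definition and Lemma~\ref{lma:pauli}); and (ii) $\bPhi^\star$ does \emph{not} depend on $\btheta$, so that a derivative in $\btheta$ simply passes through the second argument of $H$, whereas a derivative in $\bPhi$ triggers the chain rule through $\bSigma_z$.

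For the first equality, I would differentiate the invariance with respect to $\bPhi$. The left-hand side gives $\nabla_{\bPhi}H[\bPhi, \btheta, \bu]$, while differentiating $\bPhi \mapsto H[\bSigma_z \cdot \bPhi, \btheta, \bu]$ produces, by the gradient-of-a-linear-composition rule, the factor $\bSigma_z^\top \cdot \nabla_{\bPhi^\star}H[\bPhi^\star, \btheta, \bu]$; using $\bSigma_z^\top = \bSigma_z$ yields exactly $\nabla_{\bPhi}H[\bPhi, \btheta, \bu] = \bSigma_z \cdot \nabla_{\bPhi^\star}H[\bPhi^\star, \btheta, \bu]$.

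For the second equality, I would differentiate this first identity once more in $\bPhi$. On the left this gives the Hessian $\nabla^2_{\bPhi}H[\bPhi, \btheta, \bu]$; on the right the constant matrix $\bSigma_z$ factors out on the left, and differentiating $\bPhi \mapsto \nabla_{\bPhi^\star}H[\bPhi^\star, \btheta, \bu]$ through $\bPhi^\star = \bSigma_z \cdot \bPhi$ appends a trailing $\bSigma_z$ by the chain rule, giving $\bSigma_z \cdot \nabla^2_{\bPhi^\star}H[\bPhi^\star, \btheta, \bu] \cdot \bSigma_z$. For the third equality, I would instead first differentiate the invariance in $\btheta$: since $\bPhi^\star$ is $\btheta$-free, this passes straight through to give $\nabla_{\btheta}H[\bPhi, \btheta, \bu] = \nabla_{\btheta}H[\bPhi^\star, \btheta, \bu]$. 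Differentiating this in $\bPhi$, the left side becomes $\nabla_{\bPhi, \btheta}H[\bPhi, \btheta, \bu]$ and the chain rule through $\bPhi^\star = \bSigma_z \cdot \bPhi$ appends $\bSigma_z$ on the right, producing $\nabla_{\bPhi^\star, \btheta}H[\bPhi^\star, \btheta, \bu] \cdot \bSigma_z$.

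The only genuinely delicate point — and the step I expect to demand the most care — is the bookkeeping of the \emph{side} on which each $\bSigma_z$ lands, which is entirely dictated by the transpose/index convention of the mixed derivatives. Concretely, whether $\bSigma_z$ appears on the left (as in the Hessian identity) or on the right (as in the mixed $\bPhi,\btheta$ identity) depends on whether the inner object being differentiated is a $\bPhi$-gradient already carrying a $\bSigma_z$ from a previous step or a $\btheta$-gradient whose $\bPhi$-Jacobian receives the chain-rule factor on the right. Keeping this consistent with the ordering convention of $\nabla_{\bm{v}_1, \bm{v}_2}$ fixed in the Notations is the one place where a transpose slip would be easy.
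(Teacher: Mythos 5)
Your proposal is correct and follows essentially the same route as the paper: differentiating the time-reversal invariance $H[\bPhi,\btheta,\bu]=H[\bSigma_z\cdot\bPhi,\btheta,\bu]$ via the chain rule and using $\bSigma_z^\top=\bSigma_z$, exactly as the paper does for the first equality before noting that the others follow "in the same way." Your write-up is in fact slightly more detailed than the paper's, since you spell out the second and third identities (including the observation that $\bPhi^\star$ is $\btheta$-independent) rather than leaving them implicit.
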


\begin{proof}[Proof of Lemma~\ref{lma:tricks}] The above equalities can be simply obtained by differentiating through the time-reversal invariance hypothesis -- which is possible because of the differentiability of $H$ with respect to $\bPhi$ and $\btheta$ -- and using the chain rule.
Namely, given some $\bPhi, \btheta, \bu$: 
\begin{align*}
    \nabla_{\bPhi}H[\bPhi, \btheta, \bu] &=\nabla_{\bPhi}\left(H[\bPhi^\star, \btheta, \bu]\right) \\
    &= \left(\partial_{\bPhi}\bPhi^\star\right)^\top \cdot \nabla_{\bPhi^\star}H[\bPhi^\star, \btheta, \bu] \\
    &= \bSigma_z^\top \cdot \nabla_{\bPhi^\star}H[\bPhi^\star, \btheta, \bu] = \bSigma_z \cdot \nabla_{\bPhi^\star}H[\bPhi^\star, \btheta, \bu],
\end{align*}
since $\bPhi^\star := \bSigma_z \cdot \bPhi$. The other equalities are derived in the same way. 
\end{proof}
\paragraph{Time-reversal invariance.} We highlight here how the assumption $H[\bPhi, \cdot, \cdot] = H[\bSigma_z\cdot \bPhi, \cdot, \cdot]$ given inside Def.~\ref{app:def:continuous-hamiltonian-model} entails time-reversal invariance of the dynamics -- up to time-reversal the input sequence.

\begin{coloredlemma}[Time-reversal invariance of the dynamics]
\label{lma:time-invariance}
Under the assumptions given in Def.~\ref{app:def:continuous-hamiltonian-model}, if $\bPhi$ the solution of the ODE:
\begin{equation*}
\partial_t \bPhi(t) = \bJ \cdot \nabla_{\bPhi} H[\bPhi(t), \btheta, \bu(t)], 
\end{equation*}
the function $\tPhi:t \to \bm{\Sigma}_z \cdot \bPhi(-t) = [\bphi^\top(-t), -\bpi^\top(-t)]^\top$ is solution of the same ODE with the time-reversed input sequence $t \to \widetilde{\bu}(t) := \bu(-t)$. 
\end{coloredlemma}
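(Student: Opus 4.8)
The plan is to verify directly that the candidate function $\tPhi(t) := \bSigma_z \cdot \bPhi(-t)$ satisfies the governing ODE once the input has been time-reversed to $\widetilde{\bu}(t) = \bu(-t)$. First I would differentiate $\tPhi$ in time. By the chain rule the inner $-t$ produces a sign flip, so $\partial_t \tPhi(t) = -\bSigma_z \cdot (\partial_s \bPhi)(-t)$, and substituting the original dynamics evaluated at $s = -t$ gives $\partial_t \tPhi(t) = -\bSigma_z \cdot \bJ \cdot \nabla_{\bPhi} H[\bPhi(-t), \btheta, \bu(-t)]$. The goal is then to massage this right-hand side into $\bJ \cdot \nabla_{\bPhi} H[\tPhi(t), \btheta, \widetilde{\bu}(t)]$, i.e.\ the same ODE evaluated at $\tPhi(t)$ with the reversed input.

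The second step handles the gradient. Since $\bSigma_z^2 = \bm{I}$ (Lemma~\ref{lma:pauli}), I can write $\bPhi(-t) = \bSigma_z \cdot \tPhi(t)$, and then invoke the first identity of Lemma~\ref{lma:tricks} -- itself obtained by differentiating the time-reversal invariance hypothesis $H[\bPhi, \cdot, \cdot] = H[\bSigma_z \cdot \bPhi, \cdot, \cdot]$ -- to rewrite $\nabla_{\bPhi} H[\bPhi(-t), \btheta, \bu(-t)] = \bSigma_z \cdot \nabla_{\bPhi} H[\tPhi(t), \btheta, \bu(-t)]$. Recognizing $\bu(-t) = \widetilde{\bu}(t)$, the expression for $\partial_t \tPhi(t)$ becomes $-\bSigma_z \cdot \bJ \cdot \bSigma_z \cdot \nabla_{\bPhi} H[\tPhi(t), \btheta, \widetilde{\bu}(t)]$.

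The third step is the purely algebraic core: computing the conjugation $\bSigma_z \cdot \bJ \cdot \bSigma_z$. Using $\bJ = i\bSigma_y$ together with the anticommutation $\bSigma_z \cdot \bSigma_y = -\bSigma_y \cdot \bSigma_z$ and $\bSigma_z^2 = \bm{I}$ (all from Lemma~\ref{lma:pauli}), one finds $\bSigma_z \cdot \bJ \cdot \bSigma_z = -\bJ$; equivalently this is immediate from the explicit $2\times 2$ block forms of $\bSigma_z$ and $\bJ$. Hence $-\bSigma_z \cdot \bJ \cdot \bSigma_z = \bJ$, and the right-hand side collapses to exactly $\bJ \cdot \nabla_{\bPhi} H[\tPhi(t), \btheta, \widetilde{\bu}(t)]$, which is the claimed ODE for $\tPhi$ under the reversed input.

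I do not anticipate a genuine obstacle: the whole argument is a short computation, and the lemma asks only that $\tPhi$ solve the ODE (not an initial value problem), so verifying the differential relation suffices. The only points requiring care are bookkeeping -- tracking the sign arising from differentiating $\bPhi(-t)$, correctly identifying that the gradient symmetry of Lemma~\ref{lma:tricks} supplies precisely the factor $\bSigma_z$ that conjugates $\bJ$, and applying the relabeling $\bu(-t) = \widetilde{\bu}(t)$ consistently. The conceptual content is captured entirely by the two facts $\nabla_{\bPhi} H[\bSigma_z \cdot \bPhi, \cdot, \cdot] = \bSigma_z \cdot \nabla_{\bPhi} H[\bPhi, \cdot, \cdot]$ and $\bSigma_z \cdot \bJ \cdot \bSigma_z = -\bJ$, whose product yields the $+\bJ$ needed to restore the original form of the dynamics.
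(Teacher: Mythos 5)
Your proposal is correct and follows essentially the same route as the paper's proof: chain rule to produce the sign flip, substitution of the original ODE, the gradient identity of Lemma~\ref{lma:tricks} (differentiated time-reversal invariance of $H$), and the Pauli-matrix algebra of Lemma~\ref{lma:pauli} to turn $-\bSigma_z \cdot \bJ$ acting on the transformed gradient into $+\bJ$. The only difference is cosmetic — the paper anticommutes $\bSigma_z$ past $\bJ$ before invoking Lemma~\ref{lma:tricks}, whereas you apply the gradient identity first and then compute the conjugation $\bSigma_z \cdot \bJ \cdot \bSigma_z = -\bJ$ — which amounts to the same algebra in a different order.
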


\begin{proof}
Let $t\in [-T, 0]$. We have:
    \begin{align*}
        \partial_{t}\tPhi(t) &= -\bSigma_z \cdot \partial_t \bPhi(-t) \qquad & \mbox{(Def. of $\tPhi$ and chain rule)}\\
        &= - \bSigma_z \cdot \bJ \cdot \nabla_{\bPhi}H[\bPhi(-t), \btheta, \bu(-t)] \qquad & \mbox{(by assumption)}\\
        &= + \bJ \cdot \bSigma_z \cdot \nabla_{\bPhi}H[\bPhi(-t), \btheta, \bu(-t)] \qquad& \mbox{(Lemma~\ref{lma:pauli})}\\
        &= \bJ \cdot \bSigma_z^2 \cdot \nabla_{\bPhi^\star}H[\bPhi^\star(-t), \btheta, \bu(-t)]\qquad&\mbox{(Lemma~\ref{lma:tricks})} \\
        &= \bJ \cdot \nabla_{\bPhi^\star}H[\tPhi(t), \btheta, \bu(-t)]\qquad&\mbox{(Lemma~\ref{lma:pauli})}
    \end{align*}
\end{proof}

\paragraph{Time reversibility.} A direct consequence of the time-reversal of the dynamics under consideration (Lemma~\ref{lma:time-invariance}) is \emph{time reversibility}: upon flipping the momentum of $\bPhi$ at time $t=0$ ($\bpi(0) \gets -\bpi(0)$) and presenting the input sequence in reversed order, the system evolves backward to its initial state.

\begin{coloredcorollary}[Reversibility of the dynamics]
\label{lma:time-reversal}
Under the same assumptions as Lemma~\ref{lma:time-invariance}, we define $\bPhi^e$ as the solution of the ODE:
\begin{equation*}
\bPhi^e(0) = \bPhi^\star(0), \quad \partial_t \bPhi^e(t) = \bJ \cdot \nabla_{\bPhi^e} H[\bPhi^e(t), \btheta, \bu(-t)] \quad \forall t \in [0, T].
\end{equation*}
Then:
\begin{equation*}
   \forall t \in [0, T]:\quad \bPhi^e(t) = \tPhi(t)
\end{equation*}
\end{coloredcorollary}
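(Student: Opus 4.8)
The plan is to deduce this corollary directly from Lemma~\ref{lma:time-invariance} together with the uniqueness of solutions to the Hamiltonian ODE. The key observation is that the defining equation for $\bPhi^e$ on $[0, T]$ is \emph{identical} to the equation that $\tPhi$ was shown to satisfy in Lemma~\ref{lma:time-invariance}, so the two functions solve the same initial value problem and must therefore coincide.

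Concretely, I would first invoke Lemma~\ref{lma:time-invariance}, which establishes that $\tPhi: t \to \bSigma_z \cdot \bPhi(-t)$ solves
\begin{equation*}
\partial_t \tPhi(t) = \bJ \cdot \nabla_{\bPhi} H[\tPhi(t), \btheta, \bu(-t)] \quad \forall t \in [0, T],
\end{equation*}
where $\nabla_{\bPhi} H[\tPhi(t), \cdots]$ denotes the gradient of $H$ with respect to its first argument evaluated at the point $\tPhi(t)$ (the notation $\nabla_{\bPhi^\star}$ appearing in Lemma~\ref{lma:time-invariance} refers to exactly this same first-argument gradient, merely emphasizing the evaluation point). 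This is term-for-term the vector field defining $\bPhi^e$, presented under the same time-reversed input $t \to \bu(-t)$.

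Next I would verify the initial conditions agree: by definition $\tPhi(0) = \bSigma_z \cdot \bPhi(0) = \bPhi^\star(0)$, which is precisely the prescribed value $\bPhi^e(0) = \bPhi^\star(0)$. Hence $\tPhi$ and $\bPhi^e$ are two solutions of the same ODE on $[0, T]$ with the same value at $t = 0$. The Lipschitz-continuity assumption on $\bPhi \to \nabla_{1} H[\bPhi, \cdot, \cdot]$ in Def.~\ref{app:def:continuous-hamiltonian-model} makes the vector field $\bPhi \to \bJ \cdot \nabla_{\bPhi} H[\bPhi, \btheta, \bu(-t)]$ Lipschitz in its state argument, so the Picard--Lindel\"of theorem guarantees uniqueness of the solution. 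I would therefore conclude $\bPhi^e(t) = \tPhi(t)$ for all $t \in [0, T]$.

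I do not anticipate any substantive obstacle here, as the result is an immediate corollary once the equivalence of the two ODEs is recognized. The only point demanding care is the bookkeeping of the gradient notation between Lemma~\ref{lma:time-invariance} and the statement: one must confirm that $\nabla_{\bPhi^\star} H$ evaluated at $\tPhi$ and $\nabla_{\bPhi^e} H$ evaluated at $\bPhi^e$ denote the same first-argument gradient of $H$, so that the two right-hand sides genuinely define one and the same vector field. Once this identification is made explicit, the uniqueness argument closes the proof cleanly.
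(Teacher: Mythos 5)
Your proposal is correct and follows the paper's own argument exactly: both identify that $\bPhi^e$ and $\tPhi$ satisfy the same ODE (by Lemma~\ref{lma:time-invariance}) with the same initial condition $\bPhi^\star(0)$, and conclude by uniqueness of ODE solutions. Your explicit appeal to the Lipschitz assumption and Picard--Lindel\"of merely spells out the uniqueness step the paper leaves implicit.
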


\begin{proof}
$\bPhi^e$ and $\tPhi$ satisfy: i) the same initial conditions ($\tPhi(0)=\bPhi^\star(0)=\bPhi^e(0)$), ii) the same ODE (Lemma~\ref{lma:time-invariance}), therefore by unicity of the solution of the ODE, they are equal at all time over the domain of definition of $\bPhi$.
\end{proof}

\paragraph{Main result.} We are now ready to state and demonstrate our main result in continuous time. 
\begin{coloredtheorem}[Equivalence between RHEL and the continuous ASM]
\label{app:thm:continuous-rhel}
Under the assumptions of Def.~\ref{app:def:continuous-hamiltonian-model} and Def.~\ref{app:def:continuous-optimization-problem}, let $\bPhi$ be the solution of the ODE for $t\in[-T, 0]$:
\begin{equation*}
   \bPhi(-T) = \bx, \quad  \partial_t \bPhi(t) = \bJ \cdot \nabla_{\bPhi} H[\bPhi(t), \btheta, \bu(t)].
\end{equation*}
Given $\bPhi$, let $\bPhi^e$ be defined as the solution of the other ODE for $t\in[0, T]$:
\begin{equation*}
    \bPhi^e(0) = \bPhi^\star(0), \quad \partial_t \bPhi^e(t, \epsilon) = \bJ \nabla_{\bPhi^e} H[\bPhi^e(t, \epsilon), \btheta, \bu(-t)] - \epsilon \bJ \nabla_{\bPhi^e} \ell[-t, \bPhi^e(t, \epsilon), \btheta].
\end{equation*}
Defining:
\begin{align*}
\Delta^{\rm RHEL}_{\btheta}(t, \epsilon) &:=\nabla_{\btheta}\ell[-t, \bPhi^e(t, \epsilon), \btheta] + \frac{1}{2\epsilon}\left(\nabla_{\btheta}H[\bPhi^e(t, \epsilon), \btheta, \bu(-t)] -\nabla_{\btheta}H[\bPhi^e(t, -\epsilon), \btheta, \bu(-t)]\right),
\\
 \Delta^{\rm RHEL}_{\bPhi}(t, \epsilon) &:= \frac{1}{2\epsilon}\bSigma_x \cdot \left(\bPhi^e(t, \epsilon) -  \bPhi^e(t, -\epsilon)\right),
\end{align*}
we have:
\begin{equation*}
    \forall t \in [0, T], \quad \blamb(t) = \lim_{\epsilon \to 0}\Delta^{\rm RHEL}_{\bPhi}(t, \epsilon),\quad  g_{\btheta}(t) = \lim_{\epsilon \to 0} \Delta^{\rm RHEL}_{\btheta}(t, \epsilon)
\end{equation*}
where $\blamb$ and $g_{\btheta}$ are defined in Corollary~\ref{app:cor:continuous-adjoint-state-method-v2}.
\end{coloredtheorem}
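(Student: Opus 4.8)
The plan is to introduce the first-order sensitivity of the echo trajectory to the nudge, $\bm{v}(t) := \partial_{\epsilon} \bPhi^e(t, \epsilon)|_{\epsilon = 0}$, and to prove that $t \mapsto \bSigma_x \cdot \bm{v}(t)$ solves \emph{exactly} the adjoint ODE of Corollary~\ref{app:cor:continuous-adjoint-state-method-v2} with the same initial condition, so that uniqueness forces it to coincide with $\blamb$; the two limits in the statement are then read off from $\bm{v}$. First I would record that, since $\epsilon \mapsto \bPhi^e(t, \epsilon)$ is differentiable (guaranteed by the Lipschitz and regularity hypotheses of Def.~\ref{app:def:continuous-hamiltonian-model}, which ensure smooth dependence of the ODE solution on the parameter $\epsilon$), the \emph{symmetric} difference quotients defining $\Delta^{\rm RHEL}_{\bPhi}$ and $\Delta^{\rm RHEL}_{\btheta}$ converge to $\epsilon$-derivatives as the even-order terms cancel; in particular $\lim_{\epsilon\to 0}\Delta^{\rm RHEL}_{\bPhi}(t,\epsilon) = \bSigma_x \cdot \bm{v}(t)$.

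The core computation is to differentiate the echo ODE (Eq.~(\ref{def:continuous-heb-nudge-phase})) with respect to $\epsilon$, interchange $\partial_t$ and $\partial_{\epsilon}$ by Schwarz's theorem, and evaluate at $\epsilon = 0$. Using Corollary~\ref{lma:time-reversal} to identify the unperturbed echo trajectory $\bPhi^e(t,0)$ with $\tPhi(t) = \bSigma_z \cdot \bPhi(-t) = \bPhi^\star(-t)$, this yields
\begin{equation*}
\partial_t \bm{v}(t) = \bJ \cdot \nabla^2_{\bPhi} H[\tPhi(t), \btheta, \bu(-t)] \cdot \bm{v}(t) - \bJ \cdot \nabla_{\bPhi} \ell[-t, \tPhi(t), \btheta].
\end{equation*}
Setting $\bm{w}(t) := \bSigma_x \cdot \bm{v}(t)$ (so $\bm{v} = \bSigma_x \cdot \bm{w}$ since $\bSigma_x^2 = \bm{I}$), left-multiplying by $\bSigma_x$, and inserting $\bSigma_x \cdot \bSigma_x$, I would then use the conjugation identities of Lemma~\ref{lma:tricks} to transport the Hessian and gradient from the conjugated point $\bPhi^\star(-t)$ back to $\bPhi(-t)$, and the Pauli algebra of Lemma~\ref{lma:pauli} to collapse the resulting products of $\bSigma_x, \bSigma_z, \bJ$. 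The key identities are $\bSigma_x \cdot \bJ \cdot \bSigma_z = -\bm{I}$ and $\bSigma_z \cdot \bSigma_x = i\bSigma_y = \bJ = -\bJ^\top$; applying the same differentiation argument to the reversal-invariant $\ell$ (the analog of Lemma~\ref{lma:tricks}) turns the two terms into $\nabla^2_{\bPhi} H[\bPhi(-t),\btheta,\bu(-t)] \cdot \bJ^\top \cdot \bm{w}(t)$ and $+\nabla_{\bPhi}\ell[-t,\bPhi(-t),\btheta]$ respectively, which is verbatim the adjoint ODE for $\blamb$.

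It remains to match initial conditions and invoke uniqueness. Since the initial state $\bPhi^e(0,\epsilon) = \bPhi^\star(0)$ carries no $\epsilon$-dependence, $\bm{v}(0) = \bm{0}$ and hence $\bm{w}(0) = \bm{0} = \blamb(0)$; both ODEs have Lipschitz right-hand sides (again from Def.~\ref{app:def:continuous-hamiltonian-model}), so Picard--Lindel\"of gives $\bm{w}(t) = \blamb(t)$ for all $t$, the first equality. For the parameter gradient I would take $\epsilon \to 0$ in $\Delta^{\rm RHEL}_{\btheta}(t,\epsilon)$: the explicit term tends to $\nabla_{\btheta}\ell[-t,\tPhi(t),\btheta] = \nabla_{\btheta}\ell[-t,\bPhi(-t),\btheta]$ (the reversal symmetry of $\ell$ commutes with $\nabla_{\btheta}$), while the symmetric difference of $\nabla_{\btheta}H$ converges by the chain rule to $\nabla^2_{\btheta,\bPhi}H[\tPhi(t),\ldots]\cdot\bm{v}(t)$; substituting $\bm{v} = \bSigma_x \cdot \blamb$ and applying Lemma~\ref{lma:tricks} together with $\bSigma_z\cdot\bSigma_x = -\bJ^\top$ reproduces $\nabla^2_{\bPhi,\btheta}H[\bPhi(-t),\ldots]\cdot\bJ^\top\cdot\blamb(t)$, so the limit is exactly $g_{\btheta}(t)$.

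The main obstacle I anticipate is the careful bookkeeping of the conjugation and Pauli identities --- in particular pinning down the transpose and sign on $\bJ$ --- so that the derived ODE matches the adjoint ODE \emph{on the nose} rather than up to a sign; this is precisely where the time-reversal structure encoded in Lemmas~\ref{lma:pauli}--\ref{lma:tricks} does the essential work. A secondary, more routine point is justifying the differentiability of $\epsilon\mapsto\bPhi^e(t,\epsilon)$ and the limit/derivative interchanges, which follow from the smoothness and Lipschitz hypotheses but should be stated explicitly.
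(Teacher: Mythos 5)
Your proposal is correct and follows essentially the same route as the paper's own proof: differentiating the echo ODE in $\epsilon$ at $\epsilon=0$ (Schwarz interchange), identifying the unperturbed echo with the time-reversed forward trajectory via Corollary~\ref{lma:time-reversal}, transporting Hessians and gradients with Lemma~\ref{lma:tricks}, collapsing the Pauli products with Lemma~\ref{lma:pauli}, and concluding by uniqueness of the adjoint ODE solution with zero initial condition; the parameter-gradient identity is then the same chain-rule/Pauli computation, which you run from the RHEL side to the ASM side rather than the reverse, an immaterial difference. The identities you single out ($\bSigma_x\cdot\bJ\cdot\bSigma_z=-\bm{I}$, $\bSigma_z\cdot\bSigma_x=\bJ=-\bJ^\top$) are consistent with the algebra the paper uses ($\bJ\cdot\bSigma_z=-\bSigma_x$, $\bJ\cdot\bSigma_x=\bSigma_z$), so no gap.
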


\begin{proof}
Defining $\Delta^{\rm RHEL}_{\bPhi}(t) := \lim_{\epsilon\to 0}\Delta^{\rm RHEL}_{\bPhi}(t, \epsilon)$ and $\Delta^{\rm RHEL}_{\btheta}(t) := \lim_{\epsilon\to 0}\Delta^{\rm RHEL}_{\btheta}(t, \epsilon)$, note that:
\begin{align*}
    \Delta^{\rm RHEL}_{\bPhi}(t) &= \bSigma_x \cdot \partial_{\epsilon}\bPhi^e(t, \epsilon)|_{\epsilon=0} \\
    \Delta^{\rm RHEL}_{\btheta}(t) &= \nabla_{\btheta}\ell[-t, \bPhi^e(t, 0), \btheta] + \partial_{\epsilon}\left( \nabla_{\btheta}H[\bPhi^e(t, \epsilon), \btheta, \bu(-t)]\right)|_{\epsilon=0}
\end{align*}
\paragraph{Derivation of $\blamb(t) = \lim_{\epsilon \to 0}\Delta^{\rm RHEL}_{\bPhi}(t, \epsilon)$.} Given $t\in[0, T]$, differentiating the ODE satisfied by $\bPhi^e$ with respect to $\epsilon$ at $\epsilon=0$ yields:
\begin{align*}
    \partial_t(\partial_{\epsilon} \bPhi^e(t, \epsilon)|_{\epsilon=0}) &= \partial_\epsilon(\partial_{t} \bPhi^e(t, \epsilon))|_{\epsilon=0} \qquad &\mbox{(Schwartz Theorem)}\\
    &= \partial_\epsilon(\bJ \cdot \nabla_{\bPhi^e}H[\bPhi^e(t,\epsilon), \btheta, \bu(t)] - \epsilon \bJ \nabla_{\bPhi^e}\ell[t, \bPhi^e(t, \epsilon), \btheta])|_{\epsilon=0}\\
    &= \bJ \cdot \nabla^2_{\bPhi^e} H[\bPhi^e(t, 0)]\cdot \partial_{\epsilon}\bPhi^e(t, \epsilon)|_{\epsilon=0} - \bJ \nabla_{\bPhi^e}\ell[t, \bPhi^e(t, 0), \btheta].\\
\end{align*}
By Lemma~\ref{lma:time-reversal}:
\begin{equation*}
    \bPhi^e(t, 0) = \tPhi(t) \quad \forall t \in [0, T],
\end{equation*}
therefore:
\begin{align*}
\partial_t(\partial_{\epsilon} \bPhi^e(t, \epsilon)|_{\epsilon=0})&= \bJ \cdot \nabla^2_{\bPhi^\star} H[\bPhi^\star(-t)]\cdot \partial_{\epsilon}\bPhi^e(t, \epsilon)|_{\epsilon=0} - \bJ \nabla_{\bPhi^\star}\ell[t, \bPhi^\star(-t), \btheta]\\
    &= \bJ \cdot \bSigma_z \cdot \nabla_{\bPhi}^2 H[\bPhi(-t, 0)] \cdot \bSigma_z \cdot \partial_{\epsilon}\bPhi^e(t, \epsilon)|_{\epsilon=0} - \bJ \cdot \bSigma_z \cdot \nabla_{\bPhi}\ell[t, \bPhi(-t), \btheta] \qquad &\mbox{(Lemma~\ref{lma:tricks})} \\
\end{align*}
Additionally, note that we have by Lemma~\ref{lma:pauli}:
\begin{equation*}
    \bJ \cdot \bSigma_z = - \bSigma_x, \quad \bSigma_z = \bJ \cdot \bSigma_x,
\end{equation*}
so that:
\begin{equation}
    \partial_t(\partial_{\epsilon} \bPhi^e(t, \epsilon)|_{\epsilon=0})
    = - \bSigma_x\cdot \nabla_{\bPhi}^2 H[\bPhi(-t, 0)] \cdot (\bJ \cdot \bSigma_x) \cdot \partial_{\epsilon}\bPhi^e(t, \epsilon)|_{\epsilon=0} + \bSigma_x \cdot \nabla_{\bPhi}\ell[t, \bPhi(-t), \btheta] 
    \label{app:thm:continuous-rhel-step-1}
\end{equation}

Left multiplying Eq.~(\ref{app:thm:continuous-rhel-step-1}) on both sides by $\bSigma_x$ yields:

\begin{align*}
 \partial_t\left(\bSigma_x \cdot\partial_{\epsilon}\bPhi^e(t, \epsilon)|_{\epsilon=0}\right) &= -\bSigma_x^2 \cdot \nabla_{\bPhi}^2 H[\bPhi(-t, 0)] \cdot \bJ \cdot \left(\bSigma_x \cdot \partial_{\epsilon}\bPhi^e(t, \epsilon)|_{\epsilon=0}\right) + \bSigma_x^2 \cdot \nabla_{\bPhi}\ell[t, \bPhi(-t), \btheta]  \\
    &= -\nabla_{\bPhi}^2 H[\bPhi(-t, 0)] \cdot \bJ \cdot \left(\bSigma_x \cdot \partial_{\epsilon}\bPhi^e(t, \epsilon)|_{\epsilon=0}\right) + \nabla_{\bPhi}\ell[t, \bPhi(-t), \btheta]  \qquad &\mbox{(Lemma~\ref{lma:pauli})}\\
    &= \nabla_{\bPhi}^2 H[\bPhi(-t, 0)] \cdot\bJ^\top\cdot \left(\bSigma_x \cdot \partial_{\epsilon}\bPhi^e(t, \epsilon)|_{\epsilon=0}\right) + \nabla_{\bPhi}\ell[t, \bPhi(-t), \btheta]  \qquad &(\bJ^\top = -\bJ)
\end{align*}

Finally, note that because $\bPhi^e(0) = \bPhi^\star$ does not depend on $\epsilon$, we have that:

\begin{equation*}
     \partial_t\left(\bSigma_x \cdot\partial_{\epsilon}\bPhi^e(0, \epsilon)|_{\epsilon=0}\right)= 0,
\end{equation*}

so that all in all, $\Delta_{\bPhi}^{\rm RHEL}$ satisfies:
\begin{align*}
\left\{
\begin{array}{ll}
\Delta_{\bPhi}^{\rm RHEL}(0) &= \bm{0} \\
\partial_t \Delta_{\bPhi}^{\rm RHEL}(t) &= \nabla^2_{\bPhi}H[\bPhi(-t), \btheta, \bu(-t)] \cdot \bJ^\top \cdot \Delta_{\bPhi}^{\rm RHEL}(t) + \nabla_{\bPhi}\ell[-t, \bPhi(-t), \btheta]
\end{array}
\right.
\end{align*}

Therefore $\Delta_{\bPhi}^{\rm RHEL}$ and $\blamb$ (as defined in Corollary~\ref{app:cor:continuous-adjoint-state-method-v2}) satisfy the same initial conditions and the same ODE, therefore they are equal at all times. 

\paragraph{Derivation of $g_{\btheta}(t) = \lim_{\epsilon \to 0}\Delta^{\rm RHEL}_{\btheta}(t, \epsilon)$.} Note that by Lemma~\ref{lma:time-reversal} and time-reversal invariance of $\ell$:
\begin{align*}
     \Delta^{\rm RHEL}_{\btheta}(t) &= \nabla_{\btheta}\ell[-t, \bPhi^\star(-t, 0), \btheta] + \partial_{\epsilon}\left( \nabla_{\btheta}H[\bPhi^e(t, \epsilon), \btheta, \bu(-t)]\right)|_{\epsilon=0} \\
     &= \nabla_{\btheta}\ell[-t, \bPhi(-t, 0), \btheta] + \partial_{\epsilon}\left( \nabla_{\btheta}H[\bPhi^e(t, \epsilon), \btheta, \bu(-t)]\right)|_{\epsilon=0}
\end{align*}
As the first term of $\Delta^{\rm RHEL}(t)$ and $g_{\btheta}(t)$ coincide, the remainder of the derivation focuses on the second term of $\Delta^{\rm RHEL}_{\btheta}(t)$. Given $t\in [0, T]$, we have:

\begin{align*}
    \nabla^2_{\bPhi, \btheta}H[\bPhi(-t), \btheta, \bu(t)] \cdot \bJ^\top \cdot \blamb(t)&=  \nabla^2_{\bPhi^e, \btheta}H[\bPhi^e(t, 0), \btheta, \bu(t)] \cdot \bSigma_z \cdot \bJ^\top \cdot 
    \blamb(t) \qquad&\mbox{(Lemma~\ref{lma:tricks})}\\
    &= -\nabla^2_{\bPhi^e, \btheta}H[\bPhi^e(t, 0), \btheta, \bu(t)] \cdot \bSigma_z \cdot i\bSigma_y \cdot \blamb(t) \qquad&(\bJ = i\bSigma_y) \\
    &= +\nabla^2_{\bPhi^e, \btheta}H[\bPhi^e(t, 0), \btheta, \bu(t)] \cdot i\bSigma_y \cdot \bSigma_z \cdot \blamb(t)\qquad&\mbox{(Lemma~\ref{lma:pauli})} \\
    &= - \nabla^2_{\bPhi^e, \btheta}H[\bPhi^e(t, 0), \btheta, \bu(t)] \cdot \bSigma_x \cdot \blamb(t)\qquad&\mbox{(Lemma~\ref{lma:pauli})} \\
    &=  - \nabla^2_{\bPhi^e, \btheta}H[\bPhi^e(t, 0), \btheta, \bu(t)] \cdot \bSigma_x^2 \cdot \partial_{\epsilon}\bPhi^e(t, \epsilon)|_{\epsilon=0} \qquad&\mbox{($\blamb=\Delta_{\bPhi}^{\rm RHEL}$)}\\
    &= - \nabla^2_{\bPhi^e, \btheta}H[\bPhi^e(t, 0), \btheta, \bu(t)]\cdot \partial_{\epsilon}\bPhi^e(t, \epsilon)|_{\epsilon=0} \qquad&\mbox{(Lemma~\ref{lma:pauli})} \\
    & =- \partial_{\epsilon}\left(\nabla_{\btheta}H[\bPhi^e(t, \epsilon), \btheta, \bu(t)]\right)|_{\epsilon=0},
\end{align*}

which finishes to prove $g_{\btheta}(t) = \Delta^{\rm RHEL}_{\btheta}(t)$ for $t\in[0, T]$. 
\end{proof}

\subsubsection{Connection to Hamiltonian Echo Backpropagation (HEB)}
\begin{remark}
The above setup and implementation of RHEL is not exactly that of Hamiltonian Echo Backprop (HEB, \citep{lopez2023self}). In particular:
\begin{itemize}
    \item the loss function in HEB is only defined at the \emph{final} time step,
    \item the interaction with $\ell$ does not happen \emph{simultaneously} with $H$,
    \item finally, we would like to recover the HEB formula giving the gradient estimate of the loss with respect to the initial state of the neurons.
\end{itemize}
In the following corollary, we make slight algorithmic adjustments to match the seminal HEB implementation as much as possible {\bfseries while preserving the generality of the sequence modelling setting}, \emph{i.e.} dependence of $H$ with $\btheta$ and $\bu$. Note that in the seminal HEB work, $H$ does not depend on a static set of parameters $\btheta$ nor on an input sequence. $\bu$. 
\end{remark}
\begin{coloredcorollary}
\label{app:thm:continuous-rhel-final-loss}
Under the assumptions of Def.~\ref{app:def:continuous-hamiltonian-model} and Def.~\ref{app:def:continuous-optimization-problem}, and assuming additionally:
\begin{equation*}
    \ell[t, \cdot, \cdot] = 0 \quad \forall t \in [0, T), \quad \ell[T, \cdot, \cdot] := \ell_T,
\end{equation*}
let $\bPhi$ be the solution of the ODE, for $t\in[-T, 0]$:
\begin{equation*}
   \bPhi(-T) = \bx, \quad  \partial_t \bPhi(t) = \bJ \cdot \nabla_{\bPhi} H[\bPhi(t), \btheta, \bu(t)],
\end{equation*}
and the solution of another ODE, for $t\in[0, \epsilon]$:
\begin{equation*}
\partial_t \bPhi(t) = \bJ \nabla_{\bPhi} \ell_T[\bPhi(t), \btheta].
\end{equation*}
Let $\bPhi^e$ be the solution of the following ODE, for $t\in[\epsilon, T]$:
\begin{equation*}
    \bPhi^e(0, \epsilon) = \bPhi(\epsilon)^\star, \quad \partial_t \bPhi^e(t, \epsilon) = \bJ \nabla_{\bPhi^e} H[\bPhi^e(t, \epsilon), \btheta, \bu(-t)],
\end{equation*}

Defining:
\begin{align*}
\Delta^{\rm RHEL}_{\btheta}(t, \epsilon) &:=\frac{1}{2\epsilon}\left(\nabla_{\btheta}H[\bPhi^e(t, \epsilon), \btheta, \bu(-t)] -\nabla_{\btheta}H[\bPhi^e(t, -\epsilon), \btheta, \bu(-t)]\right),
\\
 \Delta^{\rm RHEL}_{\bPhi}(t, \epsilon) &:= \frac{1}{2\epsilon}\bSigma_x \cdot \left(\bPhi^e(t, \epsilon) -  \bPhi^e(t, -\epsilon)\right),
\end{align*}
we have:
\begin{equation*}
    \forall t \in [0, T], \quad \blamb(t) = \lim_{\epsilon \to 0}\Delta^{\rm RHEL}_{\bPhi}(t, \epsilon),\quad  g_{\btheta}(t) = \lim_{\epsilon \to 0} \Delta^{\rm RHEL}_{\btheta}(t, \epsilon)
\end{equation*}
where $\blamb$ and $g_{\btheta}$ are defined in Corollary~\ref{app:cor:continuous-adjoint-state-method-single-loss}. In particular:
\begin{equation*}
    -i\epsilon d_{\bx^\star}^w\ell_T[\bPhi(0), \btheta] = \bPhi^e(T, \epsilon)^\star - \bPhi(-T) + \mathcal{O}(\epsilon^2)
\end{equation*}
where $d_{\bx^\star}^w \equiv d_{\bPhi}$ denotes the total Wirtinger derivative with respect to $\bx^\star$ and $i$ the imaginary unit.
\end{coloredcorollary}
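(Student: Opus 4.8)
The plan is to treat this corollary as a specialization of Theorem~\ref{app:thm:continuous-rhel} to the final-loss setting, where the single new ingredient is that the brief nudge phase $t\in[0,\epsilon]$ imprints the loss gradient onto the \emph{initial condition} of a now \emph{freely} evolving echo, rather than acting as a distributed forcing term throughout the echo. Concretely, I would first reduce the nudge phase to a first-order initial-condition perturbation, then show the resulting homogeneous variational equation coincides with the homogeneous adjoint ODE of Corollary~\ref{app:cor:continuous-adjoint-state-method-single-loss}, reuse the $g_{\btheta}$ algebra of Theorem~\ref{app:thm:continuous-rhel} essentially verbatim, and finally rewrite the endpoint displacement in the complex Wirtinger notation of HEB.

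\textbf{Nudge phase.} Integrating $\partial_t\bPhi = \bJ\,\nabla_{\bPhi}\ell_T[\bPhi]$ over $[0,\epsilon]$ from $\bPhi(0)$ gives, to first order, $\bPhi(\epsilon) = \bPhi(0) + \epsilon\,\bJ\,\nabla_{\bPhi}\ell_T[\bPhi(0)] + \mathcal{O}(\epsilon^2)$. Conjugating and using $\bSigma_z\bJ = \bSigma_x$ (Lemma~\ref{lma:pauli}) yields the echo initial condition
\begin{equation*}
\bPhi^e(0,\epsilon) = \bPhi^\star(0) + \epsilon\,\bSigma_x\,\nabla_{\bPhi}\ell_T[\bPhi(0)] + \mathcal{O}(\epsilon^2),
\end{equation*}
so that $\partial_\epsilon\bPhi^e(0,\epsilon)|_{\epsilon=0} = \bSigma_x\,\nabla_{\bPhi}\ell_T[\bPhi(0)]$. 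At $\epsilon=0$ the nudge vanishes and the echo is the pure reversal, so Lemma~\ref{lma:time-reversal} gives $\bPhi^e(t,0) = \tPhi(t)$ on $[0,T]$. I note the $\mathcal{O}(\epsilon)$ mismatch in the integration window only affects the result at order $\epsilon^2$, since the integrand perturbation it multiplies is itself $\mathcal{O}(\epsilon)$, and may be discarded.

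\textbf{Adjoint matching.} Because the echo now obeys the \emph{unforced} Hamiltonian flow, differentiating its ODE with respect to $\epsilon$ at $\epsilon=0$ gives the homogeneous variational equation $\partial_t(\partial_\epsilon\bPhi^e) = \bJ\,\nabla^2_{\bPhi}H[\bPhi^e(t,0)]\cdot\partial_\epsilon\bPhi^e$. Setting $\Delta^{\rm RHEL}_{\bPhi}(t) := \bSigma_x\,\partial_\epsilon\bPhi^e(t,\epsilon)|_{\epsilon=0}$, substituting $\bPhi^e(t,0)=\tPhi(t)$, left-multiplying by $\bSigma_x$ and applying Lemmas~\ref{lma:tricks}--\ref{lma:pauli} exactly as in the proof of Theorem~\ref{app:thm:continuous-rhel}, I obtain $\partial_t\Delta^{\rm RHEL}_{\bPhi}(t) = \nabla^2_{\bPhi}H[\bPhi(-t)]\cdot\bJ^\top\cdot\Delta^{\rm RHEL}_{\bPhi}(t)$, now \emph{without} a loss-gradient source. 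The initial value is $\Delta^{\rm RHEL}_{\bPhi}(0) = \bSigma_x^2\,\nabla_{\bPhi}\ell_T[\bPhi(0)] = \nabla_{\bPhi}\ell_T[\bPhi(0)]$, which is precisely $\blamb(0)$ in Corollary~\ref{app:cor:continuous-adjoint-state-method-single-loss}; by uniqueness the two solutions coincide, establishing $\blamb(t) = \lim_{\epsilon\to 0}\Delta^{\rm RHEL}_{\bPhi}(t,\epsilon)$. The identity $g_{\btheta}(t) = \lim_{\epsilon\to 0}\Delta^{\rm RHEL}_{\btheta}(t,\epsilon)$ then follows from the same chain relating $\nabla^2_{\bPhi,\btheta}H\cdot\bJ^\top\cdot\blamb$ to $\partial_\epsilon(\nabla_{\btheta}H[\bPhi^e(t,\epsilon)])|_{\epsilon=0}$ used at the end of Theorem~\ref{app:thm:continuous-rhel}, with the $\nabla_{\btheta}\ell$ term dropping since $\ell$ is supported only at $T$.

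\textbf{HEB endpoint formula.} Expanding to first order and using $\bPhi^e(T,0)=\tPhi(T)=\bSigma_z\bx=\bx^\star$ together with $\partial_\epsilon\bPhi^e(T)|_{\epsilon=0} = \bSigma_x\,\Delta^{\rm RHEL}_{\bPhi}(T) = \bSigma_x\,\blamb(T) = \bSigma_x\,d_{\bx}\ell_T$ gives $\bPhi^e(T,\epsilon) = \bx^\star + \epsilon\,\bSigma_x\,d_{\bx}\ell_T + \mathcal{O}(\epsilon^2)$. Conjugating once more and invoking $\bSigma_z\bx^\star=\bx$ and $\bSigma_z\bSigma_x=\bJ$ (Lemma~\ref{lma:pauli}), I get $\bPhi^e(T,\epsilon)^\star - \bPhi(-T) = \epsilon\,\bJ\,d_{\bx}\ell_T + \mathcal{O}(\epsilon^2)$. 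The final task is to rewrite this real symplectic displacement in the complex amplitude representation $\psi = \bphi + i\bpi$, in which $\bx^\star$ plays the role of $\bar\psi$: a component computation shows $\bJ\,d_{\bx}\ell_T$ corresponds to $-i(\partial_{\bphi}+i\partial_{\bpi})\ell_T = -i\,d^w_{\bx^\star}\ell_T$ under HEB's (unnormalized) Wirtinger convention, yielding the stated identity. I expect this last translation --- pinning down the Wirtinger convention and the accompanying factor of $i$ so the prefactor is exactly $-i\epsilon$ --- to be the only delicate point, as everything upstream is a direct transcription of the lemmas and of the proof of Theorem~\ref{app:thm:continuous-rhel}.
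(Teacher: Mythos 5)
Your proposal is correct and follows essentially the same route as the paper's proof: a first-order expansion of the brief nudge phase to obtain the perturbed echo initial condition $\bPhi^\star(0)+\epsilon\bSigma_x\nabla_{\bPhi}\ell_T[\bPhi(0)]$, matching of the resulting homogeneous variational equation with the source-free adjoint ODE of Corollary~\ref{app:cor:continuous-adjoint-state-method-single-loss} by uniqueness, deferral of the $g_{\btheta}$ identity to the algebra of Theorem~\ref{app:thm:continuous-rhel}, and Pauli-matrix manipulation ($\bSigma_z\bSigma_x=\bJ=i\bSigma_y$) to cast the endpoint displacement in Wirtinger form. Your explicit remark that the $\mathcal{O}(\epsilon)$ shift of the integration window contributes only at order $\epsilon^2$ is a small point the paper leaves implicit, but it does not change the argument.
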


\begin{proof}[Proof of Corollary~\ref{app:thm:continuous-rhel-final-loss}]
The derivation is almost exactly similar to that of Theorem~\ref{thm:continuous-rhel} with two key differences:
\begin{itemize}
    \item the version of the continuous ASM against which this version of RHEL is compared is different (Corollary~\ref{app:cor:continuous-adjoint-state-method-single-loss}),
    \item the interaction of $\bPhi$ with $\ell$ and $H$ do not happen simultaneously but on disjoint intervals. 
\end{itemize}
We will simply show that the interaction with $\ell$ and conjugation $\bPhi \to \bPhi^\star$ yields the correct initial conditions and defer to the proof of Theorem~\ref{thm:continuous-rhel} for the remainder. We will also use the same notations and denote $\Delta^{\rm RHEL}_{\bPhi}(t) := \lim_{\epsilon \to 0} \Delta^{\rm RHEL}_{\bPhi}(t, \epsilon)$, $\Delta^{\rm RHEL}_{\btheta}(t) := \lim_{\epsilon \to 0} \Delta^{\rm RHEL}_{\btheta}(t, \epsilon)$. 
\paragraph{Derivation of $\blamb(t) = \lim_{\epsilon \to 0}\Delta^{\rm RHEL}_{\bPhi}(t, \epsilon)$.} Integrating the ODE satisfied by $\bPhi$ between 0 and $T$ yields:
\begin{align*}
    \bPhi(\epsilon) &= \bPhi(0) + \int_0^\epsilon dt\bJ \cdot \nabla_{\bPhi}\ell[\bPhi(t), \btheta] \\
    &= \bPhi(0) + \epsilon \bJ \cdot \nabla_{\bPhi}\ell[\bPhi(0), \btheta] + \mathcal{O}(\epsilon^2)
\end{align*}
Therefore, the initial state of $\bPhi^e$ can be written as:
\begin{align*}
    \bPhi^e(\epsilon) &= \bPhi^\star(0) + \epsilon \bSigma_z \cdot \bJ \cdot \nabla_{\bPhi}\ell[\bPhi(0), \btheta] + \mathcal{O}(\epsilon^2) \\
    &=\bPhi^\star(0) + \epsilon \bSigma_x \cdot \nabla_{\bPhi}\ell[\bPhi(0), \btheta] + \mathcal{O}(\epsilon^2) \qquad&\mbox{(Lemma~\ref{lma:pauli})}.
\end{align*}
By differentiating the last equality with respect to $\epsilon$ at $\epsilon=0$, we obtain:
\begin{equation*}
    \Delta^{\rm RHEL}_{\bPhi}(0) = \nabla_{\bPhi}\ell[\bPhi(0), \btheta].
\end{equation*}
Proceeding exactly as in the proof of Theorem~\ref{app:thm:continuous-rhel}, we obtain that:
\begin{align*}
\left\{
\begin{array}{ll}
\Delta_{\bPhi}^{\rm RHEL}(0) &= \nabla_{\bPhi}\ell[\bPhi(0), \btheta],\\
\partial_t \Delta_{\bPhi}^{\rm RHEL}(t) &= \nabla^2_{\bPhi}H[\bPhi(-t), \btheta, \bu(-t)] \cdot \bJ^\top \cdot \Delta_{\bPhi}^{\rm RHEL}(t)
\end{array}
\right.
\end{align*}
Therefore $\Delta_{\bPhi}^{\rm RHEL}$ and $\blamb$ (as defined in Corollary~\ref{app:cor:continuous-adjoint-state-method-single-loss}) satisfy the same initial conditions and the same ODE, therefore they are equal at all times.
\paragraph{Derivation of $g_{\btheta}(t) = \lim_{\epsilon \to 0}\Delta^{\rm RHEL}_{\btheta}(t, \epsilon)$.} See proof of Theorem~\ref{app:thm:continuous-rhel}.
\paragraph{Connection to HEB formula.} In particular, we have:
\begin{align*}
    d_{\bx}\ell_T[\bPhi(0), \btheta] = \blamb(T) &= \bSigma_x \cdot \partial_{\epsilon}\left(\bPhi(T, \epsilon\right)|_{\epsilon=0} \\
    &= \frac{1}{\epsilon}\bSigma_x \left(\bPhi^e(T, \epsilon) - \bPhi^e(T, 0)\right) + \mathcal{O}(\epsilon) \\
    &= \frac{1}{\epsilon}\bSigma_x \left(\bPhi^e(T, \epsilon) - \bPhi^\star(-T)\right) + \mathcal{O}(\epsilon) \qquad&\mbox{(Lemma~\ref{lma:time-reversal})} \\
    &= -\frac{i}{\epsilon} \bSigma_y \cdot \bSigma_z \cdot \left(\bPhi^e(T, \epsilon) - \bPhi^\star(-T)\right) + \mathcal{O}(\epsilon) \qquad&\mbox{(Lemma~\ref{lma:pauli})} \\
    &= -\frac{i}{\epsilon} \bSigma_y \cdot \left(\bPhi^e(T, \epsilon)^\star - \bPhi(-T)\right) + \mathcal{O}(\epsilon) 
\end{align*}

Left multiplying on both sides by $i\epsilon \bSigma_y$ and noticing that $id_{\bPhi^\star}^w \equiv -i \bSigma_y \cdot d_{\bPhi}$, we finally obtain:
\begin{equation*}
    -i\epsilon d_{\bx^\star}^w\ell_T[\bPhi(0), \btheta] = \bPhi^e(T, \epsilon)^\star - \bPhi(-T) + \mathcal{O}(\epsilon^2)
\end{equation*}

\end{proof}

\newpage

\subsection{\bfseries Theoretical results in discrete time}
\paragraph{Summary.} In this section, we introduce all the results derived in \emph{discrete} time. More precisely:
\begin{itemize}
    \item We first formally define \emph{Hamiltonian Recurrent Units} (HRUs, Definition~\ref{app:def:hru}). HRUs can be regarded as the discrete-time counterpart of the continuous model introduced in the previous section (Definition~\ref{app:def:continuous-hamiltonian-model}), namely as an \emph{explicit} and \emph{symplectic} integrator of the continuous Hamiltonian model which preserves the time-reversal invariance and time-reversibility properties in discrete time. We also introduce the constrained optimization problem naturally associated with HRUs (Definition~\ref{app:def:discrete-optimization-problem}), which is the discrete time counterpart of the constrained continuous optimization problem introduced in the previous section (Definition~\ref{app:def:continuous-optimization-problem}).
    \item We then formally define \emph{Hamiltonian State Space Models} (HSSMs) as stacks of HRUs (Definition~\ref{app:def:hssm}) and the \emph{multilevel} constrained optimization problem which is naturally associated to these models (Definition~\ref{app:def:multi-optimization-problem}). 
    \item We state and prove our algorithmic baseline, \emph{Backpropagation Through Time} (BPTT), through the lens of the \emph{Lagrangian} formalism to establish a clear connection with the continuous ASM. We first introduce and derive BPTT in its general form (Theorem~\ref{thm:bptt}) and then apply it more specifically to a HRU as defined in Definition~\ref{app:def:hru} (Corollary~\ref{cor:fine-grained-bptt}).
    \item As we did in continuous time, we introduce a series of technical Lemmas needed to extend RHEL in discrete time. We first demonstrate the time-reversibility of HRUs on a single time step (Lemma~\ref{lma:time-reversal-discrete-single-step}), which then enables us to extend the time reversibility property derived in continuous time (Corollary~\ref{lma:time-reversal}) to \emph{discrete} time (Corollary~\ref{cor:reversibility-discrete}). After introducing one last technical result (Lemma~\ref{lma:shifting-by-1/3}), we then state and prove RHEL in discrete time when applied to HRUs (Corollary~\ref{cor:discrete-rhel}). As the algorithm prescribed by Corollary~\ref{cor:discrete-rhel} includes solving an \emph{implicit equation}, we finally introduce a slight practical (\emph{i.e.} fully explicit) variant of RHEL in discrete time (Corollary~\ref{cor:discrete-rhel-2}).
    \item Lastly, we show how to estimate gradients end-to-end in HSSMs by using \emph{RHEL--chaining} (Theorem~\ref{app:thm:rhel-chaining}). We also highlight that in practice, when using feedforward transformations across HRUs, the algorithm prescribed by Theorem~\ref{app:thm:rhel-chaining} implicitly requires to chain RHEL through HRUs and \emph{automatic differentiation} through these feedforward transformations (Remark~\ref{app:rmk:rhel-chaining}). This remark fundamentally underpins the actual algorithmic implementation of RHEL which was used throughout our experiments. 
\end{itemize}

\subsubsection{Definitions \& assumptions}
\label{subsec:def-assumptions-discrete}
\begin{coloreddefinition}[Hamiltonian Recurrent Unit]
\label{app:def:hru}
Given $\btheta \in \mathbb{R}^{d_{\btheta}}$, $K \in \mathbb{N}^\star$ and an input sequence $\left(\bu_{k}\right)_{k \in [-K, 0]} \in \left(\mathbb{R}^{d_{\bu}}\right)^{K}$, the Hamiltonian Recurrent Unit (HRU) prediction is given by:
\begin{equation*}
   \bPhi_{k + 1} = \mathcal{M}_{H, \delta}[\bPhi_{k}, \btheta, \bu_k] \quad \forall k =-K \cdots -1,
\end{equation*}
with $H := T + V$ and:
\begin{equation*}
\mathcal{M}_{H, \delta} :=   \mathcal{M}_{T, \delta/2} \circ \mathcal{M}_{V, \delta} \circ \mathcal{M}_{T, \delta/2}, \quad 
  \mathcal{M}_{T, \delta} := \bPhi + \delta\bJ \cdot \nabla_{\bPhi} T, \quad \mathcal{M}_{V, \delta} := \bPhi + \delta\bJ \cdot \nabla_{\bPhi} V
\end{equation*}
We assume that:
\begin{enumerate}
\item $H$ is separable, \emph{i.e.} $V$ and $T$ only depend on $\bphi$ and $\bpi$ respectively:
\begin{equation*}
    V[\bPhi, \btheta, \bu] = V[\bphi, \btheta, \bu],\quad T[\bPhi, \btheta, \bu] = T[\bpi, \btheta, \bu]
\end{equation*}
\item $T$ and $V$ are time-reversal invariant:
    \begin{align*}
       \forall \bPhi \in \mathbb{R}^{d_{\bPhi}}, \ \forall \btheta \in \mathbb{R}^{d_{\btheta}}, \ \forall \bu \in \mathbb{R}^{d_{\bu}}: \quad 
       \left\{
       \begin{array}{l}
       T[\bPhi, \btheta, \bu] = T[\bSigma_z \cdot \bPhi, \btheta, \bu], \\
       V[\bPhi, \btheta, \bu] = V[\bSigma_z \cdot \bPhi, \btheta, \bu]
       \end{array}
       \right.
    \end{align*}
 \item $T$ and $V$ are twice differentiable with respect to $\bPhi$, $\btheta$ and $\bu$. 
\end{enumerate}
\end{coloreddefinition}

\begin{remark}
    Note that $\mathcal{M}_{H, \delta}$ is simply a Leapfrog integrator associated with $H$. We justify each of our design choices below:
    \begin{itemize}
        \item{\bfseries 3 steps-parametrization.} We write the Leapfrog integrator in a three-steps fashion to yield a \emph{reversible} integrator.
        \item{\bfseries Separability of the Hamiltonian.} In the case where $\bphi$ and $\bphi$ can be separated out in the Hamiltonian function, the Leapfrog integrator becomes explicit \citep{ammari2018numerical}.
        \item{\bfseries $T$ and $V$ as functions of $\bPhi$.} Although $T$ and $V$ only depend on $\bpi$ and $\bphi$ respectively, we choose to define them as functions of $\bPhi$ so that the proof of RHEL in the continuous case seamlessly translates to the discrete case.
    \end{itemize}
\end{remark}
\begin{coloreddefinition}[Constrained optimization optimization problem in discrete time]
\label{app:def:discrete-optimization-problem}
Given a continuous Hamiltonian model (Def.~\ref{app:def:continuous-hamiltonian-model}), we consider the following constrained optimization problem:
\begin{equation*}
    \min_{\btheta} L := \sum_{k=-K + 1}^0 \ell[\bPhi_k, k] \quad \mbox{s.t.} \quad \bPhi_{k + 1} = \mathcal{M}_{H, \delta}[\bPhi_{k}, \btheta, \bu_k] \quad \forall k =-K \cdots -1
\end{equation*}
where we assume that:
\begin{enumerate}
    \item $\ell$ is time-reversal invariant:
    \begin{equation*}
              \forall \bPhi \in \mathbb{R}^{d_{\bPhi}}, \ \forall \btheta \in \mathbb{R}^{d_{\btheta}}, \ \forall k =-K, \cdots, 0: \quad \ell_k[\bPhi, \btheta] = \ell_k[\bSigma_z \cdot \bPhi, \btheta]
    \end{equation*}
    \item $\ell$ is twice differentiable with respect to $\bPhi$ and $\btheta$. 
\end{enumerate}
\end{coloreddefinition}

\begin{coloreddefinition}[Hamiltonian State Space Models]
\label{app:def:hssm}
Given $(\btheta^{(1)}, \cdots, \btheta^{(N)})\in \left(\mathbb{R}^{d_{\btheta}}\right)^N$, $K \in \mathbb{N}^\star$ and an input sequence $\left(\bu_{k}\right)_{k \in [-K, 0]} \in \left(\mathbb{R}^{d_{\bu}}\right)^{K}$, a Hamiltonian State Space Model (HSSM) is defined as the composition of HRUs defined in Def.~\ref{app:def:hru} as:

\begin{equation*}
\bPhi^{(0)}:= \overline{\bu}, \quad \forall \ell \in \llbracket 0, N-1\rrbracket,\quad \forall k \in \llbracket -K, 0  \rrbracket: \quad \bPhi^{(\ell + 1)}_{k+1} = \mathcal{M}^{(\ell)}_{H^{(\ell)},\delta}[\bPhi^{(\ell + 1)}_k, \btheta^{(\ell)}, \bPhi^{(\ell)}_k],
\end{equation*}
or in a vectorized fashion as:
\begin{equation*}
\bPhi^{(0)}:= \overline{\bu}, \quad \forall \ell \in \llbracket 0, N-1 \rrbracket: \ \overline{\bPhi}^{(\ell + 1)} = \bm{\mathcal{M}}^{(\ell)}_{H^{(\ell)},\delta}[\overline{\bPhi}^{(\ell + 1)}, \btheta^{(\ell)}, \overline{\bPhi}^{(\ell)}],
\end{equation*}
\end{coloreddefinition}

\begin{coloreddefinition}[Multilevel optimization problem in discrete time]
\label{app:def:multi-optimization-problem}
Given a HSSM (Def.~\ref{app:def:hssm}), we consider the following constrained optimization problem:
\begin{align*}
    \min_{\btheta} L := \sum_{k=-K + 1}^0 \ell[\bPhi_k, k] \quad \mbox{s.t.} \quad &\bPhi^{(0)}:= \overline{\bu}, \\
    &\forall \ell \in \llbracket 0, N-1 \rrbracket: \ \overline{\bPhi}^{(\ell + 1)} = \bm{\mathcal{M}}^{(\ell)}_{H^{(\ell)},\delta}\left[\overline{\bPhi}^{(\ell + 1)}, \btheta^{(\ell)}, \overline{\bPhi}^{(\ell)}\right]
\end{align*}
where we assume that $\ell$ satisfies the same assumptions as in Def.~\ref{app:def:discrete-optimization-problem}. 
\end{coloreddefinition}
\newpage
\subsubsection{Backpropagation Through Time (BPTT)}
\label{app:subsubsec:bptt}

\paragraph{General form.} We first state and prove Backpropagation Through Time (BPTT) for any integrator $\mathcal{M}_{H, \delta}$. 
\begin{coloredtheorem}[Backpropagation Through Time (BPTT)]
\label{thm:bptt}
Given assumptions in Def.~\ref{app:def:hru}--\ref{app:def:discrete-optimization-problem}, the gradients of the loss with respect to the parameters $\btheta$ and the inputs $\bu_{-k}$ are given by:
\begin{equation*}
d_{\btheta}L = \sum_{k=0}^{K-1} g_{\btheta}(k), \quad  d_{\bu_{-(k+1)}}L = g_{\bu}(k) \quad \forall k \in \llbracket 0, K-1\rrbracket,
\end{equation*}
with:
\begin{equation*}
\left\{
    \begin{array}{ll}
    g_{\btheta}(k) &= \nabla_{2}\ell[\bPhi_{-k}, \btheta, -k]+ \partial_{2}\mathcal{M}_{H, \delta}[\bPhi_{-(k+1)}, \btheta, \bu_{-(k+1)}]^\top\cdot \blamb_k \\
    g_{\bu}(k) &= \partial_{3}\mathcal{M}_{H, \delta}[\bPhi_{-(k+1)}, \btheta, \bu_{-(k+1)}]^\top\cdot \blamb_k,     
    \end{array}
    \right.
\end{equation*}
and where $(\blamb_{k})$ satisfy the following recursion relationship:
\begin{align*}
\left\{
\begin{array}{l}
    \blamb_0 = \nabla_{1}\ell[\bPhi_0, 0], \\
    \blamb_{k + 1} = \partial_{1}\mathcal{M}_{H, \delta}(\bPhi_{-(k+1)}, \btheta, \bu_{-(k+1)})^\top \cdot \blamb_{k} + \nabla_{1}\ell[\bPhi_{-(k+1)}, -(k+1)] \quad \forall k = 0, \cdots, K - 1
    \end{array}
    \right.
\end{align*}
\end{coloredtheorem}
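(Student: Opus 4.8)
The plan is to mirror the continuous adjoint-state proof (Theorem~\ref{app:thm:continuous-adjoint-state-method}) in discrete time, using a \emph{discrete Lagrangian} so that the parallel with the ASM is manifest. I would attach one multiplier $\bm{\mu}_{k+1} \in \mathbb{R}^{d_{\bPhi}}$ to each constraint (the one producing $\bPhi_{k+1}$) and form
\begin{equation*}
\mathcal{L}(\bPhi, \bm{\mu}, \btheta, \bu) := \sum_{k=-K+1}^0 \ell[\bPhi_k, \btheta, k] + \sum_{k=-K}^{-1} \bm{\mu}_{k+1}^\top \cdot \left(\mathcal{M}_{H, \delta}[\bPhi_k, \btheta, \bu_k] - \bPhi_{k+1}\right).
\end{equation*}
At any primal-feasible trajectory the constraint residuals vanish, so $\mathcal{L} = L$ and hence $d_{\btheta}\mathcal{L} = d_{\btheta}L$ \emph{for every} choice of $\bm{\mu}$; the multipliers are then free parameters that I will fix so as to annihilate the implicit sensitivities $\partial_{\btheta}\bPhi_k$.

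First I would expand $d_{\btheta}\mathcal{L}$ by the chain rule, separating the \emph{explicit} dependence on $\btheta$ (through $\nabla_2\ell$ and $\partial_2 \mathcal{M}_{H,\delta}$) from the \emph{implicit} dependence carried by each $\partial_{\btheta}\bPhi_k$. The term $-\sum_k \bm{\mu}_{k+1}^\top \cdot \partial_{\btheta}\bPhi_{k+1}$ is then reindexed by shifting $k+1 \to k$: this \emph{summation by parts} is the exact discrete counterpart of the integration by parts used in the continuous proof, and it is where the boundary contributions surface. Collecting the coefficient of each $\partial_{\btheta}\bPhi_k$, the interior terms read $\nabla_1\ell[\bPhi_k, k] + (\partial_1 \mathcal{M}_{H, \delta}[\bPhi_k, \btheta, \bu_k])^\top \cdot \bm{\mu}_{k+1} - \bm{\mu}_k$; the top term ($k=0$, which carries no constraint) reads $\nabla_1\ell[\bPhi_0, 0] - \bm{\mu}_0$; and the bottom term ($k=-K$) multiplies $\partial_{\btheta}\bPhi_{-K} = 0$, since $\bPhi_{-K} = \bx$ is fixed, so it drops out. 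Setting the surviving coefficients to zero yields exactly the stated backward recursion together with the initial condition $\bm{\mu}_0 = \nabla_1\ell[\bPhi_0, 0]$, and leaves $d_{\btheta}L$ equal to the sum of the purely explicit terms $\nabla_2\ell + (\partial_2\mathcal{M}_{H,\delta})^\top \cdot \bm{\mu}$.

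To recover the theorem verbatim I would switch to the backward-time indexing $\blamb_j := \bm{\mu}_{-j}$, under which the recursion and initial condition become precisely those claimed, and relabel the two surviving sums via $j = -k$ and $j = -(k+1)$ respectively so that $d_{\btheta}L = \sum_{k=0}^{K-1} g_{\btheta}(k)$ with $g_{\btheta}(k) = \nabla_2\ell[\bPhi_{-k}, \btheta, -k] + (\partial_2\mathcal{M}_{H,\delta}[\bPhi_{-(k+1)}, \btheta, \bu_{-(k+1)}])^\top \cdot \blamb_k$. The input gradient follows from the identical computation with $d_{\bu_{-(k+1)}}$ in place of $d_{\btheta}$: since $\bu_{-(k+1)}$ enters only the single constraint producing $\bPhi_{-k}$ (with multiplier $\bm{\mu}_{-k} = \blamb_k$) and $\ell$ has no explicit input dependence, the implicit sensitivities cancel under the same adjoint choice and only the explicit term $(\partial_3\mathcal{M}_{H,\delta})^\top \cdot \blamb_k = g_{\bu}(k)$ remains.

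I expect the main obstacle to be purely the \emph{index bookkeeping}: keeping the forward constraint index aligned with the backward adjoint index, matching the loss-sum range $\{-K+1, \ldots, 0\}$ against the gradient-sum range $\{0, \ldots, K-1\}$, and verifying that the two boundary effects behave correctly — the absence of a constraint at $k=0$ producing the initial condition, and the fixed initial state $\bPhi_{-K} = \bx$ killing the lower boundary term. A secondary point to check is that, since $\mathcal{M}_{H,\delta}$ is assumed differentiable in its arguments (Def.~\ref{app:def:hru}), each Jacobian $\partial_i\mathcal{M}_{H,\delta}$ invoked above is well defined, so the formal Lagrangian manipulation is legitimate; the finer structure of the Leapfrog integrator plays no role here and is deferred to the HRU-specific Corollary~\ref{cor:fine-grained-bptt}.
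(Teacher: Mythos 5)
Your proposal is correct and follows essentially the same route as the paper: both form the Lagrangian of the constrained problem in Def.~\ref{app:def:discrete-optimization-problem}, choose the multipliers so that the stationarity conditions (equivalently, your coefficient-annihilation after summation by parts) yield the backward recursion and initial condition for $\blamb$, and then read off $d_{\btheta}L$ and $d_{\bu_{-(k+1)}}L$ from the purely explicit terms. The only differences are cosmetic — you carry forward-time multipliers $\bm{\mu}$ and relabel $\blamb_j := \bm{\mu}_{-j}$ at the end, whereas the paper writes the Lagrangian directly in reverse-time indexing — so the two proofs coincide up to bookkeeping.
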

\begin{proof}[Proof of Theorem~\ref{thm:bptt}] BPTT can classically be derived through the application of the ``chain rule'' backward through the inference computational graph defined in Def.~\ref{app:def:hru}. Another useful viewpoint though, which directly connects BPTT as the discrete counterpart of the continuous ASM and will be useful later in the appendix, is to derive it through \emph{the method of Lagrangian multipliers}. Namely, the Lagrangian associated to the constrained optimization problem in Def.~\ref{app:def:discrete-optimization-problem} reads as:
\begin{equation*}
    \mathcal{L}(\bPhi, \blamb, \btheta, \bu) = \sum_{k=0}^{K-1} \ell[\bPhi_{-k}, \btheta, -k] + \blamb_{k}^\top \cdot \left(\mathcal{M}_{H,\delta}\left[\bPhi_{-(k+1)}, \btheta, \bu_{-(k+1)} \right] - \bPhi_{-k}\right)
\end{equation*}

Extremizing $\mathcal{L}$ with respect to $\bPhi$ and $\blamb$ yield $\bPhi_{k,*}$ and $\blamb_{k,*}$:
\begin{align*}
    \forall k = 0, \cdots, K-1: \quad \partial_{\blamb_k}\mathcal{L}(\bPhi_*, \blamb_*, \btheta, \bu) &= \mathcal{M}_{H,\delta}\left[\bPhi_{-(k+1), *}, \btheta, \bu_{-k} \right]- \bPhi_{-k, *} = 0,\\
    \partial_{\bPhi_0}\mathcal{L}(\bPhi_*, \blamb_*, \btheta, \bu) &= \nabla_{1}\ell[\bPhi_{0, *}, \btheta, 0] -\blamb_{0, *} = 0\\
    \forall k = 1, \cdots, K-1: \quad \partial_{\bPhi_{-k}}\mathcal{L}(\bPhi_*, \blamb_*, \btheta, \bu) &= \nabla_1\ell[\bPhi_{-k}, \btheta, -k] + \partial_{1}\mathcal{M}_{H,\delta}[\bPhi_{-k}, \btheta, \bu_{-k}]^\top \cdot \blamb_{k - 1} - \blamb_k = 0,
\end{align*}

Finally, the total derivative of $L$ with respect to $\btheta$ reads as:
\begin{align*}
    d_{\btheta}L &= d_{\btheta}\mathcal{L}(\bPhi_*, \blamb_*,\btheta, \bu) \\
    &= \partial_{\btheta}\mathcal{L}(\bPhi_*, \blamb_*,\btheta, \bu) + \partial_{\btheta}\bPhi_*^\top \cdot \underbrace{\partial_{\bPhi}\mathcal{L}(\bPhi_*, \blamb_*,\btheta, \bu)}_{=0}+ \partial_{\btheta}\blamb_*^\top \cdot \underbrace{\partial_{\blamb}\mathcal{L}(\bPhi_*, \blamb_*,\btheta, \bu)}_{=0}\\
    &= \sum_{k=0}^{K-1}\nabla_{2}\ell[\bPhi_{-k}, \btheta, -k]+ \partial_{2}\mathcal{M}_{H, \delta}[\bPhi_{-(k+1)}, \btheta, \bu_{-(k+1)}]^\top\cdot \blamb_k
\end{align*}

The total derivative of $L$ with respect to $\bu_{-k}$ is derived in the exact same fashion.
\end{proof}
\begin{remark}
    \label{app:rmk:vectorized-lagrangian}
Note that using the vectorized notations introduced in subsection~\ref{subsec:hssms}, the Lagrangian of the constrained optimization problem defined in Def.~\ref{app:def:discrete-optimization-problem} re-writes:
\begin{equation*}
    \mathcal{L} = \bm{1}^\top \cdot \ell[\widetilde{\bPhi}, \btheta]+ {\rm Tr}\left[\left(\bm{\mathcal{M}_{H, \delta}}[\widetilde{\bPhi}, \btheta, \widetilde{\bu}]-\widetilde{\bPhi}\right)\cdot\overline{\blamb}^\top\right]
\end{equation*}

with ${\rm Tr}$ denoting the trace matrix operator and:

\begin{align*}
    \bm{1} &:= \begin{bmatrix}
        1 \\
        \vdots \\
        1
    \end{bmatrix} \in \mathbb{R}^{K\times 1}, \quad
        \ell[\widetilde{\bPhi}, \btheta] := \begin{bmatrix}
        \ell[\bPhi_0, \btheta, 0] \\
        \vdots \\
        \ell[\bPhi_{-(K-1)}, \btheta, -(K-1)]
    \end{bmatrix}\in \mathbb{R}^{K\times 1}, \\
    \bm{\mathcal{M}}_{H, \delta}[\widetilde{\bPhi}, \btheta, \widetilde{\bu}] &:= \begin{bmatrix}
        \mathcal{M}_{H, \delta}[\bPhi_0, \btheta, \bu]\\
        \vdots \\
        \mathcal{M}_{H, \delta}[\bPhi_{-(K-1)}, \btheta, \bu]\
    \end{bmatrix} \in \mathbb{R}^{K \times d_{\bPhi}}
\end{align*}

\end{remark}

\paragraph{Detailed BPTT.} For the needs of our derivation of RHEL in discrete time, we now introduce a finer-grained version of BPTT given model assumptions given in Def.~\ref{app:def:hru}.
\begin{coloredcorollary}[Detailed BPTT]
\label{cor:fine-grained-bptt}
Given assumptions in Def.~\ref{app:def:hru}--\ref{app:def:discrete-optimization-problem}, the gradients of the loss with respect to the parameters $\btheta$ and the inputs $\bu_{-k}$ are given by:
\begin{equation*}
d_{\btheta}L = \sum_{k=0}^{K-1} g_{\btheta}(k), \quad  d_{\bu_{-k}}L = g_{\bu}(k),
\end{equation*}
with:
\begin{align*}
&g_{\btheta}(k) := \nabla_2 \ell[\bPhi_{-k}, \btheta, -k] + \frac{\delta}{2}\nabla^2_{1,2}T[\bPhi_{-(k+1/3)}, \btheta, \bu_{-(k+1)}]\cdot \bJ^\top \cdot \blamb_{k} \\
&+ \delta\nabla^2_{1,2}V[\bPhi_{-(k + 2/3)}, \btheta, \bu_{-(k+1)}]\cdot \bJ^\top \cdot \blamb_{k+1/3} + \frac{\delta}{2}\nabla^2_{1,2}T[\bPhi_{-(k+1)}, \btheta, \bu_{-(k+1)}]\cdot \bJ^\top \cdot \blamb_{k+2/3},
\end{align*}
\begin{align*}
&g_{\bu}(k) := \frac{\delta}{2}\nabla^2_{1,3}T[\bPhi_{-(k+1/3)}, \btheta, \bu_{-(k+1)}]\cdot \bJ^\top \cdot \blamb_{k} \\
&+ \delta\nabla^2_{1,3}V[\bPhi_{-(k + 2/3)}, \btheta, \bu_{-(k+1)}]\cdot \bJ^\top \cdot \blamb_{k+1/3} + \frac{\delta}{2}\nabla^2_{1,3}T[\bPhi_{-(k+1)}, \btheta, \bu_{-(k+1)}]\cdot \bJ^\top \cdot \blamb_{k+2/3},
\end{align*}
and where $(\blamb_{k})$ satisfy the following recursion relationship, with $\blamb_0 = \nabla_{\bPhi}\ell[\bPhi_0]$ and $\forall k\in [0, K-1]$:
\begin{equation*}
     \left\{ \begin{array}{ll}
    \blamb_{k + 1/3} &= \blamb_{k} + \frac{\delta}{2} \nabla^2_{1} T[\bPhi_{-(k + 1/3)}, \btheta, \bu_{-(k+1)}]\cdot \bJ^\top\cdot \blamb_{k} \\
    \blamb_{k + 2/3}  &= \blamb_{k + 1/3} + \delta \nabla^2_{1} V[\bPhi_{-(k + 2/3)}, \btheta, \bu_{-(k+1)}]\cdot \bJ^\top\cdot \blamb_{k + 1/3} \\
    \blamb_{k + 1} &= \blamb_{k + 2/3} + \frac{\delta}{2} \nabla^2_{1} T[\bPhi_{-(k + 1)}, \btheta, \bu_{-(k+1)}]\cdot \bJ^\top\cdot \blamb_{k + 2/3} +\nabla_{1}\ell[\bPhi_{-(k+1)}, \btheta]
    \end{array}
    \right.
\end{equation*}
\end{coloredcorollary}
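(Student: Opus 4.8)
The plan is to specialize the general backward recursion of Theorem~\ref{thm:bptt} to the three-step Leapfrog structure $\mathcal{M}_{H,\delta} = \mathcal{M}_{T,\delta/2}\circ \mathcal{M}_{V,\delta}\circ \mathcal{M}_{T,\delta/2}$ by a plain application of the chain rule, with no new ideas beyond careful bookkeeping. Theorem~\ref{thm:bptt} already supplies the coarse recursion $\blamb_{k+1} = \partial_1 \mathcal{M}_{H,\delta}[\bPhi_{-(k+1)},\btheta,\bu_{-(k+1)}]^\top\cdot\blamb_k + \nabla_1\ell[\bPhi_{-(k+1)},-(k+1)]$ together with $g_{\btheta}(k) = \nabla_2\ell[\bPhi_{-k},\btheta,-k] + \partial_2\mathcal{M}_{H,\delta}^\top\cdot\blamb_k$ and the analogous formula for $g_{\bu}(k)$; everything reduces to opening up the three Jacobians $\partial_1,\partial_2,\partial_3$ of the composite integrator.

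First I would record the Jacobians of the two elementary maps. Since $\mathcal{M}_{T,\delta/2}[\bPhi] = \bPhi + \tfrac{\delta}{2}\bJ\cdot\nabla_1 T$ and $\mathcal{M}_{V,\delta}[\bPhi] = \bPhi + \delta\bJ\cdot\nabla_1 V$, their state Jacobians are $I + \tfrac{\delta}{2}\bJ\cdot\nabla^2_1 T$ and $I + \delta\bJ\cdot\nabla^2_1 V$, while their $\btheta$- and $\bu$-Jacobians are $\tfrac{\delta}{2}\bJ\cdot\nabla^2_{1,2}T$, $\delta\bJ\cdot\nabla^2_{1,2}V$ and $\tfrac{\delta}{2}\bJ\cdot\nabla^2_{1,3}T$, $\delta\bJ\cdot\nabla^2_{1,3}V$ respectively. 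Transposing and using the symmetry of the Hessians $\nabla^2_1 T,\nabla^2_1 V$ (Schwarz) turns every left factor $\bJ\cdot\nabla^2(\cdot)$ into a right factor $\nabla^2(\cdot)\cdot\bJ^\top$, which is exactly the shape appearing in the target recursion.

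Next I would factor $\partial_1\mathcal{M}_{H,\delta}^\top$ as the reversed product of the three elementary transposed state-Jacobians and simply \emph{name} the partial products: $\blamb_{k+1/3} := \partial_1\mathcal{M}_{T,\delta/2}[\bPhi_{-(k+1/3)}]^\top\cdot\blamb_k$ and $\blamb_{k+2/3} := \partial_1\mathcal{M}_{V,\delta}[\bPhi_{-(k+2/3)}]^\top\cdot\blamb_{k+1/3}$. Because the rightmost forward substep is evaluated at $\bPhi_{-(k+1)}$, the middle one at $\bPhi_{-(k+2/3)}$ and the leftmost at $\bPhi_{-(k+1/3)}$, transposition reverses this order and yields precisely the three-line recursion for $(\blamb_k,\blamb_{k+1/3},\blamb_{k+2/3},\blamb_{k+1})$, with the loss gradient $\nabla_1\ell[\bPhi_{-(k+1)}]$ injected at the last substep as dictated by Theorem~\ref{thm:bptt}. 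For the parameter gradient, I would expand the total $\btheta$-derivative of the composite -- in which $\btheta$ enters all three substeps -- by the product rule into three additive terms, transpose, and observe that the trailing products of state-Jacobians collapse into exactly the intermediate adjoints just defined; pairing each elementary $\btheta$-Jacobian with $\blamb_k$, $\blamb_{k+1/3}$, $\blamb_{k+2/3}$ respectively reproduces the stated $g_{\btheta}(k)$, and the identical argument with $\nabla^2_{1,3}$ in place of $\nabla^2_{1,2}$ gives $g_{\bu}(k)$.

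The only real obstacle is the index/transpose bookkeeping: one must keep straight that transposition of a composition reverses the order of the substeps, so that the Hessian of $T$ in the \emph{first} adjoint substep is evaluated at $\bPhi_{-(k+1/3)}$ (the last forward substep) rather than at $\bPhi_{-(k+1)}$, and dually that the $\btheta$-Jacobian of the first forward substep pairs with $\blamb_{k+2/3}$. Separability ($T=T[\bpi]$, $V=V[\bphi]$) is used only to guarantee the integrator is explicit so that these Jacobians are the elementary ones written above, and twice-differentiability guarantees the mixed Hessians exist; no appeal to time-reversal invariance is needed at this stage.
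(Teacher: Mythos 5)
Your proposal is correct and takes exactly the paper's route: the paper's own proof of Corollary~\ref{cor:fine-grained-bptt} is literally the one-liner ``direct application of Theorem~\ref{thm:bptt} with the inference computational graph details defined inside Def.~\ref{app:def:hru}'', and your write-up simply fills in those details—factoring $\partial_1\mathcal{M}_{H,\delta}^\top$ into the three reversed transposed substep Jacobians to define $\blamb_{k+1/3},\blamb_{k+2/3}$, expanding $\partial_2\mathcal{M}_{H,\delta}$ and $\partial_3\mathcal{M}_{H,\delta}$ by the product rule, and using Hessian symmetry to move $\bJ$ to the right—with the evaluation points and adjoint pairings handled correctly (last forward substep's Hessian with $\blamb_k$, first forward substep's with $\blamb_{k+2/3}$).
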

\begin{proof}[Proof of Corollary~\ref{cor:fine-grained-bptt}] Direct application of Theorem~\ref{thm:bptt} with the inference computational graph details defined inside Def.~\ref{app:def:hru}. 
\end{proof}

\clearpage
\newpage
\subsubsection{Proof of Theorem~\ref{thm:discrete-rhel}}

\paragraph{Time reversibility in discrete time.} We first derive the discrete counterpart of Lemma~\ref{lma:time-reversal} as a technical pre-requisite for the extension of RHEL to the discrete-time setting.

\begin{coloredlemma}[Reversibility of $\mathcal{M}_{H, \delta}$]
\label{lma:time-reversal-discrete-single-step}
Under the assumptions of Def.~\ref{app:def:hru}--\ref{app:def:discrete-optimization-problem}:
\begin{equation*}
    \forall \bPhi_k \in \mathbb{R}^{d_{\bPhi}}, \ \forall \btheta\in \mathbb{R}^{d_{\btheta}}, \ \forall \bu\in \mathbb{R}^{d_{\bu}}: \quad \bPhi_{k+1}=\mathcal{M}_{H, \delta}[\bPhi_k, \btheta, \bu] \Rightarrow \mathcal{M}_{H, \delta}[\bPhi_{k+1}^\star, \btheta, \bu] = \bPhi_k^\star
\end{equation*}
\end{coloredlemma}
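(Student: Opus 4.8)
The plan is to reduce the claim to a single operator identity, namely $\mathcal{M}_{H,\delta}\circ\bSigma_z\circ\mathcal{M}_{H,\delta}=\bSigma_z$, and then evaluate it pointwise. Indeed, writing $\bPhi_{k+1}=\mathcal{M}_{H,\delta}[\bPhi_k,\btheta,\bu]$, we have $\mathcal{M}_{H,\delta}[\bPhi_{k+1}^\star]=\mathcal{M}_{H,\delta}[\bSigma_z\,\mathcal{M}_{H,\delta}[\bPhi_k]]=\bSigma_z\bPhi_k=\bPhi_k^\star$, so the operator identity is exactly what is needed. I would establish it from two identities satisfied by each elementary shear $\mathcal{M}_{X,\delta}$, $X\in\{T,V\}$, and then use the palindromic structure of the Leapfrog scheme.

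First I would prove the \emph{conjugation identity} $\bSigma_z\circ\mathcal{M}_{X,\delta}\circ\bSigma_z=\mathcal{M}_{X,-\delta}$. Expanding $\bSigma_z\,\mathcal{M}_{X,\delta}[\bSigma_z\bPhi]=\bSigma_z^2\bPhi+\delta\,\bSigma_z\bJ\,\nabla_{\bPhi}X[\bSigma_z\bPhi]$, I would use $\bSigma_z^2=\bm{I}$ (Lemma~\ref{lma:pauli}), the time-reversal invariance of $X$ which yields $\nabla_{\bPhi}X[\bSigma_z\bPhi]=\bSigma_z\,\nabla_{\bPhi}X[\bPhi]$ (the analogue for $X$ of the first identity in Lemma~\ref{lma:tricks}, valid since $T$ and $V$ are assumed time-reversal invariant in Def.~\ref{app:def:hru}), and the anticommutation $\bSigma_z\bJ\bSigma_z=-\bJ$ (from $\bSigma_z\bJ=-\bJ\bSigma_z$, Lemma~\ref{lma:pauli}). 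These collapse the expression to $\bPhi-\delta\bJ\,\nabla_{\bPhi}X[\bPhi]=\mathcal{M}_{X,-\delta}[\bPhi]$.

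Second I would use \emph{separability} to show $\mathcal{M}_{X,-\delta}=\mathcal{M}_{X,\delta}^{-1}$, i.e. that each elementary map is an invertible shear. Since $T$ depends only on $\bpi$, the map $\mathcal{M}_{T,\delta}$ leaves $\bpi$ unchanged and shifts $\bphi$ by $\delta\nabla_{\bpi}T(\bpi)$; applying $\mathcal{M}_{T,-\delta}$ afterwards reads $\nabla_{\bpi}T$ at the \emph{unchanged} momentum and therefore exactly cancels the position update, and symmetrically for $\mathcal{M}_{V,\delta}$, which shifts only $\bpi$ by $-\delta\nabla_{\bphi}V(\bphi)$. Combining the two identities gives $\bSigma_z\circ\mathcal{M}_{X,\delta}\circ\bSigma_z=\mathcal{M}_{X,\delta}^{-1}$ for $X\in\{T,V\}$.

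Finally I would exploit the palindromic form $\mathcal{M}_{H,\delta}=\mathcal{M}_{T,\delta/2}\circ\mathcal{M}_{V,\delta}\circ\mathcal{M}_{T,\delta/2}$: inserting $\bSigma_z^2=\bm{I}$ between consecutive factors gives $\bSigma_z\circ\mathcal{M}_{H,\delta}\circ\bSigma_z=\mathcal{M}_{T,\delta/2}^{-1}\circ\mathcal{M}_{V,\delta}^{-1}\circ\mathcal{M}_{T,\delta/2}^{-1}=\mathcal{M}_{H,\delta}^{-1}$, the last step simply reordering the inverses. Rearranging with $\bSigma_z^2=\bm{I}$ yields $\mathcal{M}_{H,\delta}\circ\bSigma_z=\bSigma_z\circ\mathcal{M}_{H,\delta}^{-1}$, hence $\mathcal{M}_{H,\delta}\circ\bSigma_z\circ\mathcal{M}_{H,\delta}=\bSigma_z$, which is the claim. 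I expect the main obstacle to be the second step: the conjugation identity alone only flips the sign of the step size, and it is separability that promotes $\mathcal{M}_{X,-\delta}$ to the genuine functional inverse of $\mathcal{M}_{X,\delta}$ rather than a mere backward step; this is where the assumption $H=T+V$ is indispensable and where the sign and ordering bookkeeping must be handled with care.
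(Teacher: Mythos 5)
Your proposal is correct: every step checks out against the paper's definitions (the conjugation identity follows from time-reversal invariance of $T,V$ and the anticommutation relations of Lemma~\ref{lma:pauli}; the shear-inversion step is exactly where separability is used; the palindrome step is valid because the inverse of $\mathcal{M}_{T,\delta/2}\circ\mathcal{M}_{V,\delta}\circ\mathcal{M}_{T,\delta/2}$ reverses an order that is already symmetric). The paper proves the same lemma with the same two underlying ingredients but organizes them differently: it writes out the six componentwise update equations of the Leapfrog (Eq.~(\ref{lma:reversibility-discrete-step-1})), observes by direct inspection that the full composed map is inverted by replacing $\delta$ with $-\delta$ (Eq.~(\ref{lma:reversibility-discrete-step-2})), and then uses time-reversal invariance of $T$ (via Lemma~\ref{lma:tricks}) to rewrite those inverted equations in the starred variables, recognizing the result as $\mathcal{M}_{H,\delta}$ applied to $\bPhi_{k+1}^\star$ (Eq.~(\ref{lma:reversibility-discrete-step-3})). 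So where the paper performs one coordinate computation on the composite map, you factor the argument into two operator identities per elementary shear ($\bSigma_z\circ\mathcal{M}_{X,\delta}\circ\bSigma_z=\mathcal{M}_{X,-\delta}$ and $\mathcal{M}_{X,-\delta}=\mathcal{M}_{X,\delta}^{-1}$) and let the palindromic structure do the composition. Your version has the advantage of isolating precisely where each hypothesis enters --- invariance gives the conjugation identity, separability gives invertibility of the shears, symmetry of the splitting gives the final assembly --- and it generalizes verbatim to any palindromic composition of time-reversal-invariant explicit shears (e.g.\ higher-order symmetric splittings); the paper's coordinate version is more self-contained and has the side benefit of producing the explicit inverse recursion, which is convenient for readers implementing the backward pass. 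Your own flagged concern (that the conjugation identity alone only produces a backward step, and separability is what upgrades $\mathcal{M}_{X,-\delta}$ to a true inverse) is exactly right and is handled correctly in your second step.
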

\begin{proof}[Proof of Lemma~\ref{lma:time-reversal-discrete-single-step}]
Let $\bPhi_{k}, \bPhi_{k+1}\in \mathbb{R}^{d_{\bPhi}}$ be such that:
\begin{equation*}
     \bPhi_{k+1}=\mathcal{M}_{H, \delta}[\bPhi_k, \btheta, \bu]
\end{equation*}
which rewrites, given Def.~\ref{app:def:hru}:
 \begin{align}
 \mathcal{M}_{H, \delta}: \
 \left\{
 \begin{array}{ll}
    \bphi_{k + 1/3} &= \bphi_{k} + \frac{\delta}{2}\nabla_{\bpi}T[\bpi_k,\btheta, \bu] \\
    \bpi_{k + 1/3} &= \bpi_k \\
    \bphi_{k+2/3} &= \bphi_{k+1/3} \\
    \bpi_{k + 2/3} &= \bpi_{k + 1/3} - \delta\nabla_{\bphi}V[\bphi_{k + 1/3}, \btheta, \bu] \\
    \bphi_{k + 1} &= \bphi_{k + 2/3} + \frac{\delta}{2}\nabla_{\bpi}T[\bpi_{k + 2/3},\btheta, \bu]  \\
    \bpi_{k+1} &= \bpi_{k+2/3}
 \end{array}
\right.
\label{lma:reversibility-discrete-step-1}
\end{align}
It becomes apparent from Eq.~(\ref{lma:reversibility-discrete-step-1}) that $\mathcal{M}_{H, \delta}$ is invertible with respect to its first argument and that inverting $\mathcal{M}_{H, \delta}$ amounts to change $\delta$ to $-\delta$:
 \begin{align}
 \mathcal{M}^{-1}_{H, \delta}: \
 \left\{
 \begin{array}{ll}
    \bphi_{k + 2/3} &= \bphi_{k + 1} - \frac{\delta}{2}\nabla_{\bpi}T[\bpi_{k+1},\btheta, \bu] \\
    \bpi_{k + 2/3} &= \bpi_{k+1} \\
    \bphi_{k+1/3} &= \bphi_{k+2/3} \\
    \bpi_{k + 1/3} &= \bpi_{k + 2/3} + \delta\nabla_{\bphi}V[\bphi_{k + 2/3}, \btheta, \bu] \\
    \bphi_{k} &= \bphi_{k + 1/3} - \frac{\delta}{2}\nabla_{\bpi}T[\bpi_{k + 1/3},\btheta,\bu]  \\
    \bpi_{k} &= \bpi_{k+1/3}
 \end{array}
\right.
\label{lma:reversibility-discrete-step-2}
\end{align}

and therefore:
\begin{equation*}
     \bPhi_{k}=\mathcal{M}^{-1}_{H, \delta}[\bPhi_{k+1}, \btheta, \bu] = \mathcal{M}_{H, -\delta}[\bPhi_{k+1}, \btheta, \bu],
\end{equation*}

Denoting $\bpi^\star := -\bpi$, note that by time-reversal invariance hypothesis in Def.~\ref{app:def:hru} and Lemma~\ref{lma:tricks}, we have that $\nabla_{\bpi}T[\bpi, \btheta, \bu] = - \nabla_{\bpi^\star}T[\bpi^\star, \btheta, \bu]$. Therefore, Eq.~(\ref{lma:reversibility-discrete-step-2}) rewrites as:
 \begin{align}
 \left\{
 \begin{array}{ll}
    \bphi_{k + 2/3} &= \bphi_{k + 1} + \frac{\delta}{2}\nabla_{\bpi^\star}T[\bpi^\star_{k+1},\btheta, \bu] \\
    \bpi^\star_{k + 2/3} &= \bpi^\star_{k+1} \\
    \bphi_{k + 1/3} &= \bphi_{k+2/3} \\
    \bpi^\star_{k + 1/3} &= \bpi^\star_{k + 2/3} - \delta\nabla_{\bphi}V[\bphi_{k + 2/3}, \btheta, \bu] \\
    \bphi_{k} &= \bphi_{k + 1/3} + \frac{\delta}{2}\nabla_{\bpi^\star}T[\bpi^\star_{k + 1/3},\btheta,\bu]  \\
    \bpi^\star_{k} &= \bpi^\star_{k+1/3}
 \end{array}
\right.,
\label{lma:reversibility-discrete-step-3}
\end{align}

where equations bearing on $\bpi$ have been multiplied on both sides by $-1$. Finally note that Eq.~(\ref{lma:reversibility-discrete-step-3}) simply rewrites as:
\begin{equation*}
    \mathcal{M}_{H, \delta}[\bPhi_{k+1}^\star, \btheta, \bu] = \bPhi_k^\star
\end{equation*}
\end{proof}

\begin{coloredcorollary}
\label{cor:reversibility-discrete}
Under the assumptions of Def.~\ref{app:def:hru}, if $\bPhi$ satisfies the following recursive equations:
    \begin{equation*}
    \bPhi_{-K} = \bx, \quad \forall k = -K, \cdots, -1: \quad \bPhi_{k+1} = \mathcal{M}_{H, \delta}[\bPhi_{k}, \btheta, \bu_{k}]    
    \end{equation*}
and $\bPhi^e$ is subsequently defined as:
\begin{equation*}
\bPhi^e_0 = \bPhi_0^\star, \quad \forall t = 0, \cdots, K - 1: \quad \bPhi^e_{k+1} = \mathcal{M}_{H, \delta}[\bPhi^e_{k}, \btheta, \bu_{-(k+1)}]        
\end{equation*}
Then:
\begin{equation*}
    \forall t = 0, \cdots, K - 1: \quad \bPhi^e_k = \bPhi^\star_{-k}
\end{equation*}
\end{coloredcorollary}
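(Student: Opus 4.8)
The plan is to prove the statement by induction on $k$, using the single-step reversibility result (Lemma~\ref{lma:time-reversal-discrete-single-step}) as the inductive engine. The one thing I would watch carefully throughout is the alignment of the input indices: the echo recursion advances from step $k$ to step $k+1$ using the input $\bu_{-(k+1)}$, which is exactly the input that drives the forward step from $\bPhi_{-(k+1)}$ to $\bPhi_{-k}$. This matching is precisely what makes the backward retracing telescope, and it is the reason the echo dynamics processes the inputs in reverse order.

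For the base case $k=0$, I would simply note that $\bPhi^e_0 = \bPhi_0^\star$ holds by definition of the echo dynamics, and since $\bPhi^\star_{-0} = \bPhi_0^\star$, the claim is immediate. For the inductive step, assuming $\bPhi^e_k = \bPhi^\star_{-k}$, I would expand the echo recursion and substitute the hypothesis to get
\[
\bPhi^e_{k+1} = \mathcal{M}_{H, \delta}[\bPhi^e_k, \btheta, \bu_{-(k+1)}] = \mathcal{M}_{H, \delta}[\bPhi^\star_{-k}, \btheta, \bu_{-(k+1)}].
\]
Separately, the forward recursion taken at lower index $-(k+1)$ (which lies in $\{-K,\dots,-1\}$ for $k\in\{0,\dots,K-1\}$) reads $\bPhi_{-k} = \mathcal{M}_{H, \delta}[\bPhi_{-(k+1)}, \btheta, \bu_{-(k+1)}]$. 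Applying Lemma~\ref{lma:time-reversal-discrete-single-step} to this identity, with the substitution $\bPhi_k \leftarrow \bPhi_{-(k+1)}$, $\bPhi_{k+1} \leftarrow \bPhi_{-k}$ and input $\bu_{-(k+1)}$, yields $\mathcal{M}_{H, \delta}[\bPhi^\star_{-k}, \btheta, \bu_{-(k+1)}] = \bPhi^\star_{-(k+1)}$. Combining the two displays gives $\bPhi^e_{k+1} = \bPhi^\star_{-(k+1)}$, which closes the induction.

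The main obstacle here is purely notational rather than conceptual: I must be careful to invoke Lemma~\ref{lma:time-reversal-discrete-single-step} with the correct input at each step, since the lemma is stated for a single fixed input $\bu$ whereas the driving input changes from one time step to the next. Because the echo trajectory revisits the inputs in reverse order, each echo step reuses exactly the input of the corresponding forward step, so the lemma applies verbatim and no genuine difficulty arises. Once the base case and this index bookkeeping are settled, the conclusion follows directly by induction, giving the discrete-time analogue of Corollary~\ref{lma:time-reversal}.
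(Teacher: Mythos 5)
Your proof is correct and takes exactly the paper's approach: the paper's own proof is the one-line remark that the result ``is immediately obtained by iterating Lemma~\ref{lma:time-reversal-discrete-single-step} over the whole trajectory,'' and your induction with the input-index bookkeeping ($\bu_{-(k+1)}$ driving both the echo step $k \to k+1$ and the forward step $-(k+1) \to -k$) is precisely that iteration spelled out.
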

\begin{proof}[Proof of Corollary~\ref{cor:reversibility-discrete}] This result is immediately obtained by iterating Lemma~\ref{lma:time-reversal-discrete-single-step} over the whole trajectory.
\end{proof}
\paragraph{A technical pre-requisite.} Finally, we need one last technical Lemma to handle subtleties pertaining to Jacobian evaluation which only occur in discrete time. 
\begin{coloredlemma}
    \label{lma:shifting-by-1/3}
    Under the assumptions of Def.~\ref{app:def:hru}, if we have, for some $\btheta \in \mathbb{R}^{d_{\btheta}}$ and $\bu \in \mathbb{R}^{d_{\bu}}$:
    \begin{equation*}
        \bPhi_{k + 1} = \mathcal{M}_{H, \delta}[\bPhi_k, \btheta, \bu],
    \end{equation*}
    then:
    \begin{align*}
        \left\{
        \begin{array}{l}
           T[\bPhi_{k + 1/3}, \btheta, \bu] = T[\bPhi_{k}, \btheta, \bu], \\
           V[\bPhi_{k + 2/3}, \btheta, \bu] = V[\bPhi_{k + 1/3}, \btheta, \bu]\\
           T[\bPhi_{k + 1}, \btheta, \bu] = T[\bPhi_{k + 2/3}, \btheta, \bu]
        \end{array}
        \right.
    \end{align*}
\end{coloredlemma}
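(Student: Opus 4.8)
The plan is to unpack the Leapfrog integrator $\mathcal{M}_{H,\delta}$ into its three constituent explicit-Euler substeps and then exploit the separability of the Hamiltonian. Writing $\bPhi_k = (\bphi_k^\top, \bpi_k^\top)^\top$ and likewise for the fractional time indices, the composition $\mathcal{M}_{H,\delta} = \mathcal{M}_{T,\delta/2}\circ \mathcal{M}_{V,\delta}\circ \mathcal{M}_{T,\delta/2}$ from Def.~\ref{app:def:hru} expands exactly as in Eq.~(\ref{lma:reversibility-discrete-step-1}). The key structural observation I would extract from this expansion is that each substep updates only one of the two conjugate variables: the first substep $\mathcal{M}_{T,\delta/2}$ advances $\bphi$ while leaving the momentum fixed ($\bpi_{k+1/3} = \bpi_k$); the middle substep $\mathcal{M}_{V,\delta}$ advances $\bpi$ while leaving the position fixed ($\bphi_{k+2/3} = \bphi_{k+1/3}$); and the final substep $\mathcal{M}_{T,\delta/2}$ again advances $\bphi$ while leaving the momentum fixed ($\bpi_{k+1} = \bpi_{k+2/3}$).

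With these three invariances in hand, each claimed equality follows immediately from the separability assumption of Def.~\ref{app:def:hru}, namely $T[\bPhi,\btheta,\bu] = T[\bpi,\btheta,\bu]$ and $V[\bPhi,\btheta,\bu]=V[\bphi,\btheta,\bu]$. For the first identity I would note that $T$ depends on $\bPhi$ only through $\bpi$, so $T[\bPhi_{k+1/3},\btheta,\bu] = T[\bpi_{k+1/3},\btheta,\bu] = T[\bpi_k,\btheta,\bu] = T[\bPhi_k,\btheta,\bu]$ using $\bpi_{k+1/3}=\bpi_k$. The second identity uses that $V$ depends only on $\bphi$ together with $\bphi_{k+2/3}=\bphi_{k+1/3}$, and the third uses again that $T$ depends only on $\bpi$ together with $\bpi_{k+1}=\bpi_{k+2/3}$.

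This lemma is essentially bookkeeping rather than a substantive computation, so I do not anticipate any genuine obstacle; the only point requiring care is to correctly track which conjugate variable is held fixed in which substep, since the position is updated in two of the three substeps whereas the momentum is updated only in the middle one. Matching each of the three invariances to the corresponding dependence ($T$ on $\bpi$, $V$ on $\bphi$) is precisely what delivers the three equalities, and the same argument extends verbatim when the intermediate states carry the shifted time-index of the backward trajectory used elsewhere in the RHEL derivation.
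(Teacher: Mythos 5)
Your proof is correct and takes essentially the same approach as the paper's: the paper likewise writes out the three explicit Euler substeps of the Leapfrog integrator and reads off that $\bpi_{k+1/3}=\bpi_k$, $\bphi_{k+2/3}=\bphi_{k+1/3}$, $\bpi_{k+1}=\bpi_{k+2/3}$, with separability of $H$ then delivering the three equalities (a final step the paper leaves implicit but you spell out explicitly).
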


\begin{proof}
This can be seen by simply writing $\mathcal{M}_{H, \delta}$ explicitly for $\bphi$ and $\bpi$:
 \begin{align*}
 \mathcal{M}_{H, \delta}: \
 \left\{
 \begin{array}{ll}
    \bphi_{k + 1/3} &= \bphi_{k} + \frac{\delta}{2}\nabla_{\bpi}T[\bpi_k,\btheta, \bu] \\
    \bpi_{k + 1/3} &= \bpi_k \\
    \bphi_{k+2/3} &= \bphi_{k+1/3} \\
    \bpi_{k + 2/3} &= \bpi_{k + 1/3} - \delta\nabla_{\bphi}V[\bphi_{k + 1/3}, \btheta, \bu] \\
    \bphi_{k + 1} &= \bphi_{k + 2/3} + \frac{\delta}{2}\nabla_{\bpi}T[\bpi_{k + 2/3},\btheta, \bu]  \\
    \bpi_{k+1} &= \bpi_{k+2/3}
 \end{array}
\right.
\end{align*}
\end{proof}

\paragraph{Discrete-time RHEL.} We are now ready to state the main result of this section.
\begin{coloredcorollary}
\label{cor:discrete-rhel}
Under the assumptions of Def.~\ref{app:def:hru}--\ref{app:def:discrete-optimization-problem}, let $\left( \bPhi_k\right)_k$ satisfy the recursive equation:
\begin{equation*}
   \bPhi_{-K} = \bx, \quad \bPhi_{k + 1} = \mathcal{M}_{H, \delta}[\bPhi_{k}, \btheta, \bu_k] \quad \forall k =-K, \cdots, -1,
\end{equation*}
and let $\bPhi^e$ satisfy:
\begin{align*}
\left\{
\begin{array}{l}
    \bPhi^e_0(\epsilon) = \bPhi^\star_0 +\epsilon \bSigma_x \cdot \nabla_{\bPhi} \ell[\bPhi_0, \btheta, 0], \\ 
    \forall k =0, \cdots, K - 1: \\
    \quad \bPhi^e_{k+1/3}(\epsilon) = \mathcal{M}_{T, \delta/2}[\bPhi^e_{k}(\epsilon), \btheta, \bu_{-(k+1)}]\\
    \quad \bPhi^e_{k+2/3}(\epsilon) = \mathcal{M}_{V, \delta}[\bPhi^e_{k + 1/3}(\epsilon), \btheta, \bu_{-(k+1)}]\\
    \quad \bPhi^e_{k+1}(\epsilon) = \mathcal{M}_{T, \delta/2}[\bPhi^e_{k + 2/3}(\epsilon), \btheta, \bu_{-(k+1)}]-\epsilon \bJ \cdot \nabla_{\bPhi^e} \ell[\bPhi^e_{k+1}, \btheta, -(k+1)],
\end{array}
\right.
\end{align*}
Then defining:
\begin{align*}
H^{1/2}[\bPhi^e_k(\epsilon), \btheta, \bu_{-(k+1)}] &:= \frac{1}{2}\left( H[\bPhi^e_{k+1/3}(\epsilon), \btheta, \bu_{-(k+1)}] + H[\bPhi^e_{k+2/3}(\epsilon), \btheta, \bu_{-(k+1)}]\right), \\
\Delta_{\btheta}^{\rm RHEL}(k, \epsilon) &:= \nabla_2 \ell[\bPhi^e_k(\epsilon), \btheta, -k] \\
&-\frac{\delta}{2\epsilon}\left(\nabla_{2}H^{1/2}[\bPhi^e_k(\epsilon), \btheta, \bu_{-(k+1)}] -  \nabla_{2}H^{1/2}[\bPhi^e_k(-\epsilon), \btheta, \bu_{-(k+1)}]\right),\\
 \Delta^{\rm RHEL}_{\bu}(k, \epsilon) &:=-\frac{\delta}{2\epsilon}\left(\nabla_{3}H^{1/2}[\bPhi^e_k(\epsilon), \btheta, \bu_{-(k+1)}] -  \nabla_{3}H^{1/2}[\bPhi^e_k(-\epsilon), \btheta, \bu_{-(k+1)}]\right),\\
  \Delta^{\rm RHEL}_{\bPhi}(k, \epsilon) &:= \frac{1}{2\epsilon}\bSigma_x \cdot \left(\bPhi^e_k(\epsilon) -  \bPhi^e_k(-\epsilon)\right),
\end{align*}
we have:
\begin{align*}
\forall k = 0, \cdots, K-1: \quad\blamb_k &= \lim_{\epsilon\to 0}\Delta_{\btheta}^{\rm RHEL}(k, \epsilon), \quad  g_{\btheta}(k) = \lim_{\epsilon \to 0} \Delta_{\btheta}^{\rm RHEL}(k, \epsilon),\\
g_{\bu}(k) &= \lim_{\epsilon \to 0} \Delta_{\bu}^{\rm RHEL}(k, \epsilon),
\end{align*}
where $(\blamb_k)_{k\in\llbracket 0, K\rrbracket}$, $(g_{\btheta}(k))_{k\in\llbracket 0, K-1\rrbracket}$ and $(g_{\bu}(k))_{k\in\llbracket 0, K-1\rrbracket}$ are defined inside Corollary~(\ref{cor:fine-grained-bptt}).
\end{coloredcorollary}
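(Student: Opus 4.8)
The plan is to transpose the continuous-time argument of Theorem~\ref{app:thm:continuous-rhel} to the discrete leapfrog recursion, replacing the differentiation of the flow ODE by differentiation of the three Euler substeps, and the continuous reversibility (Corollary~\ref{lma:time-reversal}) by its discrete counterpart (Corollary~\ref{cor:reversibility-discrete}). Writing $\bdelta_k := \partial_{\epsilon} \bPhi^e_k(\epsilon)|_{\epsilon=0}$ and likewise $\bdelta_{k+1/3}, \bdelta_{k+2/3}$ at the fractional substeps, the centered differences collapse in the limit to $\Delta^{\rm RHEL}_{\bPhi}(k) = \bSigma_x \cdot \bdelta_k$ and $\Delta^{\rm RHEL}_{\btheta}(k) = \nabla_2 \ell[\bPhi^e_k(0), -k] - \delta\, \partial_{\epsilon}(\nabla_2 H^{1/2}[\bPhi^e_k(\epsilon), \btheta, \bu_{-(k+1)}])|_{\epsilon=0}$, with the analogous expression for $\Delta^{\rm RHEL}_{\bu}(k)$ (here I read the first asserted identity as $\blamb_k = \lim_{\epsilon\to 0}\Delta^{\rm RHEL}_{\bPhi}(k,\epsilon)$). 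The first step is to record the substep-level reversibility refining Corollary~\ref{cor:reversibility-discrete}: iterating the explicit decomposition used in the proof of Lemma~\ref{lma:time-reversal-discrete-single-step} shows $\bPhi^e_{k+1/3}(0) = \bPhi^\star_{-(k+1/3)}$, $\bPhi^e_{k+2/3}(0) = \bPhi^\star_{-(k+2/3)}$ and $\bPhi^e_{k+1}(0) = \bPhi^\star_{-(k+1)}$, so the unperturbed echo visits exactly the conjugates of the forward substep states.

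Next I would establish $\blamb_k = \Delta^{\rm RHEL}_{\bPhi}(k)$ by induction. Differentiating each of the three echo substeps with respect to $\epsilon$ at $\epsilon=0$ yields a linear recursion for $\bdelta$; the nudge in the final substep carries an explicit factor $\epsilon$, so at $\epsilon=0$ the implicit dependence on $\bPhi^e_{k+1}$ drops out and only $-\bJ\cdot\nabla_{\bPhi}\ell[\bPhi^e_{k+1}(0), -(k+1)]$ survives. Substituting the substep reversibility identities, replacing conjugated Hessians via Lemma~\ref{lma:tricks}, and simplifying with the Pauli identities $\bJ\bSigma_z = -\bSigma_x$, $\bSigma_z\bSigma_x = \bJ$, $\bSigma_x^2 = \bm{I}$ and $\bJ^\top = -\bJ$ (Lemma~\ref{lma:pauli}), left-multiplication by $\bSigma_x$ converts the $\bdelta$-recursion into exactly the $\blamb$-recursion of Corollary~\ref{cor:fine-grained-bptt}. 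The initial condition matches because $\bPhi^e_0(\epsilon) = \bPhi^\star_0 + \epsilon\bSigma_x\cdot\nabla_{\bPhi}\ell[\bPhi_0, 0]$ gives $\bSigma_x\cdot\bdelta_0 = \bSigma_x^2\cdot\nabla_{\bPhi}\ell[\bPhi_0,0] = \nabla_{\bPhi}\ell[\bPhi_0,0] = \blamb_0$. Uniqueness of the recursion then yields $\Delta^{\rm RHEL}_{\bPhi}(k) = \blamb_k$ for all $k$, and in particular $\bdelta_{k+1/3} = \bSigma_x\blamb_{k+1/3}$ and $\bdelta_{k+2/3} = \bSigma_x\blamb_{k+2/3}$.

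For the parameter gradient I would expand $\partial_{\epsilon}(\nabla_2 H^{1/2})|_{\epsilon=0}$ using $H=T+V$ and differentiating through the two interior evaluation points. Each resulting term $\nabla^2_{1,2}(T\text{ or }V)[\bPhi^\star_{-(k+j)}]\cdot\bdelta_{k+j}$ converts, via Lemma~\ref{lma:tricks} and the same Pauli algebra, into $-\nabla^2_{1,2}(\cdot)[\bPhi_{-(k+j)}]\cdot\bJ^\top\cdot\blamb_{k+j}$. The work is then to reconcile this symmetric, four-term expression with the asymmetric three-substep formula of Corollary~\ref{cor:fine-grained-bptt}. This rests on the separability block structure: since $T$ depends only on $\bpi$ and $V$ only on $\bphi$, the products $\nabla^2_{1,2}T\cdot\bJ^\top\cdot\blamb$ and $\nabla^2_{1,2}V\cdot\bJ^\top\cdot\blamb$ see only $\blamb^{\bphi}$ and $\blamb^{\bpi}$ respectively; combined with the observation (read off the $\blamb$-recursion of Corollary~\ref{cor:fine-grained-bptt}) that the $T$-substeps leave $\blamb^{\bphi}$ unchanged while the $V$-substep leaves $\blamb^{\bpi}$ unchanged, and with Lemma~\ref{lma:shifting-by-1/3} to slide the Hessian evaluation points across substeps where the inert block is frozen, the $H^{1/2}$ averaging reproduces precisely the $(\delta/2,\delta,\delta/2)$ weighting and the correct staggered multiplier indices of BPTT. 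The $\nabla_2\ell$ terms agree by the time-reversal invariance of $\ell$, i.e. $\nabla_2\ell[\bPhi^\star_{-k}] = \nabla_2\ell[\bPhi_{-k}]$, and the input-gradient identity follows verbatim with $\nabla_3$ in place of $\nabla_2$ and no cost term.

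The main obstacle is the final reconciliation in the third paragraph: unlike the continuous case, where the adjoint and the parameter gradient pair cleanly at a single time, the discrete estimator built from $H^{1/2}$ is symmetric in the two interior substeps whereas BPTT weights the three Euler substeps asymmetrically and evaluates the multipliers at staggered fractional indices. Matching them \emph{exactly}, rather than only to leading order in $\delta$, is precisely what forces the careful use of the separable block structure together with Lemma~\ref{lma:shifting-by-1/3}, and it is the only place where the discrete argument genuinely departs from its continuous template.
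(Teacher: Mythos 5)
Your proposal is correct and follows essentially the same route as the paper's proof: differentiate the echo substeps at $\epsilon=0$, use discrete reversibility together with Lemma~\ref{lma:tricks} and the Pauli identities of Lemma~\ref{lma:pauli} to show $\bSigma_x\cdot\partial_\epsilon\bPhi^e_k|_{\epsilon=0}$ obeys the $\blamb$-recursion of Corollary~\ref{cor:fine-grained-bptt} with matching initial condition, then reconcile the symmetric $H^{1/2}$ estimator with the asymmetric three-substep BPTT weighting via separability and Lemma~\ref{lma:shifting-by-1/3} (the paper runs this last regrouping in the opposite direction, from the BPTT formula toward $H^{1/2}$, but the identities used are the same). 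You also correctly read the statement's first identity as $\blamb_k=\lim_{\epsilon\to 0}\Delta^{\rm RHEL}_{\bPhi}(k,\epsilon)$, fixing an evident typo.
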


\begin{proof}[Proof of Corollary~\ref{cor:discrete-rhel}]
Let $k\in [0, K-1]$. We define:
\begin{align*}
\Delta_{\bPhi}^{\rm RHEL}(k) &:= \lim_{\epsilon \to 0}\Delta_{\bPhi}^{\rm RHEL}(k, \epsilon)= \bSigma_x \cdot \partial_{\epsilon}\bPhi(k, \epsilon)|_{\epsilon=0} \\
    \Delta_{\btheta}^{\rm RHEL}(k) &:= \lim_{\epsilon \to 0}\Delta_{\btheta}^{\rm RHEL}(k, \epsilon) = \nabla_2\ell[\bPhi_k^e(0), \btheta, -k] - \delta \partial_{\epsilon}\left(\nabla_2 H^{1/2}[\bPhi^e_k(\epsilon), \btheta, \bu_{-(k+1)}]\right)|_{\epsilon=0} \\
    \Delta_{\bu}^{\rm RHEL}(k) &:= \lim_{\epsilon \to 0}\Delta_{\bu}^{\rm RHEL}(k, \epsilon) = - \delta \partial_{\epsilon}\left(\nabla_3 H^{1/2}[\bPhi^e_k(\epsilon), \btheta, \bu_{-(k+1)}]\right)|_{\epsilon=0} 
\end{align*}

\paragraph{Derivation of $\blamb_k = \lim_{\epsilon\to 0}\Delta_{\btheta}^{\rm RHEL}(k, \epsilon)$.} We proceed exactly as in Theorem~\ref{app:thm:continuous-rhel} with some subtle adaptations which we highlight. Differentiating the dynamics of $\bPhi^e$ between $k$ and $k+1/3$ and proceeding as in the proof of Theorem~\ref{app:thm:continuous-rhel} using  Lemma~\ref{lma:pauli}, Lemma~\ref{lma:tricks} and Corollary~\ref{cor:reversibility-discrete} (as the discrete counterpart of Lemma~\ref{lma:time-reversal} which was used for Theorem~\ref{app:thm:continuous-rhel}), we obtain:
\begin{equation*}
   \Delta_{\bPhi}^{\rm RHEL}(k + 1/3) = \Delta_{\bPhi}^{\rm RHEL}(k)+ \frac{\delta}{2}\nabla_{\bPhi}^2 T[\bPhi_{-k}, \btheta, \bu_{-(k+1)}]\cdot \bJ^\top \cdot \Delta_{\bPhi}^{\rm RHEL}(k)
\end{equation*}
However note that this does not correctly match the dynamics satisfied by $\blamb$ inside Corollary~\ref{cor:fine-grained-bptt} between $k$ and $k+1/3$: the Hessian $\nabla_{\bPhi}^2 T$ should instead be evaluated at $\bPhi_{-(k+1/3)}$. Fortunately, using Lemma~\ref{lma:shifting-by-1/3}:
\begin{equation*}
    \nabla_{\bPhi}^2 T[\bPhi_{-k}, \btheta, \bu_{-(k+1)}] = \nabla_{\bPhi}^2 T[\bPhi_{-(k + 1/3)}, \btheta, \bu_{-(k+1)}]
\end{equation*}
Therefore we get:
\begin{equation*}
   \Delta_{\bPhi}^{\rm RHEL}(k + 1/3) = \Delta_{\bPhi}^{\rm RHEL}(k)+ \frac{\delta}{2}\nabla_{\bPhi}^2 T[\bPhi_{-(k+1/3)}, \btheta, \bu_{-(k+1)}]\cdot \bJ^\top \cdot \Delta_{\bPhi}^{\rm RHEL}(k)
\end{equation*}

Proceeding the same way on $\bPhi^e_{k+2/3}$ and $\bPhi^e_{k+1}$, we get altogether:

\begin{align*}
&\left\{
\begin{array}{l}
   \Delta_{\bPhi}^{\rm RHEL}(k + 1/3) = \Delta_{\bPhi}^{\rm RHEL}(k)+ \frac{\delta}{2}\nabla_{\bPhi}^2 T[\bPhi_{-(k+1/3)}, \btheta, \bu_{-(k+1)}]\cdot \bJ^\top \cdot \Delta_{\bPhi}^{\rm RHEL}(k)\\
    \Delta_{\bPhi}^{\rm RHEL}(k + 2/3)= \Delta_{\bPhi}^{\rm RHEL}(k + 1/3) + \delta\nabla_{\bPhi}^2 V[\bPhi_{-(k + 2/3)}, \btheta, \bu_{-(k+1)}]\cdot \bJ^\top \cdot \Delta_{\bPhi}^{\rm RHEL}(k + 1/3) \\
   \Delta_{\bPhi}^{\rm RHEL}(k + 1) = \Delta_{\bPhi}^{\rm RHEL}(k + 2/3)+ \frac{\delta}{2}\nabla_{\bPhi}^2 T[\bPhi_{-(k + 1)}, \btheta, \bu_{-(k+1)}]\cdot \bJ^\top \cdot \Delta_{\bPhi}^{\rm RHEL}(k + 2/3) \\
   \qquad + \nabla_{\bPhi}\ell[\bPhi_{-(k+1)}, \btheta, -(k+1)]
\end{array}
 \right.
\end{align*}
$\Delta_{\bPhi}^{\rm RHEL}$ satisfying the same equations as $(\blamb_k)_k$ given by BPTT, together with same initial conditions:
\begin{equation*}
    \Delta_{\bPhi}^{\rm RHEL}(0) = \nabla_{\bPhi}\ell[\bPhi_0, \btheta, 0]
\end{equation*}

yields the desired equality.

\paragraph{Derivation of $g_{\btheta}(k) = \lim_{\epsilon \to 0} \Delta_{\btheta}^{\rm RHEL}(k, \epsilon)$.} Proceeding in the same way as in the derivation of Theorem~\ref{app:thm:continuous-rhel}, starting from the expression of $g_{\btheta}(k)$ derived in Corollary~\ref{cor:fine-grained-bptt}, using $\Delta^{\rm RHEL}_{\bPhi}(k)=\blamb_k$ and paying attention to evaluating Jacobian at the right places using Lemma~\ref{lma:shifting-by-1/3}, we obtain $\forall k = 0, \cdots, K-1$:
\begin{align*}
g_{\btheta}(k) &= \nabla_2\ell[\bPhi_k^e(0), \btheta, -k] \\
&-\frac{\delta}{2}\left.\partial_{\epsilon} \left\{\nabla_{\btheta}T[\bPhi^e_{k}(\epsilon), \btheta, \bu_{-(k+1)}] +2\nabla_{\btheta}V[\bPhi^e_{k + 1/3}(\epsilon), \btheta, \bu_{-(k+1)}] +\nabla_{\btheta}T[\bPhi^e_{k + 2/3}(\epsilon), \btheta, \bu_{-(k+1)}]\right\}\right|_{\epsilon=0}.
\end{align*}
There again, using Lemma~\ref{lma:shifting-by-1/3}:
\begin{equation*}
    \nabla_{\btheta}T[\bPhi^e_{k}(\epsilon), \btheta, \bu_{-(k+1)}]  = \nabla_{\btheta}T[\bPhi^e_{k+1/3}(\epsilon), \btheta, \bu_{-(k+1)}], \quad \nabla_{\btheta}V[\bPhi^e_{k + 1/3}(\epsilon), \btheta, \bu_{-(k+1)}]  = \nabla_{\btheta}V[\bPhi^e_{k+2/3}(\epsilon), \btheta, \bu_{-(k+1)}],
\end{equation*}
therefore:
\begin{align*}
g_{\btheta}(k) &= \nabla_2\ell[\bPhi_k^e(0), \btheta, -k] -\frac{\delta}{2}\left.\partial_{\epsilon} \left\{\nabla_{\btheta}T[\bPhi^e_{k + 1/3}(\epsilon), \btheta, \bu_{-(k+1)}] +\nabla_{\btheta}V[\bPhi^e_{k + 1/3}(\epsilon), \btheta, \bu_{-(k+1)}] \right.\right.\\
&\left.\left.+\nabla_{\btheta}V[\bPhi^e_{k + 2/3}(\epsilon), \btheta, \bu_{-(k+1)}] +\nabla_{\btheta}T[\bPhi^e_{k + 2/3}(\epsilon), \btheta, \bu_{-(k+1)}]\right\}\right|_{\epsilon=0}. \\
&= \nabla_2\ell[\bPhi_k^e(0), \btheta, -k]-\frac{\delta}{2}\left.\partial_{\epsilon} \left(H[\bPhi^e_{k + 1/3}(\epsilon), \btheta, \bu_{-(k+1)}] + H[\bPhi^e_{k + 2/3}(\epsilon), \btheta, \bu_{-(k+1)}]\right)\right|_{\epsilon=0}\\
&= \nabla_2\ell[\bPhi_k^e(0), \btheta, -k]-\delta \partial_{\epsilon}\left.\left(H^{1/2}[\bPhi^e_k(\epsilon), \btheta, \bu_{-(k+1)}]\right)\right|_{\epsilon=0} = \lim_{\epsilon \to 0} \Delta^{\rm RHEL}_{\btheta}(k, \epsilon)
\end{align*}
\paragraph{Derivation of $g_{\bu}(k) = \lim_{\epsilon \to 0} \Delta_{\bu}^{\rm RHEL}(k, \epsilon)$.} Strictly identical to the above paragraph. 
\end{proof}

\begin{remark}
    Note that the echo dynamics prescribed by Corollary~\ref{cor:discrete-rhel} are {\bfseries implicit}:
\begin{align*}
\left\{
\begin{array}{l}
    \bPhi^e_0(\epsilon) = \bPhi^\star_0 +\epsilon \bSigma_x \cdot \nabla_{\bPhi} \ell[\bPhi_0, \btheta, 0], \\ 
    \forall k =0, \cdots, K - 1: \\
    \quad \bPhi^e_{k+1/3}(\epsilon) = \mathcal{M}_{T, \delta/2}[\bPhi^e_{k}(\epsilon), \btheta, \bu_{-(k+1)}]\\
    \quad \bPhi^e_{k+2/3}(\epsilon) = \mathcal{M}_{V, \delta}[\bPhi^e_{k + 1/3}(\epsilon), \btheta, \bu_{-(k+1)}]\\
    \quad \textcolor{red}{\bPhi^e_{k+1}}(\epsilon) = \mathcal{M}_{T, \delta/2}[\bPhi^e_{k + 2/3}(\epsilon), \btheta, \bu_{-(k+1)}]-\epsilon \bJ \cdot \nabla_{\bPhi^e} \ell[\textcolor{red}{\bPhi^e_{k+1}}, \btheta, -(k+1)],
\end{array}
\right.
\end{align*}
where $\bPhi^e_{k+1}$ appears, as highlighted in red, on both sides of the last step. A straightforward way to make the echo dynamics explicit while preserving the theoretical guarantees of Corollary~\ref{cor:discrete-rhel} is to \emph{linearize} the nudging signal, namely using instead the following set of equations:
\begin{align*}
\left\{
\begin{array}{l}
    \bPhi^e_0(\epsilon) = \bPhi^\star_0 +\epsilon \bSigma_x \cdot \nabla_{\bPhi} \ell[\bPhi_0, \btheta, 0], \\ 
    \forall k =0, \cdots, K - 1: \\
    \quad \bPhi^e_{k+1/3}(\epsilon) = \mathcal{M}_{T, \delta/2}[\bPhi^e_{k}(\epsilon), \btheta, \bu_{-(k+1)}]\\
    \quad \bPhi^e_{k+2/3}(\epsilon) = \mathcal{M}_{V, \delta}[\bPhi^e_{k + 1/3}(\epsilon), \btheta, \bu_{-(k+1)}]\\
    \quad \bPhi^e_{k+1}(\epsilon) = \mathcal{M}_{T, \delta/2}[\bPhi^e_{k + 2/3}(\epsilon), \btheta, \bu_{-(k+1)}]\textcolor{red}{+\epsilon \bSigma_x \cdot \nabla_{\bPhi^e} \ell[\bPhi_{-(k+1)}, \btheta, -(k+1)]},
\end{array}
\right.
\end{align*}
Note that the $\epsilon \bJ$ of the original implicit equation becomes $-\epsilon \bSigma_x$ in its linearized counterpart. 
\end{remark}
This remark leads us to a slight variant of Corollary~\ref{cor:discrete-rhel}. 

\begin{coloredcorollary}
\label{cor:discrete-rhel-2}
Under the assumptions of Def.~\ref{app:def:hru}--\ref{app:def:discrete-optimization-problem}, let $\left( \bPhi_k\right)_k$ satisfy the recursive equation:
\begin{equation*}
   \bPhi_{-K} = \bx, \quad \bPhi_{k + 1} = \mathcal{M}_{H, \delta}[\bPhi_{k}, \btheta, \bu_k] \quad \forall k =-K, \cdots, -1,
\end{equation*}
and let $\bPhi^e$ satisfy:
\begin{align*}
\left\{
\begin{array}{l}
    \bPhi^e_0(\epsilon) = \bPhi^\star_0 +\epsilon \bSigma_x \cdot \bm{y}_0, \\ 
    \forall k =0, \cdots, K - 1: \\
    \quad \bPhi^e_{k+1/3}(\epsilon) = \mathcal{M}_{T, \delta/2}[\bPhi^e_{k}(\epsilon), \btheta, \bu_{-(k+1)}]\\
    \quad \bPhi^e_{k+2/3}(\epsilon) = \mathcal{M}_{V, \delta}[\bPhi^e_{k + 1/3}(\epsilon), \btheta, \bu_{-(k+1)}]\\
    \quad \bPhi^e_{k+1}(\epsilon) = \mathcal{M}_{T, \delta/2}[\bPhi^e_{k + 2/3}(\epsilon), \btheta, \bu_{-(k+1)}]+\epsilon \bSigma_x \cdot \bm{y}_{-(k+1)},
\end{array}
\right.
\end{align*}

where $\bm{y} \in \mathbb{R}^{K \times d_{\bPhi}}$ does not depend on $\bPhi$. Then the same conclusions as Corollary~\ref{cor:discrete-rhel} hold, with $(\blamb_k)_{k\in \llbracket 0, K-1 \rrbracket}$ satisfying $\blamb_0 = \bm{y}_0$ and $\forall k\in [0, K-1]$:
\begin{equation*}
     \left\{ \begin{array}{ll}
    \blamb_{k + 1/3} &= \blamb_{k} + \frac{\delta}{2} \nabla^2_{1} T[\bPhi_{-(k + 1/3)}, \btheta, \bu_{-(k+1)}]\cdot \bJ^\top\cdot \blamb_{k} \\
    \blamb_{k + 2/3}  &= \blamb_{k + 1/3} + \delta \nabla^2_{1} V[\bPhi_{-(k + 2/3)}, \btheta, \bu_{-(k+1)}]\cdot \bJ^\top\cdot \blamb_{k + 1/3} \\
    \blamb_{k + 1} &= \blamb_{k + 2/3} + \frac{\delta}{2} \nabla^2_{1} T[\bPhi_{-(k + 1)}, \btheta, \bu_{-(k+1)}]\cdot \bJ^\top\cdot \blamb_{k + 2/3} +\bm{y}_{-(k+1)}
    \end{array}
    \right.
\end{equation*}
\end{coloredcorollary}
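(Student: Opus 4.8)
The plan is to mirror the proof of Corollary~\ref{cor:discrete-rhel} essentially verbatim, isolating the single place where the explicit injection $\epsilon \bSigma_x \cdot \bm{y}$ differs from the implicit loss-gradient nudge. As before, I set $\Delta_{\bPhi}^{\rm RHEL}(k) := \lim_{\epsilon \to 0}\Delta_{\bPhi}^{\rm RHEL}(k, \epsilon) = \bSigma_x \cdot \partial_{\epsilon}\bPhi^e_k(\epsilon)|_{\epsilon=0}$ and aim to show that $k \to \Delta_{\bPhi}^{\rm RHEL}(k)$ obeys the fractional-step recursion claimed for $(\blamb_k)$, after which the identities $g_{\btheta}(k) = \lim_{\epsilon\to 0} \Delta_{\btheta}^{\rm RHEL}(k,\epsilon)$ and $g_{\bu}(k) = \lim_{\epsilon\to 0} \Delta_{\bu}^{\rm RHEL}(k,\epsilon)$ follow exactly as in Corollary~\ref{cor:discrete-rhel}.

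First I would handle the three interior sub-steps $\bPhi^e_{k+1/3}$, $\bPhi^e_{k+2/3}$, $\bPhi^e_{k+1}$ prior to the injection. These involve only $\mathcal{M}_{T,\delta/2}$ and $\mathcal{M}_{V,\delta}$ and are untouched by the change of nudging. Differentiating each in $\epsilon$ at $\epsilon=0$, invoking Corollary~\ref{cor:reversibility-discrete} to replace $\bPhi^e_k(0)$ by the time-reversed free trajectory $\bPhi^\star_{-k}$, and then using Lemma~\ref{lma:pauli}, Lemma~\ref{lma:tricks} and the fractional-shift identities of Lemma~\ref{lma:shifting-by-1/3} to place each Hessian $\nabla^2_1 T$, $\nabla^2_1 V$ at the correct half-step, reproduces the first three lines of the claimed recursion for $\Delta_{\bPhi}^{\rm RHEL}$ identically to Corollary~\ref{cor:discrete-rhel}.

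The only genuinely new computation is the additive injection. Because $\bm{y}$ does not depend on $\bPhi$, differentiating $\bPhi^e_{k+1}(\epsilon) = \mathcal{M}_{T,\delta/2}[\cdots] + \epsilon \bSigma_x \cdot \bm{y}_{-(k+1)}$ in $\epsilon$ contributes exactly $\bSigma_x \cdot \bm{y}_{-(k+1)}$ with no higher-order terms; left-multiplying by $\bSigma_x$ and using $\bSigma_x^2 = \bm{I}$ (Lemma~\ref{lma:pauli}) turns this into the inhomogeneous term $+\bm{y}_{-(k+1)}$ in the last line of the recursion. Likewise the initial condition $\bPhi^e_0(\epsilon) = \bPhi^\star_0 + \epsilon \bSigma_x \cdot \bm{y}_0$ gives $\Delta_{\bPhi}^{\rm RHEL}(0) = \bSigma_x^2 \cdot \bm{y}_0 = \bm{y}_0$. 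This is precisely the simplification advertised in the preceding remark: the chain of Pauli and time-reversal identities that converted $-\epsilon \bJ \nabla \ell[\bPhi^e_{k+1}]$ into $+\nabla_1 \ell$ in Corollary~\ref{cor:discrete-rhel} now collapses to the trivial identity $\bSigma_x^2 = \bm{I}$, since the signal is already presented in the $\bSigma_x \cdot (\cdot)$ form and is constant in the state.

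With $\Delta_{\bPhi}^{\rm RHEL}$ shown to satisfy the same initial condition and recursion as the modified $(\blamb_k)$, uniqueness of the recursion gives $\Delta_{\bPhi}^{\rm RHEL}(k) = \blamb_k$ for all $k$; the parameter- and input-gradient identities then follow by repeating, \emph{mutatis mutandis}, the $g_{\btheta}$ and $g_{\bu}$ arguments of Corollary~\ref{cor:discrete-rhel}, which use only $\Delta_{\bPhi}^{\rm RHEL}(k) = \blamb_k$ together with Lemma~\ref{lma:shifting-by-1/3}. I expect the main obstacle to be purely a bookkeeping one: confirming that replacing the state-dependent nudge by the constant $\bm{y}$ genuinely leaves the interior-step tangent propagation unaltered, i.e. that no $\epsilon^2$ cross-term leaks from the final injection back into earlier steps. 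This is guaranteed precisely because the injection enters only at the last sub-step and $\bm{y}$ is independent of $\bPhi$, so the tangent system remains linear with the same homogeneous part as in Corollary~\ref{cor:discrete-rhel}.
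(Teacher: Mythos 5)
Your proposal is correct and follows essentially the same route as the paper, whose own proof of this corollary is simply the remark that it is identical to that of Corollary~\ref{cor:discrete-rhel}. You in fact spell out the only substantive check more carefully than the paper does: since $\bm{y}$ is state-independent and enters already in the $\bSigma_x\cdot(\cdot)$ form, the tangent recursion keeps the same homogeneous part while the Pauli/time-reversal manipulations of the nudging term collapse to $\bSigma_x^2 = \bm{I}$, giving $\Delta_{\bPhi}^{\rm RHEL}(0)=\bm{y}_0$ and the inhomogeneous term $\bm{y}_{-(k+1)}$, exactly as claimed.
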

\begin{proof}[Proof Corollary~\ref{cor:discrete-rhel-2}] Identical to the proof of Corollary~\ref{cor:discrete-rhel}. 
\end{proof}
\newpage
\subsubsection{Proof of Theorem~\ref{thm:rhel-chaining}}
\begin{coloredtheorem}
\label{app:thm:rhel-chaining}
Assuming a HSSM (Def.~\ref{app:def:hssm}) and the optimization problem depicted in Def.~\ref{app:def:multi-optimization-problem}, we have:
\begin{equation*}
    \forall \ell = 0, \cdots, N-1: \quad d_{\btheta^{(\ell)}} L = \lim_{\epsilon \to 0} \Delta \btheta^{(\ell)}(\epsilon),
\end{equation*}
where $\Delta \btheta^{(\ell)}(\epsilon)$ can be recursively computed backwards, starting from the {\bfseries top-most block} as:
\begin{align*}
\overline{\bPhi}^{e, (N)}(\epsilon) &= \bm{\mathcal{M}}_{H^{(N-1)},\delta}[\overline{\bPhi}^{e, (N)}(\epsilon), \btheta^{(N-1)}, \widetilde{\bPhi}^{(N-1)}]+\epsilon \bSigma_x \cdot \nabla_{\bPhi^{(N)}}L \\
\Delta \btheta^{(N - 1)}(\epsilon) &=\sum_{k=0}^{K-1}\nabla_2 \ell[\bPhi^e_k, \btheta^{(L-1)}, -k]\\
-\frac{\delta}{2\epsilon}\sum_{k=0}^{K-1}&\left(\bm{\nabla_{2}}H^{1/2}[\bPhi^{e, (L)}_k(\epsilon), \btheta^{(N-1)}, \bPhi^{(N-1)}_{-k}] -  \bm{\nabla_{2}}H^{1/2}[\bPhi^{e, (N)}_k(-\epsilon), \btheta^{(N-1)}, \bPhi^{(N-1)}_{-k}]\right),  \\
\overline{\Delta}_{\bPhi^{(N-1)}}(\epsilon) &=-\frac{\delta}{2\epsilon}\left(\bm{\nabla_{3}}H^{1/2}[\overline{\bPhi}^{e, (N)}(\epsilon), \btheta^{(N-1)}, \widetilde{\bPhi}^{(N-1)}] -  \bm{\nabla_{3}}H^{1/2}[\overline{\bPhi}^{e, (L)}(-\epsilon), \btheta^{(N-1)}, \widetilde{\bPhi}^{(N-1)}]\right)
\end{align*}
and subsequently for {\bfseries upstream blocks}, \emph{i.e.} $\forall \ell = N - 2, \cdots, 0$:
\begin{align*}
\overline{\bPhi}^{e, (\ell + 1)}(\epsilon) &= \bm{\mathcal{M}}_{H^{(\ell)},\delta}[\overline{\bPhi}^{e, (\ell + 1)}(\epsilon), \btheta^{(\ell)}, \widetilde{\bPhi}^{(\ell)}]+\epsilon \bSigma_x \cdot \widetilde{\Delta}_{\bPhi^{(\ell + 1)}}(\epsilon) \\
\Delta \btheta^{(\ell)}(\epsilon) &=-\frac{\delta}{2\epsilon}\sum_{k=0}^{K-1}\left(\bm{\nabla_{2}}H^{1/2}[\bPhi^{e, (\ell + 1)}_k(\epsilon), \btheta^{(\ell)}, \bPhi^{(\ell)}_{-k}] -  \bm{\nabla_{2}}H^{1/2}[\bPhi^{e, (\ell + 1)}_k(-\epsilon), \btheta^{(\ell)}, \bPhi^{(\ell)}_{-k}]\right)   \\
\Delta_{\bPhi^{(\ell)}}(\epsilon) &=-\frac{\delta}{2\epsilon}\left(\bm{\nabla_{3}}H^{1/2}[\overline{\bPhi}^{e, (\ell + 1)}(\epsilon), \btheta^{(\ell)}, \widetilde{\bPhi}^{(\ell)}] -  \bm{\nabla_{3}}H^{1/2}[\overline{\bPhi}^{e, (\ell + 1)}(-\epsilon), \btheta^{(\ell)}, \widetilde{\bPhi}^{(\ell)}]\right)   
\end{align*}
\end{coloredtheorem}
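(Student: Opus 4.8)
The plan is to prove the statement by \textbf{backward induction on the block index} $\ell$, reducing the multilevel problem (Def.~\ref{app:def:multi-optimization-problem}) to a chain of single-block problems, each solved by the explicit variant of discrete RHEL established in Corollary~\ref{cor:discrete-rhel-2}. The crucial structural fact is that BPTT on a composition of HRUs \emph{factorizes across blocks}: since $\btheta^{(\ell)}$ enters the model only through block $\ell$, and since block $\ell$ communicates with the rest of the network solely through its output sequence $\overline{\bPhi}^{(\ell+1)}$ (which is the input to block $\ell+1$), the total gradient $d_{\btheta^{(\ell)}}L$ coincides with the \emph{single-block} parameter gradient obtained by running BPTT on block $\ell$ in isolation, provided the per-timestep injected sensitivity equals $\bm{y}^{(\ell+1)} := (d_{\bPhi^{(\ell+1)}_{-k}}L)_k$, the loss sensitivity to block $\ell$'s output. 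Running the same single-block BPTT simultaneously returns the input sensitivity $\bm{y}^{(\ell)} := (d_{\bPhi^{(\ell)}_{-k}}L)_k$, which is exactly the signal that must be injected into the next block down.

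First I would make this decomposition rigorous. Writing the Lagrangian of Def.~\ref{app:def:multi-optimization-problem} with one multiplier sequence per constraint $\overline{\bPhi}^{(\ell+1)} = \bm{\mathcal{M}}^{(\ell)}_{H^{(\ell)},\delta}[\,\cdot\,]$ and extremizing exactly as in the proof of Theorem~\ref{thm:bptt}, the stationarity conditions separate block by block: the multipliers of block $\ell$ obey the \emph{same} recursion as in Corollary~\ref{cor:fine-grained-bptt} but with the injected cost gradient replaced by $\bm{y}^{(\ell+1)}$, while the cross-block coupling collapses to the single identity $\bm{y}^{(\ell)} = g_{\bPhi^{(\ell)}}$, the input gradient of block $\ell$. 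This yields the recursion $\bm{y}^{(N)}_{-k} = \nabla_{1}\ell[\bPhi^{(N)}_{-k}, -k]$ at the top, together with $\bm{y}^{(\ell)} = (\text{single-block input gradient of block }\ell\text{ fed }\bm{y}^{(\ell+1)})$ and $d_{\btheta^{(\ell)}}L = (\text{single-block parameter gradient of block }\ell\text{ fed }\bm{y}^{(\ell+1)})$.

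With this in hand the induction is immediate. For the base case $\ell = N-1$, I would apply Corollary~\ref{cor:discrete-rhel-2} to the top block with the $\epsilon$-independent signal $\bm{y} = \bm{y}^{(N)}$, obtaining $\lim_{\epsilon\to 0}\Delta\btheta^{(N-1)}(\epsilon) = d_{\btheta^{(N-1)}}L$ and $\lim_{\epsilon\to 0}\widetilde{\Delta}_{\bPhi^{(N-1)}}(\epsilon) = \bm{y}^{(N-1)}$. For the inductive step, assuming $\lim_{\epsilon\to 0}\widetilde{\Delta}_{\bPhi^{(\ell+1)}}(\epsilon) = \bm{y}^{(\ell+1)}$, I would again invoke Corollary~\ref{cor:discrete-rhel-2} on block $\ell$ with nudging signal $\bm{y}^{(\ell+1)}$, which returns both the correct parameter gradient and the next input gradient $\bm{y}^{(\ell)}$, closing the induction down to $\ell = 0$.

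The main obstacle is a limit-interchange subtlety hidden in the recursion. In the algorithm the nudge injected into block $\ell$ is the \emph{finite}-$\epsilon$ quantity $\widetilde{\Delta}_{\bPhi^{(\ell+1)}}(\epsilon)$, whereas Corollary~\ref{cor:discrete-rhel-2} is stated for a signal $\bm{y}$ that is fixed and independent of $\epsilon$ and of the current block's state. Two points must be checked to justify pushing the single limit $\epsilon \to 0$ through the whole chain: (i) the injected signal never depends on block $\ell$'s own echo trajectory, being computed entirely from \emph{downstream}, already-executed blocks, so the hypothesis that $\bm{y}$ is independent of $\bPhi$ holds verbatim at each stage; and (ii) since every $\bm{\mathcal{M}}^{(\ell)}_{H^{(\ell)},\delta}$ and $H^{(\ell)}$ is twice differentiable (Def.~\ref{app:def:hru}), the echo state $\overline{\bPhi}^{e,(\ell+1)}(\epsilon)$ is jointly smooth in $\epsilon$ and in the injected signal, so replacing the exact $\bm{y}^{(\ell+1)}$ by $\widetilde{\Delta}_{\bPhi^{(\ell+1)}}(\epsilon) = \bm{y}^{(\ell+1)} + \mathcal{O}(\epsilon)$ perturbs each finite-difference estimator by only $\mathcal{O}(\epsilon)$, which vanishes in the limit. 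Making this uniform-along-the-chain smoothness argument precise — rather than the per-block algebra, which merely repeats Corollary~\ref{cor:discrete-rhel} — is the real content of the proof.
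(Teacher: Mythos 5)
Your proposal follows essentially the same route as the paper's own proof: write the Lagrangian of the multilevel problem so that the BPTT stationarity conditions factorize block by block (the cross-block coupling being exactly the per-timestep input gradient $\partial_3\mathcal{M}^\top\cdot\blamb^{(\ell+1)}_k$ of the downstream block), then induct backwards from the top block, invoking Corollary~\ref{cor:discrete-rhel-2} at each stage with the injected sensitivity playing the role of the external signal $\bm{y}$. The one point where you go beyond the paper is the limit-interchange issue you flag in (i)--(ii): the paper's proof invokes Corollary~\ref{cor:discrete-rhel-2} with $\bm{y}=\Delta_{\bPhi^{(\ell)}}:=\lim_{\epsilon\to 0}\Delta_{\bPhi^{(\ell)}}(\epsilon)$, even though the theorem statement and Alg.~\ref{alg:discrete-time-rhel-short} inject the finite-$\epsilon$ estimate $\widetilde{\Delta}_{\bPhi^{(\ell+1)}}(\epsilon)$ into the next block's echo dynamics; your observation that joint smoothness of the echo map in $(\epsilon,\text{signal})$ is needed to replace the injected signal by its limit without changing the limiting finite-difference estimators is a genuine refinement of a step the paper glosses over. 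In short: same decomposition, same induction, same key corollary, with your version being more careful about why the single limit $\epsilon\to 0$ can be pushed through the whole chain.
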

\begin{proof}
Re-using the vectorized notations introduced in subsection~\ref{subsec:hssms} and used in Remark~\ref{app:rmk:vectorized-lagrangian}, the Lagrangian associated to the optimization problem depicted in Def.~\ref{app:def:multi-optimization-problem} reads:
    \begin{align*}
        \mathcal{L} &= \bm{1}^\top \cdot \ell[\widetilde{\bPhi}^{(L)}, \btheta^{(L-1)}]+ \sum_{\ell = 0}^{L-1}{\rm Tr}\left[\left(\bm{\mathcal{M}}_{H^{(\ell), \delta}}[\widetilde{\bPhi}^{(\ell + 1)}, \btheta^{(\ell)}, \widetilde{\bu}^{(\ell)}]-\widetilde{\bPhi}^{(\ell + 1)}\right)\cdot\left(\overline{\blamb}^{(\ell + 1)}\right)^\top\right] \\
        &= \sum_{k=0}^{K-1}\ell[\bPhi^{(L)}_{-k}, \btheta^{(L)}, -k] + \sum_{\ell=0}^{L-1}\left(\blamb^{(\ell+1)}_k\right)^\top \cdot \left(\mathcal{M}_{H^{(\ell)}, \delta}[\bPhi^{(\ell+1)}_{-(k+1)}, \btheta^{(\ell)}, \bPhi^{(\ell)}_{-(k+1)}] - \bPhi^{(\ell + 1)}_{-k} \right)
    \end{align*}
We proceed by induction on the block index starting from $\ell=L$.
\paragraph{Initialization ($\ell=L$).} Let $\bPhi^{(N)}_{k}$ and $\blamb^{(N)}_{k}$ for $k \in \llbracket 0, K-1 \rrbracket$ be the critical points of $\mathcal{L}$. By Theorem~\ref{thm:bptt}:
\begin{align*}
d_{\btheta^{(N-1)}}L &= \sum_{k=0}^{K-1} \nabla_{2}\ell\left[\bPhi^{(N)}_{-k}, \btheta^{(N-1)}, -k\right]+ \partial_{2}\mathcal{M}_{H^{(N-1)}, \delta}\left[\bPhi^{(N)}_{-(k+1)}, \btheta^{(L-1)}, \bPhi^{(N-1)}_{-(k+1)}\right]^\top\cdot \blamb^{(N)}_k,
\end{align*}
with $(\blamb^{(N)}_{k})_{k\in \llbracket0, K-1\rrbracket}$ satisfying the following recursion relationship:
\begin{align*}
\left\{
\begin{array}{l}
    \blamb^{(N)}_0 = \nabla_{1}\ell[\bPhi^{(N)}_0, \btheta^{(N-1)}, 0], \\
    \forall k = 0, \cdots, K - 1:\\
    \quad \blamb_{k + 1}^{(N)} = \partial_{1}\mathcal{M}_{H^{(N-1)}, \delta}\left[\bPhi^{(N)}_{-(k+1)}, \btheta^{(N-1)}, \bPhi^{(N-1)}_{-(k+1)}\right]^\top \cdot \blamb^{(N)}_{k} + \nabla_{1}\ell\left[\bPhi^{(N)}_{-(k+1)}, \btheta, -(k+1)\right]
    \end{array}
    \right.
\end{align*}
Given the definition of the dynamics of $\bPhi^{e, (N)}$ \emph{by hypothesis}, we can directly apply Corollary~\ref{cor:discrete-rhel} to obtain:
\begin{align*}
d_{\btheta^{(N-1)}}L &= \sum_{k=0}^{K-1} \nabla_{2}\ell\left[\bPhi^{(N)}_{-k}, \btheta^{(N-1)}, -k\right]-\delta \left.\partial_{\epsilon}\left(\nabla_2 H^{(N-1), 1/2}\left[\bPhi^{e, (N)}_{k}(\epsilon), \btheta^{(L-1)}, \bPhi^{(N-1)}_{-(k+1)}\right]\right)\right|_{\epsilon=0},\\
d_{\bPhi^{(N-1)}_{k-1}}L &= \partial_{3}\mathcal{M}_{H^{(N-1)}, \delta}[\bPhi^{(N)}_{-(k+1)}, \btheta^{(N-1)}, \bPhi^{(N-1)}_{-(k+1)}]^\top\cdot \blamb_k^{(N)} \\
&=-\delta\left.\partial_{\epsilon}\left(\nabla_3 H^{(N-1), 1/2}\left[\bPhi^{e, (N)}_{k}(\epsilon), \btheta^{(L-1)}, \bPhi^{(N-1)}_{-(k+1)}\right]\right)\right|_{\epsilon=0}
\end{align*}

\paragraph{Induction ($\ell + 1 \to \ell$).} Let us assume that the desired property is satisfied at layer $\ell + 1$. We denote again $\bPhi^{(\ell)}_k$  and $\blamb^{(\ell)}_k$ for $k \in \llbracket0, K -1\rrbracket$ the critical point of $\mathcal{L}$.
We have, for $k \in \llbracket 0, K-1 \rrbracket$:
\begin{align*}
\left\{
\begin{array}{l}
    \blamb^{(\ell)}_0 = \partial_{3}\mathcal{M}_{H^{(\ell)}, \delta}\left[\bPhi^{(\ell + 1)}_{0}, \btheta^{(\ell)}, \bPhi^{(\ell)}_{0}\right]^\top \cdot \blamb^{(\ell + 1)}_{0}\\
    \forall k = 0, \cdots, K - 1:\\
    \quad \blamb_{k + 1}^{(\ell)} = \partial_{1}\mathcal{M}_{H^{(\ell-1)}, \delta}\left[\bPhi^{(\ell)}_{-(k+1)}, \btheta^{(\ell-1)}, \bPhi^{(\ell-1)}_{-(k+1)}\right]^\top \cdot \blamb^{(\ell)}_{k} + \partial_{3}\mathcal{M}_{H^{(\ell)}, \delta}\left[\bPhi^{(\ell + 1)}_{-(k+1)}, \btheta^{(\ell)}, \bPhi^{(\ell)}_{-(k+1)}\right]^\top \cdot \blamb^{(\ell + 1)}_{k}
    \end{array}
    \right.
\end{align*}
Using the induction hypothesis at layer $(\ell + 1)$:
\begin{align*}
\partial_{3}\mathcal{M}_{H^{(\ell)}, \delta}\left[\bPhi^{(\ell + 1)}_{-(k+1)}, \btheta^{(\ell)}, \bPhi^{(\ell)}_{-(k+1)}\right]^\top \cdot \blamb^{(\ell + 1)}_{k}&= -\delta\left.\partial_{\epsilon}\left(\nabla_3 H^{(\ell), 1/2}\left[\bPhi^{e, (\ell + 1)}_{k}(\epsilon), \btheta^{(\ell)}, \bPhi^{(\ell)}_{-(k+1)}\right]\right)\right|_{\epsilon=0}\\
&= \lim_{\epsilon \to 0}\Delta_{\bPhi^{(\ell)}_{(k+1)}}(\epsilon)
\end{align*}
Therefore on the one hand, denoting $\Delta_{\bPhi^{(\ell)}} := \lim_{\epsilon \to 0}\Delta_{\bPhi^{(\ell)}}(\epsilon)\in \mathbb{R}^{K\times d_{\bPhi}}$, the dynamics on $\blamb$ rewrite:
\begin{align*}
\left\{
\begin{array}{l}
    \blamb^{(\ell)}_0 = \Delta_{\bPhi^{(\ell)}_{0}}\\
    \forall k = 0, \cdots, K - 1:\\
    \quad \blamb_{k + 1}^{(\ell)} = \partial_{1}\mathcal{M}_{H^{(\ell-1)}, \delta}\left[\bPhi^{(\ell)}_{-(k+1)}, \btheta^{(\ell-1)}, \bPhi^{(\ell-1)}_{-(k+1)}\right]^\top \cdot \blamb^{(\ell)}_{k} + \Delta_{\bPhi^{(\ell)}_{(k+1)}}
    \end{array}
    \right.
\end{align*}

On the other hand, the dynamics of $\bPhi^{e, (\ell)}$ read \emph{by hypothesis}:
\begin{align*}
\left\{
\begin{array}{l}
    \bPhi^{(\ell)}_0 = \left(\bPhi^{(\ell)}_0\right)^\star + \epsilon \bSigma_x \cdot \Delta_{\bPhi^{(\ell)}_{0}}(\epsilon)\\
    \forall k = 0, \cdots, K - 1:\\
    \quad \bPhi_{k + 1}^{e, (\ell)} = \mathcal{M}_{H^{(\ell-1)}, \delta}\left[\bPhi_{k + 1}^{e, (\ell)}, \btheta^{(\ell-1)}, \bPhi^{(\ell-1)}_{-(k+1)}\right]+ \epsilon \bSigma_x \cdot \Delta_{\bPhi^{(\ell)}_{(k+1)}}
    \end{array}
    \right.
\end{align*}
using Corollary~\ref{cor:discrete-rhel-2} with $y=\Delta_{\bPhi^{(\ell)}}$, we conclude that:
\begin{equation*}
    d_{\btheta^{(\ell)}}L = \lim_{\epsilon \to 0}\Delta \btheta^{(\ell)}(\epsilon)
\end{equation*}
\end{proof}

\begin{remark}
Theorem~\ref{app:thm:rhel-chaining}, and more generally our definition of HSSMs (Def.~\ref{app:def:hssm}) assume that the connectivity pattern of the HRU units is a \emph{linear chain}. Note that while we chose this hypothesis for the sake of clarity of our results and their derivations, Theorem~\ref{app:thm:rhel-chaining} could be seamlessly extended to any \emph{Directed Acyclic Graph} (DAG) of HRUs. This allows, for instance as a simple and realistic case, to use skip connections across HRUs within HSSMs. 
\end{remark}

\begin{remark}
\label{app:rmk:rhel-chaining}
Note that RHEL chaining as prescribed by Theorem~\ref{app:thm:rhel-chaining} {\bfseries implicitly chains RHEL and automatic differentiation}. Indeed, if $H$ explicitly parametrizes feedforward mappings across HRUs as:

\begin{equation}
    H^{(\ell)}\left[\bPhi^{e, (\ell + 1)}_{k}(\epsilon), \btheta^{(\ell)}, \bPhi^{(\ell)}_{-(k+1)}\right] = H^{(\ell)}\left[\bPhi^{e, (\ell + 1)}_{k}(\epsilon), \btheta^{(\ell)}_\alpha, F\left[\bPhi^{(\ell)}_{-(k+1)}, \btheta^{(\ell)}_{\beta}\right]\right],
\end{equation}
then, denoting $\bu^{(\ell)} := F\left[\bPhi^{(\ell)}_{-(k+1)}, \btheta^{(\ell)}_{\beta}\right]$, we have:
\begin{align*}
&\left.\partial_{\epsilon}\left(\nabla_3 H^{(\ell), 1/2}\left[\bPhi^{e, (\ell + 1)}_{k}(\epsilon), \btheta^{(\ell)}, \bPhi^{(\ell)}_{-(k+1)}\right]\right)\right|_{\epsilon=0}\\
&=\partial_1 F\left[\bPhi^{(\ell)}_{-(k+1)}, \btheta^{(\ell)}_{\beta}\right]^\top \cdot \left.\partial_{\epsilon}\left(\nabla_3 H^{(\ell), 1/2}\left[\bPhi^{e, (\ell + 1)}_{k}(\epsilon), \btheta^{(\ell)}_\alpha, \bu^{\ell}\right]\right)\right|_{\epsilon=0} \\
&\approx \textcolor{red}{\partial_1 F\left[\bPhi^{(\ell)}_{-(k+1)}, \btheta^{(\ell)}_{\beta}\right]^\top }\cdot \textcolor{blue}{\frac{1}{2\epsilon}\left(\nabla_3 H^{(\ell), 1/2}\left[\bPhi^{e, (\ell + 1)}_{k}(\epsilon), \btheta^{(\ell)}_\alpha, \bu^{\ell}\right] - \nabla_3 H^{(\ell), 1/2}\left[\bPhi^{e, (\ell + 1)}_{k}(-\epsilon), \btheta^{(\ell)}_\alpha, \bu^{\ell}\right]\right)}
\end{align*}
The red part is done by automatic differentiation and the blue part by RHEL. This underpins the implementation of RHEL chaining we used in our own code.
\end{remark}

\clearpage
\newpage
\subsection{\bfseries Models and algorithms details}
\label{app:sec:models}

\paragraph{Summary.} In this section, we provide details about our models and algorithms. More precisely:
\begin{itemize}
    \item In section~\ref{app:subsec:toymodel}, we first describe our \emph{toy model} used inside Fig.~\ref{fig:heb-and-gradient-comp-in-time} in terms of its Hamiltonian, resulting continuous-time dynamics and RHEL gradient estimators as prescribed by Theorem~\ref{thm:continuous-rhel}.
    \item In section~\ref{app:subsec:architecture}, we provide details about the HSSMs which we used in our experiments. We describe in greater details HSSMs made up of \emph{linear} HRU blocks (section~\ref{app:subsec:linearHRU}). We describe their Hamiltonian, their resulting dynamics, the associated gradient estimators prescribed by RHEL and explain how \emph{parallel scan} can be used on these models, especially when applying RHEL. Similarly, we describe HSSMs made up of \emph{nonlinear} HRU blocks (section~\ref{app:subsec:nonlinearHRU}).
    \item In section~\ref{app:subsec:diff-time-discretization}, we show how the time discretization $\delta$ can itself be trained by absorbing it into the definition of the Hamiltonian function.
    \item Finally, in the light of Remark~\ref{app:rmk:rhel-chaining}, we highlight in section~\ref{app:subsec:echo-ad} how our implementation hybridizes \emph{Automatic Differentiation} (AD) and RHEL using Algorithms~\ref{Custom-AD}--\ref{Custom-AD-helpers}. 
\end{itemize}

\subsubsection{Toy model}
\label{app:subsec:toymodel}
In this section, we provide the gradient estimators for the parameters of the toy model. The toy model is a simple network of six mechanically coupled oscillators. Each oscillator is described by a state $\bm{\Phi^i}=({\phi^i}, {\pi^i})$ where ${\phi^i} \in \mathbb{R}$ is the position of the oscillator and ${\pi^i} \in \mathbb{R}$ is its momentum. It has a mass parameter $m_i$ and spring parameter $k_i$. Any pair of oscillators $(i, j)$ is coupled via the spring parameter $k_{ij}$. The input to the models is a time-varying external force $\bm{u}(t) \in \mathbb{R}$ coupled to oscillator $1$. During the echo passes, the nudging force is modelled by a spring coupling with parameter $\epsilon \in \mathbb{R}$ to an external force $y(t) \in \mathbb{R}$. The Hamiltonian of the system is given by:
\begin{align}
    H[\boldsymbol{\Phi}, \boldsymbol{\theta}, \mathbf{u}] = 
    \sum_i \frac{(\pi^i)^2}{2m_i}
    + \frac{1}{2} \sum_i k_i (\phi^i)^2
    + \frac{1}{2} \sum_i \sum_{j>i} k_{ij} (\phi^j - \phi^i)^2
    + \mathbf{u} \, \phi^1
\end{align}

Which gives the following equations of motion:
\begin{equation*}
\left\{
\begin{array}{l}
    \dot{\phi}^i = \frac{\pi^i}{m_i}  \quad \text{for all } i \in \{1,6\} \\[0.5em]
    \dot{\pi}^i = -k_i \phi^i + \sum_{j,j\neq i} k_{ij} (\phi^j-\phi^i), \quad i \in \{2,3,5,6\} \\[0.5em]
    \dot{\pi}^1 = -k_1 \phi^1 + \sum_{j,j\neq 1} k_{1j} (\phi^j-\phi^1) + \bu \\[0.5em]
    \dot{\pi}^4 = -k_4 \phi^4 + \sum_{j,j\neq 4} k_{4j} (\phi^j-\phi^4) - \delta_{e} \epsilon (\bphi^4 - y)
\end{array}
\right.
\end{equation*}
where $\delta_e$ is the indicator function of the echo pass, it's equal to $1$ during the echo pass and $0$ otherwise.

\paragraph{RHEL gradient estimators of the model parameters.} For the mass $m_i$, we have:

\begin{align}
    \Delta^{RHEL}_{m_i} &= -\frac{1}{2\epsilon}\left( \nabla_{m_i}H\left[\bPhi^e(t, \epsilon), \btheta, \bu\right] - \nabla_{m_i}H\left[\bPhi^e(t, -\epsilon), \btheta, \bu\right] \right) \nonumber \\
    &= \frac{1}{2\epsilon}\left( \frac{\left(\pi^i(t, \epsilon)\right)^2}{2m_i^2} - \frac{\left(\pi^i(t, -\epsilon)\right)^2}{2m_i^2} \right)
\end{align}

For the spring parameters $k_i$, we haves:
\begin{align}
    \Delta^{RHEL}_{k_i} &= -\frac{1}{2\epsilon}\left(\nabla_{k_i}H[\bPhi^e(t, \epsilon), \btheta, \bu] - \nabla_{k_i}H[\bPhi^e(t, -\epsilon), \btheta, \bu]\right) \nonumber \\
    &= -\frac{1}{2\epsilon}(\left(\phi^i(t, \epsilon)\right)^2 - \left(\phi^i(t, -\epsilon)\right)^2)\\
\end{align}

For the coupling parameters $k_{ij}$, we have:
\begin{align}
    \Delta^{RHEL}_{k_{ij}} &= -\frac{1}{2\epsilon}\left(\nabla_{k_{ij}}H[\bPhi^e(t, \epsilon), \btheta, \bu] - \nabla_{k_{ij}}H[\bPhi^e(t, -\epsilon), \btheta, \bu]\right)\nonumber \\
    &= -\frac{1}{2\epsilon}(\left(\phi^j(t, \epsilon)-\phi^i(t, \epsilon)\right)^2 - \left(\phi^j(t, -\epsilon)-\phi^i(t, -\epsilon)\right)^2)
\end{align}

\paragraph{RHEL gradient estimators of the state sensitivities} For the state position sensitivities, we have:
\begin{align}
    \Delta^{RHEL}_{\phi^i} &= -\frac{1}{2\epsilon} \bSigma_x \left(\bPhi^e(t, \epsilon) - \bPhi^e(t, -\epsilon)\right)\nonumber \\
    &= -\frac{1}{2\epsilon} \left(\pi^i(t, \epsilon) - \pi^i(t, -\epsilon)\right)
\end{align} 

For the state momentum sensitivities, we have:
\begin{align}
    \Delta^{RHEL}_{\pi^i} &= -\frac{1}{2\epsilon} \bSigma_x \left(\bPhi^e(t, \epsilon) - \bPhi^e(t, -\epsilon)\right)\nonumber \\
    &= -\frac{1}{2\epsilon} \left(\phi^i(t, \epsilon) - \phi^i(t, -\epsilon)\right)
\end{align}

\subsubsection{HSSM architecture}
\label{app:subsec:architecture}
In this section, we outline the detailed architecture of a full multi-layer HSSM architecture. For the inference, we re-use the same stacking architecture of recurrent blocks and feedforward elements as the LinOSS model \citep{rusch2024oscillatory}. The model starts by encoding an input sequence $\overline{\bm{u}} \in \mathbb{R}^{d_{\bu}\times K}$ via an affine transformation. The transformed sequence then progresses through multiple HSSM blocks, linear (see Appendix \ref{app:subsec:linearHRU}), or nonlinear (see Appendix \ref{app:subsec:nonlinearHRU}), directly followed by nonlinear transformations. These transformations include 
the Gaussian error linear unit (GELU) \citep{hendrycksGaussianErrorLinear2023} and the Gated Linear Unit (GLU) \citep{dauphinLanguageModelingGated2017a}, defined as 
$\text{GLU}(\mathbf{x}) = \text{sigmoid}(\mathbf{W}_1\mathbf{x}) \circ \mathbf{W}_2\mathbf{x}$ 
where $\mathbf{W}_{1,2}$ represent trainable weight matrices, accompanied by a residual 
connection. The sequence output from the final block undergoes a second affine transformation 
to produce the model output.

The full linear and nonlinear HSSM is further presented in Algorithm \ref{alg:hssm-forward}. For clarity, when applying operations to sequence elements denoted with an overline (e.g., $\overline{\bm{u}}$), these operations are implicitly broadcast across the time dimension. Specifically, for any function $f$ applied to $\overline{\bm{u}}$, we have $f(\overline{\bm{u}})_t = f(\bm{u}_t)$ for all time steps $t \in \{1,2,...,K\}$.

For the inference of the linear HSSMs, we keep the same recurrent block as LinOSS with a slight change in the integrator (see \ref{app:subsec:linearHRU}). For the nonlinear HSSM, we replace the recurrent block by a UniCORRN recurrent block \citep{rusch2021unicornn} for which we use the same integrator as for the linear HSSM (see \ref{app:subsec:nonlinearHRU}). 

\begin{algorithm}
\caption{HSSM model}
\label{alg:hssm-forward}
\begin{algorithmic}[1]
\State \textbf{Input:} Input sequence $\overline{\bu}$, model type $\textbf{type} \in \{\text{linear}, \text{nonlinear}\}$
\State \textbf{Output:} HSSM output sequence $\overline{\bu}$
\State $\overline{\bu}^{(0)} \leftarrow \bm{W}_{enc}\overline{\bu} + \bm{b}_{enc}$
\For{$\ell = 1, \ldots, N$}
    \If{$\textbf{type}= \text{linear}$}
        \State $\overline{\bPhi}^{(\ell)}, \_, \_ \leftarrow \Call{LinearHRU}{\overline{\bu}^{(\ell-1)}, 0}$ \Comment{Via parallel scan}
    \Else
        \State $\overline{\bPhi}^{(\ell)}, \_, \_ \leftarrow \Call{NonLinearHRU}{\overline{\bu}^{(\ell-1)}, 0}$ \Comment{Sequentially}
    \EndIf
    \State $\overline{\bm{x}}^{(\ell)} \leftarrow \bm{C}\overline{\bphi}^{(\ell)} + \bm{D}\overline{\bm{u}}^{\ell-1}$
    \State $\overline{\bm{x_g}}^{(\ell)} \leftarrow \text{GELU}(\overline{\bm{x}}^{(\ell)})$
    \State $\overline{\bm{u}}^{(\ell)} \leftarrow \text{GLU}(\overline{\bm{x_g}}^{(\ell)} + \overline{\bm{u}}^{(\ell-1)})$
\EndFor
\State $\overline{\bm{o}} \leftarrow \bm{W}_{dec}\overline{\bm{u}}^{(N)} + \bm{b}_{dec}$
\end{algorithmic}
\end{algorithm}

\subsubsection{Linear HRU Block}
\label{app:subsec:linearHRU}

\paragraph{Hamiltonian of the recurrence.}The linear HRU block is the composition of a nonlinear spatial transformation and a linear recurrent transformation (see Eq. \ref{def:linear-hru}). Here we provide more details about the linear recurrence that is computed with the RHEL gradient estimator.
The linear recurrence is defined by the following Hamiltonian:
\begin{align}
H[\bPhi, \btheta, \bu] &= T[\bpi, \btheta, \bu] + V[\bphi, \btheta, \bu] \nonumber \\
&= \frac{1}{2}\|\bpi\|^2 + \left(\frac{1}{2} \bphi^\top \bA \bphi - \bphi^\top \bB \bu \right)
\end{align}
\paragraph{Dynamics.} The dynamics of the linear HRU block are defined by the following equations:
\begin{align}
\left\{
\begin{array}{l}
    \dot{\bphi} = \bpi \\[0.5em]
    \dot{\bpi} = -\bA \bphi + \bB \bu
\end{array}
\right.
\end{align}

Which, after time-discretization with the integrator defined in \ref{subsec:def-assumptions-discrete}, gives the following equations:
\begin{align}
    \left\{\begin{aligned}
    \bphi_{k+1/3} &= \bphi_k + \frac{\delta}{2} \nabla_{\bpi}T[\bpi_k, \btheta, \bu_k] \\
    &= \bphi_k + \frac{\delta}{2} \bpi_k \\
    \bpi_{k+1/3} &= \bpi_k \\
    \bphi_{k+2/3} &= \bphi_{k+1/3} \\
    \bpi_{k+2/3} &= \bpi_{k+1/3} + \delta \nabla_{\bphi}V[\bphi_{k+1/3}, \btheta, \bu_k] \\
    &= \bpi_{k+1/3} - \delta \bA \bphi_{k+1/3} + \delta \bB \bu_k \\
    \bphi_{k+1} &= \bphi_{k+2/3} + \frac{\delta}{2} \nabla_{\bpi}T[\bpi_{k+2/3}, \btheta, \bu_k] \\
    &= \bphi_{k+2/3} + \frac{\delta}{2} \bpi_{k+2/3} \\
    \bpi_{k+1} &= \bpi_{k+2/3}
    \end{aligned}\right.
    \label{eq:linear-hrudynamics-discrete}
\end{align}

with the initial condition $\bPhi_{-K} = (\bphi_{-K}^\top, \bpi_{-K}^\top)^\top = \bx$.

For the echo passes, the initial condition are $\bPhi^e_{0} = \bPhi_{0}^\star \pm \epsilon \bSigma_x \cdot \Delta_{\bPhi}\ell[\bPhi_{0}, 0]$. The dynamics equations follow below, with modifications from equation \ref{eq:linear-hrudynamics-discrete} highlighted in blue::
\begin{align}
    \left\{\begin{aligned}
    \bphi^e_{k+1/3} &= \bphi^e_k + \frac{\delta}{2} \bpi^e_k \\
    \bpi^e_{k+1/3} &= \bpi^e_k \\ 
    \bphi^e_{k+2/3} &= \bphi^e_{k+1/3} \\
    \bpi^e_{k+2/3} &= \bpi^e_{k+1/3} - \delta \bA \bphi^e_{k+1/3} + \delta \bB \textcolor{blue}{\bu_{-(k+1)}}\\
    \bphi^e_{k+1} &= \bphi^e_{k+2/3} + \frac{\delta}{2} \bpi^e_{k+2/3} \textcolor{blue}{+ \epsilon \nabla_{\bpi}\ell[\bPhi_{-(k+1)}]}\\
    \bpi^e_{k+1} &= \bpi^e_{k+2/3} \textcolor{blue}{+ \epsilon \nabla_{\bphi}\ell[\bPhi_{-(k+1)}]}
    \end{aligned}\right.
    \label{eq:linear-hrudynamics-discrete-echo}
\end{align}
\newpage

\paragraph{RHEL gradient estimators.}
The gradient estimators of the parameters of the linear HRU are:
\begin{align}
   \Delta^{RHEL}_{\bA}(k, \epsilon) &= -\frac{\delta}{2\epsilon}\left(\nabla_{\bA}H^{1/2}[\bPhi^e_k(\epsilon), \btheta, \bu_{-(k+1)}] - \nabla_{\bA}H^{1/2}[\bPhi^e_k(-\epsilon), \btheta, \bu_{-(k+1)}]\right) \nonumber \\
   &= -\frac{\delta}{4\epsilon}\left[\left(\bphi^e_{k+1/3}(\epsilon)^\top \bphi^e_{k+1/3}(\epsilon) + \bphi^e_{k+2/3}(\epsilon)^\top \bphi^e_{k+2/3}(\epsilon)\right) \right.\nonumber\\
   &\qquad\qquad\left. - \left(\bphi^e_{k+1/3}(-\epsilon)^\top \bphi^e_{k+1/3}(-\epsilon) + \bphi^e_{k+2/3}(-\epsilon)^\top \bphi^e_{k+2/3}(-\epsilon)\right)\right] \nonumber\\
   &= -\frac{\delta}{2\epsilon}\left[\left(\bphi^e_{k+1/3}(\epsilon)\right)^\top \left(\bphi^e_{k+1/3}(\epsilon)\right) - \left(\bphi^e_{k+1/3}(-\epsilon)\right)^\top \left(\bphi^e_{k+1/3}(-\epsilon)\right)\right] \\
   \Delta^{RHEL}_{\bB}(k, \epsilon) &= -\frac{\delta}{2\epsilon}\left(\nabla_{\bB}H^{1/2}[\bPhi^e_k(\epsilon), \btheta, \bu_{-(k+1)}] - \nabla_{\bB}H^{1/2}[\bPhi^e_k(-\epsilon), \btheta, \bu_{-(k+1)}]\right) \nonumber \\
   &= \frac{\delta}{4\epsilon}\left[\left(\bphi^e_{k+1/3}(\epsilon) + \bphi^e_{k+2/3}(\epsilon)\right) \bu_{-(k+1)}^\top \right.\nonumber\\
   &\qquad\qquad\left. - \left(\bphi^e_{k+1/3}(-\epsilon) + \bphi^e_{k+2/3}(-\epsilon)\right) \bu_{-(k+1)}^\top\right] \nonumber\\
   &= \frac{\delta}{2\epsilon}\left[\bphi^e_{k+1/3}(\epsilon) \bu_{-(k+1)}^\top - \bphi^e_{k+1/3}(-\epsilon) \bu_{-(k+1)}^\top\right]
\end{align}

The gradient estimator with respect to the input of the recurrent transformation is:
\begin{align}
   \Delta^{RHEL}_{\bu}(k, \epsilon) &= -\frac{\delta}{2\epsilon}\left(\nabla_{\bu}H^{1/2}[\bPhi^e_k(\epsilon), \btheta, \bu_{-(k+1)}] - \nabla_{\bu}H^{1/2}[\bPhi^e_k(-\epsilon), \btheta, \bu_{-(k+1)}]\right) \nonumber \\
   &= \frac{\delta}{4\epsilon}\left[\bB^\top\left(\bphi^e_{k+1/3}(\epsilon) + \bphi^e_{k+2/3}(\epsilon)\right) \right.\nonumber\\
   &\qquad\qquad\left. - \bB^\top\left(\bphi^e_{k+1/3}(-\epsilon) + \bphi^e_{k+2/3}(-\epsilon)\right)\right] \nonumber\\
   &= \frac{\delta}{2\epsilon}\left[\bB^\top\bphi^e_{k+1/3}(\epsilon) - \bB^\top\bphi^e_{k+1/3}(-\epsilon)\right]
\end{align}

\paragraph{Parallel Scan.}

Similarly to the LinOSS model \citep{rusch2024oscillatory}, we can compute the recurrence of Linear HRU with a parallel scan \citep{smith2022simplified} to reduce the computational time. To implement the parallel scan, we need to put the discretized dynamics in a form that can be computed in parallel. For this we vectorize the Equation \ref{eq:linear-hrudynamics-discrete}:
\begin{align*}
    \boldsymbol{\Phi}_{k+1} &= \begin{bmatrix}
        \boldsymbol{I} & \frac{\delta}{2} \boldsymbol{I} \\
        \boldsymbol{0} & \boldsymbol{I}
    \end{bmatrix} \cdot \boldsymbol{\Phi}_{k + 2/3}  \\
    &= \begin{bmatrix}
        \boldsymbol{I} & \frac{\delta}{2} \boldsymbol{I} \\
        \boldsymbol{0} & \boldsymbol{I}
    \end{bmatrix} \cdot \left( \begin{bmatrix}
        \boldsymbol{I} & \boldsymbol{0} \\
        -\delta \boldsymbol{A} & \boldsymbol{I} 
    \end{bmatrix} \cdot \boldsymbol{\Phi}_{k+1/3} + \begin{bmatrix}
        \boldsymbol{0} \\
        \delta \boldsymbol{B} \cdot \boldsymbol{u}_k
    \end{bmatrix}\right) \\
    &= \begin{bmatrix}
        \boldsymbol{I} & \frac{\delta}{2} \boldsymbol{I} \\
        \boldsymbol{0} & \boldsymbol{I}
    \end{bmatrix} \cdot \begin{bmatrix}
        \boldsymbol{I} & \boldsymbol{0} \\
        -\delta \boldsymbol{A} & \boldsymbol{I} 
    \end{bmatrix} \cdot \begin{bmatrix}
        \boldsymbol{I} & \frac{\delta}{2} \boldsymbol{I} \\
        \boldsymbol{0} & \boldsymbol{I}
    \end{bmatrix} \cdot \boldsymbol{\Phi}_k + \begin{bmatrix}
        \frac{\delta^2}{2} \boldsymbol{B} \cdot \boldsymbol{u}_k \\
        \delta \boldsymbol{B} \cdot \boldsymbol{u}_k
    \end{bmatrix}
\end{align*}
which gives us the discrete dynamics in matrix form:
\begin{align}
    \bPhi_{k+1} &= M \bPhi_{k} + F_k
\end{align}

with:
\begin{align}
M = \begin{bmatrix} 
    I - \frac{\delta^2}{2} \boldsymbol{A} & \delta [I - \frac{\delta^2}{4}\boldsymbol{A}] \\[0.5em]
    -\delta \boldsymbol{A} & I - \frac{\delta^2}{2} \boldsymbol{A} 
\end{bmatrix}, \quad 
F_k = \begin{bmatrix} 
    \frac{\delta^2}{2} \boldsymbol{B} \cdot \boldsymbol{u}_k \\[0.5em] 
    \delta \boldsymbol{B} \cdot \boldsymbol{u}_k 
\end{bmatrix}
\end{align}

To compute the echo passes $\bPhi^e_k, k \in \{1, \cdots, K\}$, we only need to adapt $F_k$ to get for the positive nudging $+\epsilon$:

\begin{align}
    F_k^e = \begin{bmatrix} 
        \frac{\delta^2}{2} \boldsymbol{B} \cdot \boldsymbol{u}_{-(k+1)} + \epsilon \nabla_{\bpi}\ell[\bPhi_{-(k+1)}] \\[0.5em]
        \delta \boldsymbol{B} \cdot \boldsymbol{u}_{-(k+1)} + \epsilon \nabla_{\bphi}\ell[\bPhi_{-(k+1)}]
    \end{bmatrix}
\end{align}

\subsubsection{Nonlinear HRU}
\label{app:subsec:nonlinearHRU}

\paragraph{Hamiltonian of the recurrence.} The nonlinear HRU block is the composition of a nonlinear spatial transformation and a nonlinear recurrent transformation (see Eq. \ref{def:nonlinear-hru}). The Hamiltonian of the nonlinear HRU block is defined by the following equations:
\begin{align}
    H[\bPhi, \btheta, \bu] &= T[\bpi, \btheta, \bu] + V[\bphi, \btheta, \bu] \nonumber \\
    &= \frac{1}{2}\|\bpi\|^2 + \frac{\alpha}{2} \|\bphi\|^2 + \left(\bA^{-\top}\cdot \blog\left(\bcosh\left( \bA \cdot \bphi +\bB \cdot \bu + \bm{b}\right)\right)\right)
\end{align}

\paragraph{Dynamics.} The dynamics of the nonlinear HRU block are defined by the following equations:
\begin{align}
    \left\{
    \begin{array}{l}
        \dot{\bphi} = \bpi \\[0.5em]
        \dot{\bpi} = -\left(\bm{\tanh} \left(\bA \bphi + \bB \bu + b\right) + \alpha \bphi\right) \\
    \end{array}
    \right.
\end{align}
Which, after time-discretization with the integrator defined
in \ref{subsec:def-assumptions-discrete}, gives the following equations:
\begin{align}
    \left\{\begin{aligned}
    \bphi_{k+1/3} &= \bphi_k + \frac{\delta}{2} \nabla_{\bpi}T[\bpi_k, \btheta, \bu_k] \\
    &= \bphi_k + \frac{\delta}{2} \bpi_k \\
    \bpi_{k+1/3} &= \bpi_k \\
    \bphi_{k+2/3} &= \bphi_{k+1/3} \\
    \bpi_{k+2/3} &= \bpi_{k+1/3} + \delta \nabla_{\bphi}V[\bphi_{k+1/3}, \btheta, \bu_k] \\
    &= \bpi_{k+1/3} - \delta(\bm{\tanh} \left(\bA \bphi_{k+1/3} + \bB \bu_k + \bm{b}\right) + \alpha \bphi_{k+1/3}) \\
    \bphi_{k+1} &= \bphi_{k+2/3} + \frac{\delta}{2} \nabla_{\bpi}T[\bpi_{k+2/3}, \btheta, \bu_k] \\
    &= \bphi_{k+2/3} + \frac{\delta}{2} \bpi_{k+2/3} \\
    \bpi_{k+1} &= \bpi_{k+2/3}
    \end{aligned}\right.
\end{align}







\subsubsection{Differentiating the time-discretization}
\label{app:subsec:diff-time-discretization}

In training HRUs, one can also train the multidimensional time-discretization $\delta \in \mathbb{R}^{d_{\bPhi} \times d_{\bPhi}}$ that is assumed to be diagonal. For this, it suffices to reparametrize the integrator and the Hamiltonian, so that the time discretization becomes a parameter of the Hamiltonian. Hence the HRU equation \ref{def:hru-forward} becomes:
\begin{align}
   \bPhi_{k+1} &= \mathcal{M}_{\hat{H},1}[\bPhi_{k}, \btheta, \bu_k, \delta] \quad \forall k =-K \cdots -1,
\end{align}
with $\hat{H}$ is a reparametrization of the Hamiltonian $H$ such that $\mathcal{M}_{\hat{H},1}[\bPhi_{k}, \btheta, \bu_k, \delta] = \mathcal{M}_{H,\delta}[\bPhi_{k}, \btheta, \bu_k]$.

For instance, for the linear HRU, we can reparametrize the Hamiltonian as:
\begin{align}
   \hat{H}[\bPhi, \btheta, \bu, \delta] 
   &= \frac{1}{2}\bpi^\top \delta \bpi + \left(\frac{1}{2} \bphi^\top (\delta\bA) \bphi - \bphi^\top (\delta\bB) \bu\right)
\end{align}

And for the nonlinear HRU, we can reparametrize the Hamiltonian as:
\begin{align}
   \hat{H}[\bPhi, \btheta, \bu, \delta] 
   &= \frac{1}{2}\bpi^\top \delta \bpi + \frac{\alpha}{2} \bphi^\top \delta \bphi + \frac{1}{2} \left(\delta\bA^{-\top}\cdot \blog\left(\bcosh\left( \bA \cdot \bphi +\bB \cdot \bu + \bm{b}\right)\right)\right)\\
\end{align}

Note that to match with previous implementations of the nonlinear HRU \citep{rusch2021unicornn}, we used the following parametrization for the discretization step:
\begin{align}
    \hat{H}[\bPhi, \btheta, \bu, \delta] 
    &= \frac{1}{2}\bpi^\top \sigma(\delta) \bpi + \frac{\alpha}{2} \bphi^\top \sigma(\delta) \bphi + \frac{1}{2} \left(\sigma(\delta)\bA^{-\top}\cdot \blog\left(\bcosh\left( \bA \cdot \bphi +\bB \cdot \bu + \bm{b}\right)\right)\right)
\end{align}

where $\sigma(\delta)$ is a diagonal matrix with $\sigma(\delta)_{ii} = 0.5 + 0.5 \tanh(\delta_{ii}/2)$.

\subsubsection{Echo passes with automatic differentiation}
\label{app:subsec:echo-ad}

\begin{algorithm}[t]
  \caption{Custom Reverse Mode Automatic Differentiation for an arbitrary HRU}
  \begin{algorithmic}[1]
    \State @custom\_autodiff
    \Function{HRU}{$\bm{\theta}_{HRU},\overline{\bm{u}} $}
    \State $\overline{\bPhi} \leftarrow \Call{HRU-HELPER}{\bm{0}, \bm{\theta}_{HRU}, \overline{\bm{u}}, \overline{\bm{0}}}$ 
    \State \textbf{return} $\overline{\bPhi}$
    \EndFunction

    \Statex
    \Function{ForwardHRU}{$\bm{\theta}_{HRU},\overline{\bm{u}} $}
    \State $\overline{\bPhi} \leftarrow \Call{HRU-HELPER}{\bm{0}, \bm{\theta}_{HRU}, \overline{\bm{u}}, \overline{\bm{0}}}$ 
    \State \textbf{return} $\overline{\bPhi}, \bPhi_K$
    \EndFunction
    \Statex
    \Function{BackwardHRU}{$\bm{r}$,$\overline{\bm{g}}$, input\_forward}
    \State $\bm{\theta}_{HRU},\overline{\bm{u}} \leftarrow $ input\_forward
    \State $\tilde{\bm{u}} \leftarrow \Call{Reverse}{\overline{\bm{u}}}$ \Comment{Reverse the input sequence}
    \State $\bPhi_K \leftarrow \bm{r}$
    \State $\bPhi^e_{0, \pm \epsilon} \leftarrow  \bSigma_x\bPhi_K + \epsilon \bSigma_x \bm{g}_K$
    \State $\overline{\bPhi}^e_{\pm \epsilon} \leftarrow \Call{HRU-HELPER}{\bPhi^e_{0, \pm \epsilon}, \bm{\theta}_{HRU}, \overline{\bm{u}}, \overline{\bm{g}}}$ 
    \State $\Delta_{\bm{\theta}_{HRU}}, \Delta_{\overline{\bm{u}}} \leftarrow \Call{LearningHRU}{\overline{\bPhi}^e_{\pm \epsilon}}$
    \State \textbf{return} $\Delta_{\bm{\theta}_{HRU}}, \Delta_{\overline{\bm{u}}}$ \
    \Comment{Loss gradient with respect to the input of $\Call{ForwardHRU}{\cdot, \cdot}$} \
    \EndFunction
    \Statex
    \State HRU.define\_custom\_autodiff(ForwardHRU, BackwardHRU)
  \end{algorithmic}
  \label{Custom-AD}
\end{algorithm}

\begin{algorithm}[t]
  \caption{Helper functions for custom Automatic Differentiation}
  \begin{algorithmic}[1]
    \Function{HRU-HELPER}{$\bPhi_0,\bm{\theta}_{HRU}, \overline{\bm{u}}$, $\overline{\bm{n}}$}
    \Comment{See Dynamics. in App. \ref{app:subsec:linearHRU} and \ref{app:subsec:nonlinearHRU} for concrete examples}
    \State \textbf{Input:} Initial State $\bPhi_0$, Parameters of the recurrence $\bm{\theta}_{HRU}$,  Inputs $\overline{\bm{u}}$, Nudging $\overline{\bm{n}}$
    \State \textbf{Output:} State sequence $\overline{\bPhi}$
    \EndFunction
    \Statex
    \Function{LearningHRU}{$\overline{\bPhi}^e_{\pm \epsilon}$, $\overline{\bm{u}}$}
    \Comment{See RHEL gradient estimator in App. \ref{app:subsec:linearHRU} for a concrete example}
    \State \textbf{Input:} Echo passes $\overline{\bPhi}^e_{\pm \epsilon}$, input sequence time-reversed $\tilde{\bm{u}}$
    \State \textbf{Output:} Loss gradient w.r.t to the input of $\Call{ForwardHRU}{\cdot, \cdot}$: $\Delta_{\bm{\theta}_{HRU}}, \Delta_{\overline{\bm{u}}}$
    \EndFunction
  \end{algorithmic}
  \label{Custom-AD-helpers}
\end{algorithm}

As mentioned in Remark~\ref{app:rmk:rhel-chaining}, learning in HSSMs with RHEL involves chaining RHEL gradient estimators with Automatic Differentiation (AD). This design makes RHEL implementation readily compatible with modern automatic differentiation frameworks such as PyTorch or JAX. These libraries provide the capability to create custom functions that, when invoked within a computational graph, are automatically differentiated. Algorithm~\ref{Custom-AD} demonstrates how to implement RHEL/AD chaining for the recurrent component of an HRU block.

To implement a custom autodiff function \texttt{HRU}, we must define two additional functions required by AD: \texttt{ForwardHRU} and \texttt{BackwardHRU}. The \texttt{ForwardHRU} function is invoked when the \texttt{HRU} function is called within a computational graph to be differentiated. The \texttt{ForwardHRU} function returns two elements: its output for computing the remainder of the computational graph $\bar{\bPhi}$, and a \textit{residual} to be stored for the backward pass. For the HRU, the residual consists only of the final state $\bPhi_K$ from the forward pass. The \texttt{BackwardHRU} function is called during graph backpropagation. It receives as input the loss gradient from the upstream portion of the computational graph $\bar{g}$, the input from the forward pass $\bar{u}$, and the residual $\bPhi_K$. The \texttt{BackwardHRU} function computes the gradient of the loss with respect to the HRU block parameters, which is subsequently used to update the HRU parameters. It also computes the gradient of the loss with respect to the forward pass input $\bar{u}$, which is then used to propagate the loss gradient backward through the computational graph.

These three functions are then registered with the autodiff library using the \texttt{HRU.define\_custom\_autodiff} function, enabling the library to automatically differentiate the \texttt{HRU} function when it is called within a computational graph.

The three functions utilize two helper functions: \texttt{HRU-HELPER} and \texttt{LearningHRU}. The \texttt{HRU-HELPER} function implements the HRU dynamics and is employed in both the forward and backward passes. The \texttt{LearningHRU} function implements the RHEL gradient estimator and is used in the backward pass to compute the gradient of the loss with respect to the HRU block parameters.s

\newpage
\subsection{\bfseries Experimental details}
\label{app:sec:experimental_details}

\paragraph{Summary.} This last section provides additional experimental details. More precisely:
\begin{itemize}
    \item We provide details about the datasets at use, the devices we used, as well as detailed hyperparameters.
    \item Importantly, we detail {\bfseries how RHEL can be subject to numerical \emph{underflow} and how we mitigated this issue} -- see Fig.~\ref{fig:linoss_gradient_scaling}--\ref{fig:unicornn_gradient_scaling} for details.
\end{itemize}

\subsubsection{Datasets}
This series of tasks was recently introduced  as a subset of the University of East Anglia (UEA) datasets with the longest sequences for increased difficulty and recently used to benchmark the linear HSSM previously introduced \citep{rusch2024oscillatory}

The classification datasets are drawn from a recently introduced benchmark \citep{walker2024log} that selects a subset of the University of East Anglia (UEA) datasets \citep{bagnall2018uea}, specifically choosing those with the longest sequences to increase difficulty, and which has been recently employed to evaluate the linear HSSM model \citep{rusch2024oscillatory}. These datasets include EigenWorms (17,984 sequence length, 5 classes), SelfRegulationSCP1 (896 length, 2 classes), SelfRegulationSCP2 (1,152 length, 2 classes), EthanolConcentration (1,751 length, 4 classes), Heartbeat (405 length, 2 classes), and MotorImagery (3,000 length, 2 classes).

Additionally, we evaluate our HSSMs on the PPG-DaLiA dataset \citep{reiss2019deep}, a multivariate time series regression dataset designed for heart rate prediction using data collected from a wrist-worn device. It includes recordings from fifteen individuals, each with approximately 150 minutes of data sampled at a maximum rate of 128 Hz. The dataset consists of six channels: blood volume pulse, electrodermal activity, body temperature, and three-axis acceleration. After splitting the data, a sliding window of length 49920 and step size 4992 is applied, representing a challenging very long-range interaction task.
\label{app:subsec:datasets}
\subsubsection{Simulation details and ressource consumption}

\paragraph{Simulation details}.The code to run the experiments is implemented using the JAX auto-differentiation framework \citep{jax2018github}. All experiments were run on Nvidia V100 GPUs, except for the PPG experiments, which were run on Nvidia Tesla A100 GPUs due to larger memory demands.

\paragraph{Ressource consumption.} 
For the linear HSSM, training time is approximately 30 minutes for both classification (SCP1 dataset) and regression (PPG dataset) tasks across all training algorithms (RHEL and BPPT). The nonlinear HSSM requires approximately 1.5 hours for classification tasks with both algorithms. For the regression tasks, both BPTT and RHEL require approximately between 10 and 20 hours. 


RAM consumption remained consistent across all conditions (learning algorithms, datasets, and models), reaching 30GB for regression tasks and 24.15GB for classification (SCP1 dataset). This represents 75\% pre-allocation of GPU memory by JAX. While the regression task exceeded the V100's 32GB memory capacity, it fit within the A100's 40GB RAM.
\subsubsection{Hyperparameters}
\label{app:subsec:hparams}
We adopted the hyperparameters of \citep{rusch2024oscillatory} without modification, as their experiments utilized the IMEX integrator, which, like our approach, derives from the LeapFrog integrator. The hyperparameters are: learning rate (lr), number of layers (\#blocks), number of hidden neurons (hidden dim), state-space dimension (state dim), and whether the time dimension is sent as input (include time). These hyperparameters were found by grid search and are presented in Table~\ref{app:tab:hyperpara}. 

We note that the implementation of the linear HSSM models is based on the code of \citep{rusch2024oscillatory} and uses complex numbers for implementation, which corresponds to doubling the state-space dimension mentioned in Table~\ref{app:tab:hyperpara}. We found that this dimensional doubling was necessary to recover the performance reported in the original paper, so we maintained this approach. We did not need to apply the same doubling for the nonlinear HSSM.covering the performance reported in the paper, so we kept 

For the $RHEL$ algorithm we have two additional hyperparameters: the nudging strength $\epsilon$ and the scaling factor $\gamma$ (see Appx.~\ref{app:subsec:hparams-alg}). The nudging strength $\epsilon$ was set to $10^{-1}$ without prior tunning. For the scaling factor $\gamma$ we did a grid search over the values $\{10^0, 10^4, 10^8, 10^12\}$ for the regression task (PPG-DaLiA) and found that the best performing parameter was $10^4$. For the classification tasks, we did a grid search over the values $\{10^0, 10^1, 10^2, 10^4\}$ and found that the best performing scaling was $10^4$ for the averaged score. 

\begin{table}
\caption{Hyperparameters for the linear and nonlinear HSSM model}
\centering
\begin{tabular}{lccccc}
\hline
Dataset & lr & hidden dim & state dim & \#blocks & include time \\
\hline
Worms & 0.0001 & 64 & 16 & 2 & False \\
SCP1 & 0.0001 & 64 & 256 & 6 & False \\
SCP2 & 0.00001 & 64 & 256 & 6 & True \\
Ethanol & 0.00001 & 16 & 256 & 4 & False \\
Heartbeat & 0.00001 & 64 & 16 & 2 & True \\
Motor & 0.0001 & 16 & 256 & 6 & True \\
PPG & 0.0001 & 64 & 16 & 2 & True \\
\hline
\end{tabular}
\label{app:tab:hyperpara}
\end{table}

We employ different initialization schemes for linear and nonlinear HSSM variants. For the linear HSSM model, we follow the same initialization scheme as \citep{rusch2024oscillatory}:
\begin{itemize}
    \item $\bA \sim \text{Uniform}(0, 1)$
    \item $\bB \sim \text{Uniform}\left(-\frac{1}{\text{hidden\_dim}}, \frac{1}{\text{hidden\_dim}}\right)$
    \item $\bC \sim \text{Uniform}\left(-\frac{1}{\text{state\_dim}}, \frac{1}{\text{state\_dim}}\right)$
    \item $\bm{D} \sim \mathcal{N}(0, 1)$
    \item $\bdelta \sim \text{Uniform}(0, 1)$
\end{itemize}

For the nonlinear HSSM model, we adopt the initialization scheme from \citep{rusch2021unicornn}:
\begin{itemize}
    \item $\bA \sim \text{Uniform}(0.5, 1)$
    \item $\bB \sim \text{Uniform}\left(-\frac{1}{\text{hidden\_dim}}, \frac{1}{\text{hidden\_dim}}\right)$
    \item $\bC = 0$
    \item $\bm{D} \sim \mathcal{N}(0, 1)$
    \item $\bm{b} \sim \mathcal{N}(0, 1)$
    \item $\alpha \sim \text{Uniform}(0.1, 1.0)$
    \item $\bdelta \sim \text{Uniform}(-1, 1)$
\end{itemize}

\subsubsection{Gradient re-scaling for dealing with numerical instabilities}
\label{app:subsec:hparams-alg}
\paragraph{Analysis}
From the theoretical results on RHEL, the nudging strength $\epsilon$ should be 
chosen as small as possible to accurately estimate the gradient of the loss function. 
However, naive numerical implementation can encounter underflow issues. In RHEL, 
gradient information is encoded in small perturbations to the state $\bPhi$ that 
generate the echo trajectory $\bPhi^e$. This presents numerical challenges when 
the perturbations and state values differ by several orders of magnitude. 

In practice, when using finite precision representations (such as floating-point 
numbers), the perturbations may be lost due to discretization error, where small 
values cannot be accurately represented or distinguished from zero in finite 
precision arithmetic. 

To address this numerical challenge, we employ a simple gradient rescaling method. 
We multiplicatively scale the output loss $\ell[\cdot, \cdot]$ by a constant 
$\gamma > 1$ during the echo dynamics computation. This amplification ensures that 
the perturbation-induced changes in the loss remain within the representable range 
of floating-point arithmetic. Subsequently, we divide the RHEL parameter gradient 
estimate $\left(\Delta_{\btheta}^{\rm RHEL}(k, \epsilon)\right)$ by the same 
scaling factor $\gamma$ to recover the unbiased gradient.

We now demonstrate the effectiveness of this gradient rescaling method on both 
Linear and Nonlinear HSSM. To evaluate the effect of gradient scaling, we compare 
the gradients of RHEL and BPTT. Given a HSSM model, we randomly sample a tuple 
$(\bm{u}_i, \bm{y}_i)\sim \mathcal{D}$ from the SCP1 dataset (see 
App.~\ref{app:subsec:datasets}) and compute the gradients of the loss function 
with respect to the parameters of the recurrence of the HRUs in 
Fig.~\ref{fig:linoss_gradient_scaling}A for the linear HSSM and 
Fig.~\ref{fig:unicornn_gradient_scaling}A for the nonlinear HSSM. 

To gain a more fine-grained understanding, we also compute the gradients with 
respect to the inputs of the third layer of the HSSM (see 
Eq.~\ref{def:linear-hru} and \ref{def:nonlinear-hru}) in 
Fig.~\ref{fig:linoss_gradient_scaling}B and 
Fig.~\ref{fig:unicornn_gradient_scaling}B. Compared to the parameter gradients, 
these input gradients are not time-averaged and hence reveal more detail about 
the underflow issues.

\paragraph{Gradient rescaling recovers the underflow issues.} 
As a general pattern across both architectures, we observe that the unscaled 
($\gamma=10^0$) parameter and input gradients tend to be biased: they have a 
norm ratio above $1.0$ and cosine similarity below $1.0$. We also note that 
for both the linear and nonlinear cases, there exists a scaling factor 
($10^6$ for linear and $10^4$ for nonlinear) that recovers a nearly perfect 
match between RHEL and BPTT. 

Additionally, for the input gradients that are not time-averaged (first column 
of Fig.~\ref{fig:linoss_gradient_scaling}B and 
Fig.~\ref{fig:unicornn_gradient_scaling}B), there is a large amount of noise 
in the unscaled gradients, which is eliminated for the best-performing scaling 
factor ($10^6$ for linear and $10^4$ for nonlinear). We conjecture that this 
noise is due to the underflow effects described above.

\paragraph{Linear HSSM: scaling improves gradient matching} 
For the linear analysis, we observe a general pattern: the more we scale the 
RHEL gradient, the better it matches BPTT, both for input gradients and 
parameter gradients (Fig.~\ref{fig:linoss_gradient_scaling}A and B).

\paragraph{Nonlinear HSSM: optimal scaling balances underflow and linearization errors}
For the nonlinear analysis, we also observe that scaling the RHEL gradient 
improves the match between RHEL and BPTT gradients. However, for higher values 
of the scaling factor ($\gamma=10^6$), we observe that both input and parameter 
gradients show a drop in matching quality. We conjecture that this is due to 
linearization errors. If the loss gradient is too large, the nudging it drives 
will be large, and the finite difference method used for the learning rule 
will no longer be valid.

\begin{figure}
    \centering
    \includegraphics[width=\linewidth]{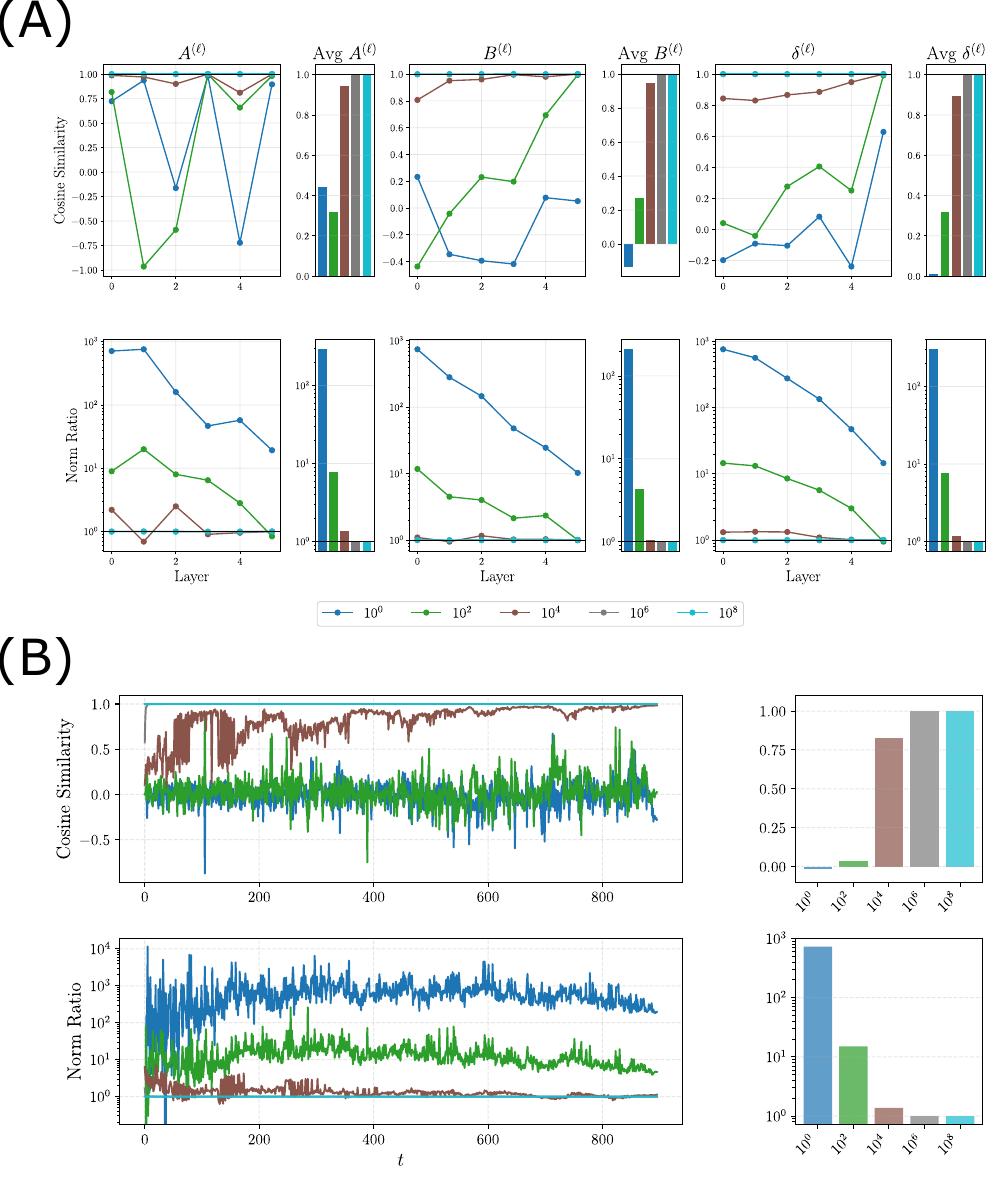}
    \caption{{\bfseries (A): Parameters gradients comparison between RHEL and BPTT for HSSM.} Given some $(\bm{u}_i, \bm{y}_i)\sim \mathcal{D}$, we perform BPTT and RHEL on six blocks-deep linear HSSMs for different scaling factor $\gamma$ of RHEL (different colors). We measure, per layer (line plot) and when averaged across layers (bar-plot), the cosine similarity (top panels) and norm ratio (bottom panels) 
    between RHEL and BPTT parameters gradients of a linear HSSM (Eq.~(\ref{def:linear-hru})). {\bfseries (B): Inputs gradient comparison between RHEL and BPTT for HSSM.} Same setting as (A) but we focus on the gradients with respect to the inputs of the third layer ($\bm{u}^{(3)}$, see Eq.~\ref{def:linear-hru}).  We measure, per time steps (line plot) and when averaged across time (bar-plot), the cosine similarity (top panels) and norm ratio (bottom panels) 
    between RHEL and BPTT inputs gradients.}
    \label{fig:linoss_gradient_scaling}
\end{figure}

\begin{figure}
    \centering
    \includegraphics[width=\linewidth]{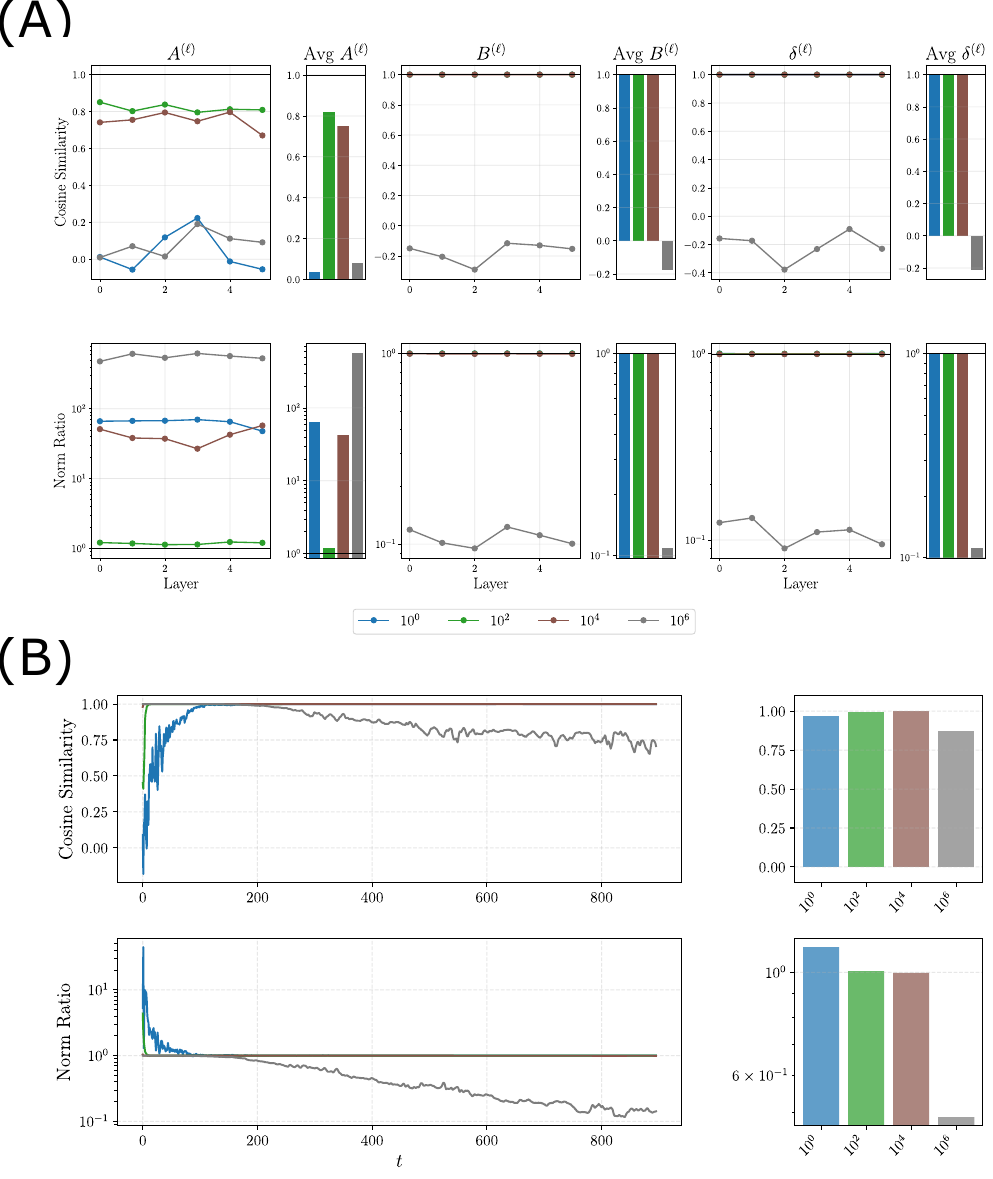}
    \caption{{\bfseries (A): Parameters gradients comparison between RHEL and BPTT for HSSM.} Given some $(\bm{u}_i, \bm{y}_i)\sim \mathcal{D}$, we perform BPTT and RHEL on six blocks-deep nonlinear HSSMs for different scaling factor $\gamma$ of RHEL (different colors). We measure, per layer (line plot) and when averaged across layers (bar-plot), the cosine similarity (top panels) and norm ratio (bottom panels) 
    between RHEL and BPTT parameters gradients of a nonlinear HSSM (Eq.~(\ref{def:nonlinear-hru})). {\bfseries (B): Inputs gradient comparison between RHEL and BPTT for HSSM.} Same setting as (A) but we focus on the gradients with respect to the inputs of the third layer ($\bm{u}^{(3)}$, see Eq.~\ref{def:nonlinear-hru}).  We measure, per time steps (line plot) and when averaged across time (bar-plot), the cosine similarity (top panels) and norm ratio (bottom panels) 
    between RHEL and BPTT inputs gradients. For (A) and (B), $\gamma=10^8$ was also tested but produced numerical instabilities leading to NaN values and is therefore omitted from the results.}
    \label{fig:unicornn_gradient_scaling}
\end{figure}

\paragraph{Solution Implemented.} For the experiments, we conducted a grid search over the scaling parameter $\gamma$. In our initial implementation, we applied gradient scaling without the corresponding downscaling step, effectively amplifying the gradient magnitude throughout training. This provided valuable insights into the sensitivity of RHEL dynamics to gradient scaling and improved performance. The complete rescaling procedure (including both upscaling and downscaling) will be implemented in the camera-ready version to provide a more comprehensive analysis of the numerical stabilization approach.

\end{appendices}

\end{document}